\documentclass{article}


\usepackage[preprint]{neurips_2022}



\usepackage[utf8]{inputenc} 
\usepackage[T1]{fontenc}    
\usepackage{graphicx}%
\usepackage{multirow}%
\usepackage{amsmath,amssymb,amsfonts}%
\usepackage{amsthm}%
\usepackage{mathrsfs}%
\usepackage[title]{appendix}%
\usepackage{xcolor}%
\usepackage{textcomp}%
\usepackage{manyfoot}%
\usepackage{booktabs}%
\usepackage{algorithm}%
\usepackage{algorithmicx}%
\usepackage{algpseudocode}%
\usepackage{listings}%

\usepackage{amssymb}

\usepackage{microtype}
\usepackage{graphicx}
\usepackage{subfigure}
\usepackage{booktabs} 
\usepackage{hyperref}

\usepackage{xcolor}         

\newcommand{\Real}[0]{\mathbb{R}}
\usepackage{amsmath}
\usepackage{amssymb}
\usepackage{graphicx}

\usepackage{algorithm}
\usepackage{algpseudocode}

\algnewcommand\algorithmicinput{\textbf{Input:}}
\algnewcommand\algorithmicoutput{\textbf{Output:}}
\algnewcommand\Input{\item[\algorithmicinput]}%
\algnewcommand\Output{\item[\algorithmicoutput]}%

\algdef{SE}[SUBALG]{Indent}{EndIndent}{}{\algorithmicend\ }%
\algtext*{Indent}
\algtext*{EndIndent}



\newtheorem{prop}{\protect\propositionname}
\usepackage{graphicx}

\usepackage{color, colortbl}
\definecolor{Gray}{gray}{0.8}

\providecommand{\assumptionname}{Assumption}
\providecommand{\lemmaname}{Lemma}
\providecommand{\propositionname}{Proposition}
\providecommand{\theoremname}{Theorem}


\newcommand{\Exp}{\mathbb{E}}

\newcommand{\bg}{\mathbf{g}}

\newtheorem{corollary}{Corollary}[section] 


%

%
%

%

\newcommand{\myscaleplottitle}{0.207}
\newcommand{\myscaleplotsubplot}{0.20}

\title{Personalized Federated Learning with Exact Stochastic Gradient Descent}

%

\author{
    Sotirios Nikoloutsopoulos$^{1}$\thanks{Corresponding author}
        \quad
    Iordanis Koutsopoulos$^{1}$
      \quad
      Michalis~K.~Titsias$^{2}$ \\
          $^{1}$Athens University of Economics and Business
          \\ $^{2}$Google DeepMind\\
        \texttt{\{snikolou,jordan\}@aueb.gr}\\
        \texttt{mtitsias@google.com}  
}

\begin{document}
\maketitle

\begin{abstract}
We propose 
a Stochastic Gradient Descent (SGD)-type algorithm for Personalized Federated Learning
which can be particularly attractive for mobile energy-limited regimes due to its low per-client computational cost.
The model to be trained includes a set of common weights for all clients, and a set of personalized weights that are specific to each client. At each optimization round, randomly selected clients perform multiple full gradient-descent updates over their client-specific weights towards optimizing the loss function on their own datasets,
without updating the common weights.
This procedure is energy-efficient 
since it has low computational cost per client.
At the final update of each round, each client computes the joint gradient over both the client-specific and the common weights and returns the gradient of common weights to the server, which allows to perform  
 an exact SGD step over the full set of weights in a distributed manner. 
 For the overall optimization scheme, we rigorously prove convergence, even in non-convex settings such as those encountered when training neural networks, 
 with a rate of $\mathcal{O} \left (\frac{1}{\sqrt{T}} \right )$  with respect to communication rounds $T$. In practice, PFLEGO exhibits substantially lower per-round wall-clock time, used as a proxy for energy. Our 
 theoretical guarantees translate to superior performance in practice against baselines such as  FedAvg and FedPer, as evaluated in several multi-class classification datasets, in particular, Omniglot, CIFAR-10, MNIST, Fashion-MNIST, and EMNIST.
\end{abstract}

\textbf{Keywords:} Federated Learning, Distributed Learning, Personalization



\maketitle

\section{Introduction}\label{sec1}

{

\label{subsec:distributed_learning}

Federated Learning (FL) is a form of distributed learning \cite{dean2012,xing2016,marozzo2022} and has emerged due to increased focus on privacy. The first proposed FL algorithm was FedAvg \cite{mcmahan2017communicationefficient} followed by many others in the recent literature \cite{mcmahan2017communicationefficient,li2020federated,arivazhagan2019federated,singhal2021federated,collins2021exploiting,fedbabu,ditto}. The main concept in FL is to allow multiple clients to collaboratively train a shared model without sharing their raw data. Thus, unlike conventional distributed algorithms, FL algorithms are decentralized\footnote{Decentralized settings refer to systems where data and computations are distributed across multiple independent clients rather than centralized on a single server, enhancing privacy and efficiency by keeping data local.} distributed algorithms that preserve privacy constraints \cite{alekh2018eugeneraldataprotection}. 
 
 
When the tasks are drawn from different data distributions, relying on a single shared model in FL often leads to poor performance \cite{li2020federated}, as the structure of a single model cannot generalize to all tasks. To address this issue, we extend Federated Learning to Personalized FL, in which each client additionally trains a set of parameters specifically tailored to their own task. 

In non-IID\footnote{Non Independent, Identically Distributed} settings, clients’ gradient updates derived from their locally trained models can conflict with one another, leading to a tug-of-war dynamic \cite{hadsell2020embracing}, which can be viewed as a war of gradients. This occurs when the updates from one client, based on its specific data distribution, push the global model in one direction, while updates from another client, with a different or even opposing data distribution, pull the model in the opposite direction. As a result, the progress made by one client can be undone by another, reducing the overall performance and stability of the shared model. 
Personalized FL addresses this issue to ensure that models are flexible enough to adapt to individual client needs without being compromised by conflicting gradients from others.

Personalized FL has found applications in various domains. In healthcare \cite{Wuetal20,chen2021fedhealth} it enables collaborative training of models across hospitals without compromising patient data privacy. In finance, it allows institutions to build fraud detection systems without exposing sensitive transaction data \cite{aurna}. Mobile device personalization, such as keyboard suggestions \cite{hard2019federated} and voice recognition \cite{guliani}, also benefits from FL by training models directly on user devices. The advent of 5G technology further enhances these applications by providing high-speed and low-latency connections, which facilitate efficient communication and rapid model aggregation. This is particularly beneficial in scenarios requiring real-time updates and synchronization across multiple clients.


}



In this work, we propose a novel approach for personalized FL, which we call Personalized Federated Learning with Exact Gradient-based Optimization (PFLEGO) \footnote{Our source code is available at \href{https://github.com/sotirisnik/PFLEGO}{https://github.com/sotirisnik/PFLEGO}.}. {While previous works~\cite{arivazhagan2019federated, singhal2021federated} have also considered FL frameworks with both global (shared) and personalized (client-specific) weights, our approach differs significantly in the way the federated system is trained.}
Its main advantage is that the updates are engineered so that the overall algorithm achieves exact stochastic gradient descent (SGD) \cite{robbinsmonro51, bottou2010large} minimization of the training loss function, which is the \textit{exact equivalent} of training with all data concentrated in one place. At each optimization round, our algorithm performs the following steps: \textit{(a)} The server sends to a randomly selected subset of clients the updated weights corresponding to the common layers; \textit{(b)} Given the common weights, each client performs a number of local gradient descent updates of its client-specific weights on its own local dataset, towards optimizing its loss function, without updating the common weights; \textit{(c)} Contrary to the other methods that we compare against, in our approach each client at its \textit{final} optimization step for that round computes the joint gradient over both the client-specific and the common weights and sends to the server the \textit{gradient} of the common weights, rather than the raw weights themselves; \textit{(d)} Then, the  
server aggregates these gradients across clients and uses them to update the common weights. The process continues until convergence. The sequence of the last two steps (\textit{c}),(\textit{d}) is equivalent to an exact SGD update over the full set of parameters that is 
implemented by the clients and the server in a distributed manner; see Section 
\ref{sec:convergence} for details. 
The convergence of the algorithm
is ensured, as shown by our rigorous proof, which also holds in non-convex settings such as for training
 neural networks, where a $\mathcal{O} \left (\frac{1}{\sqrt{T}} \right )$ convergence rate is established with respect to the number of communication rounds $T$.


We experimentally validate the superiority of our method over state-of-the-art algorithms, in particular, FedAvg \citep{mcmahan2017communicationefficient}, FedProx\cite{li2020federated}, FedPer \citep{arivazhagan2019federated}, FedRecon \citep{singhal2021federated}, FedRep\cite{collins2021exploiting}, FedBabu\cite{fedbabu}, and Ditto\cite{ditto}, using several benchmark datasets in multi-class classification, specifically, MNIST, CIFAR-10, Fashion-MNIST, EMNIST, and Omniglot. The results show that our algorithm leads to lower training loss, and thus much more effective learning, especially in cases where personalization is needed most, i.e., when the data distributions across clients differ most.

Further we show that the computational complexity of our algorithm at each local client optimization step is $\mathcal{O}(1)$, while for other baselines such as the popular FedAvg the cost is $\mathcal{O}(\tau)$ where $\tau$ is the number of gradient descent updates at the client-side. A practical implication of this is that the energy consumed by each client will be much lower for our method compared to other methods such as FedAvg \citep{mcmahan2017communicationefficient}, FedProx\cite{li2020federated} and FedPer \citep{arivazhagan2019federated}. This makes our method attractive for mobile energy-limited regimes. 

{Our main contributions are as follows:
\begin{itemize}
    \item We propose \textbf{PFLEGO}, a novel personalized FL algorithm that combines exact stochastic gradient descent updates with a two-tiered (global and personalized) parameter structure.
    \item We provide a rigorous theoretical analysis, establishing an $\mathcal{O}(1/\sqrt{T})$ convergence rate for non-convex objectives under arbitrary client sampling.
    \item We demonstrate, through extensive experiments on standard FL benchmarks (MNIST, CIFAR-10, Fashion-MNIST, EMNIST, Omniglot), that PFLEGO consistently outperforms state-of-the-art baselines in terms of accuracy and communication efficiency, especially under heterogeneous data distributions.
    \item We theoretically analyze the computational cost of PFLEGO and empirically demonstrate, by measuring execution time per round, that our method achieves significant time savings compared to existing approaches.
    \item PFLEGO is inherently scalable: its per-round computation and communication costs depend only on the number of participating clients, not the total client population, and its convergence guarantees hold as long as the averaging coefficients used in the aggregation step for each round sum to one.
\end{itemize}}

%

{The rest of the paper is organized as follows. Section \ref{related_work} presents the related work and compares our approach against prior methods. Section \ref{sec:proposed_framework} details our proposed framework and provides formal convergence analysis. Section \ref{experiments} reports experimental results, and Section \ref{conclusion} concludes the paper.}

\section{Related Work}
\label{related_work}
{\paragraph{FedAvg and its variants}
Chronologically, the first proposed FL algorithm was FedAvg \cite{mcmahan2017communicationefficient}. In FedAvg, at each round, the server selects a subset of clients and sends to them the current model parameters or weights. The clients update the model parameters locally by performing a number $E$ of local gradient update steps towards optimizing their loss function, and then, they return the locally optimized parameters to the server. The server then averages model parameters, it sends them back to the clients,  and the process is repeated. For $E=1$, FedAvg is called Federated Stochastic Gradient Descent 
\cite{jiang2019improving}.

FedAvg performs notoriously poorly when there is data heterogeneity, i.e., when the data are non-iid \cite{li2020convergence,zhu2021federated}, which means that the clients' datasets are drawn from different distributions. Several extensions of FedAvg have been proposed, such as  \cite{ li2020convergence,li2020federated,li2020feddane,wang2020tackling, reddi2021adaptive,liu2019accelerating,yu2019linear}, which aim to improve the behaviour of the algorithm in heterogeneous data settings. While such methods can improve convergence \cite{wang2020tackling,li2020convergence},
they 
do not account for client personalization, since
they try to train a single global model across heterogeneous clients.  

{In a diverse setting, clients may 
possess 
distinct preferences or interests. 
In fact, the greater the number of clients is, the less likely it is that a substantial overlap of interests exists \cite{leiyang}. This leads to the concept of personalization, and personalized Federated Learning emerges as a response to this need, where the aim is to train multiple models, one for each client, so as to tailor model parameters according to client preferences, while enjoying the benefits of Federated Learning.} Some examples of personalized Federated Learning applications introduced in the literature are next word prediction \cite{hard2019federated}, emoji prediction \cite{ramaswamy2019federated,lee2021opportunistic}, health monitoring \cite{Wuetal20}, and personalized healthcare via wearable devices \cite{chen2021fedhealth}.

One important class of personalized FL methods \cite{agarwal2018cpsgd,collins2021exploiting,singhal2021federated} dictates clients to share some common weights, since clients cannot produce a good model on their own due to insufficient amounts of data, but also utilize some client-specific (personalized) weights for each client as well that are trained exclusively on that client's dataset \cite{konecny2016federated,arivazhagan2019federated}.
}
\paragraph{Personalized FL by fine-tuning a global model}
One approach to deal with personalization in FL is to allow clients to fine-tune a shared global model using either local adaptation \cite{wang2019federated,yu2021salvaging} or techniques inspired by meta-learning
\cite{jiang2019improving,chen2019federatedmeta,dinh2022personalized,fallah2020personalized}, without resorting to client-specific model parts. 
These methods differ significantly from ours since they require the communication of the full set of parameters of the shared global model between the server and the clients. Note also that fine-tuning can be computationally expensive since it requires the individual clients to adapt the full parameter vector of a deep neural network.

In some works that follow this fine-tuning principle \cite{li2020federated,li2020feddane,ditto}, the local objective of the clients incorporates a regularization term to keep the local model close to the 
global model. 
More precisely, FedDane \cite{li2020feddane} constructs this regularization by including an additional computational step that requires the server to communicate with two different subsets of clients. First, a subset of clients is selected to compute and send gradients based on the current global model. These gradients are aggregated by the server to form a global gradient. Then, a different subset of clients performs a local optimization that integrates both this global gradient and their local gradients while applying a quadratic regularization. In the related Ditto method \cite{ditto} each client maintains two separate models with parameter vectors of the same dimension as the global model.
In the first model, the client copies the global parameters from the server and updates them by performing a number of optimization steps. Then the client performs a number of optimization steps on the second model parameters by including a regularization term to keep their values close to the first model optimized weights.

\paragraph{Personalized FL by feature transfer} Our approach falls into the class of methods that achieve personalization in FL by using a feature transfer model, similarly to traditional non-distributed multi-task architectures \citep{caruana97,Ruder2017AnOO}.
%
Such methods aim to train a NN whose parameters consist of a common part (also called global or shared parameters) for all clients, and a client-specific part (also called personalized parameters). 
This approach was followed by FedPer \cite{arivazhagan2019federated}. 
However, the optimization algorithm in \cite{arivazhagan2019federated} differs from our method, as it is based on the standard FedAvg scheme. Specifically, the clients update both the global and the personalized parameters by executing joint gradient descent steps. Then, each client sends back to the server the locally updated global parameters and the server updates the global parameters through averaging. Their method reduces to standard FedAvg when there are no client-specific parameters. 
{In contrast, our algorithm updates only the personalized weights multiple times within each round, which is energy efficient, and 
performs only one full backpropagation 
computation
for the global parameters in order to perform the final exact SGD step. This design preserves theoretical convergence while simultaneously it reduces per-round energy costs compared to prior personalized FL methods.}

In another related work, FedRecon \cite{singhal2021federated}, similarly to FedPer \cite{arivazhagan2019federated}, a set of global and client-specific parameters are learned, but at each round the optimization algorithm does not update simultaneously the global and personalized parameters, and therefore that method can be considered as performing block coordinate optimization. 
In contrast, our distributed optimization algorithm incorporates \textit{exact} SGD steps where the full set of model parameters are simultaneously updated. This latter property means that our method enjoys theoretical convergence guarantees similar to SGD schemes \cite{robbinsmonro51,bottou2010large}.

In yet another thread of works FedBaBu \cite{fedbabu} and FedRep \cite{collins2021exploiting}, similarly to FedPer, decouple the model parameters to a set of global and a set of client-specific parameters. However with FedBabu, the client-specific parameters have an orthogonal weight initialization\footnote{Orthogonal weight initialization involves setting the weights in the head of a neural network as an orthogonal matrix.} at the server side, and the client-specific parameters are never trained. FedBabu differs from our method, since the client-specific parameters are initialized at the server-side, and they remain fixed across all rounds. The orthogonal initialization is essential to achieve desirable personalized performance.
With FedRep each client performs a number of updates to their client-specific parameters, and then the clients proceed to perform a number of updates to their global parameters. While our method uses a weighted unbiased gradient scheme on the global parameters of the clients, FedRep employs a weighted averaging scheme on the global parameters. Another difference between our method and FedRep is that the final gradient update for client-specific parameters in our method is unbiased. Our distributed unbiased scheme produces estimates that are on average equal to the true value of the parameters being estimated, without over or underestimating the true value, therefore, we expect our method to outperform FedRep.

\paragraph{FL with gradient return}
Finally, there exist non-personalized FL algorithms optimizing a set of global parameters, where gradients are returned to the server \citep{yao2020federated, Renetal21}. 
In \cite{yao2020federated}, each client performs many training steps over local copies of the global parameters and returns to the server the final gradient (after the final iteration) of these parameters. The server then aggregates them and performs a gradient step. 
However, convergence is guaranteed only in the special case where each client performs a single iteration so that the algorithm reduces to  SGD optimization. The work by \cite{Renetal21} aims to accelerate training by optimizing batch size. Specifically, clients sample a batch over their private datasets and perform a single gradient step to update their local copies of global parameters, and then they return the gradient to the server. This work differs from our method, since it does not involve personalization. In our scheme, the clients may perform multiple gradient steps over the client-specific parameters before returning the gradient of global parameters to the server.
\section{Proposed Framework} \label{sec:proposed_framework}


{Before presenting the details of the PFLEGO algorithm, we clarify the distinction between local and global optimization steps in our framework.

In our framework, each participating client performs local updates using full gradient descent (GD) on its entire local dataset, rather than stochastic mini-batches. This means that, on the client side, the computed gradients are exact with respect to the client's local data. However, since only a random subset of clients participates in each communication round, the global aggregation step at the server implements stochastic gradient descent (SGD) on the overall objective: the aggregated gradient is an unbiased estimator of the true global gradient. Thus, the proposed PFLEGO algorithm (Algorithm~\ref{alg:PFLEGO}) combines local GD with global SGD, and our theoretical analysis and notation reflect this distinction throughout the manuscript.}

\subsection{Personalized FL Setting}

We consider a supervised FL setting in which there is a single server and $I$ clients. Each client has a locally stored dataset $\mathcal{D}_i = (\mathcal{X}_i,\mathcal{Y}_i)$, where $\mathcal{X}_i=\{x_{i,j}\}_{j=1}^{N_i}$ are the input data samples (e.g., images) and $\mathcal{Y}_i=\{y_{i,j}\}_{j=1}^{N_i}$ are the corresponding target outputs (e.g., class labels). 
The objective of FL is to optimize a shared or global model, together with personalized or client-specific parameters, by utilising all client datasets. As a shared backbone model, we assume a deep neural network that consists of a number of common layers with overall parameter vector $\theta$. The number of outputs of the common layers, i.e., 
the size of the feature vector, is $M$. More precisely, the shared model receives an input $x$ and constructs in its final output a representation or feature vector $\phi(x;\theta) \in \Real^M$. Each client has a copy of the same network architecture corresponding to this shared representation. Each client $i$ also has an additional set of personalized layers attached as a head to $\phi(x;\theta)$. For simplicity, we assume that personalized layers consist of a single linear output layer with weights $W_i$; see Figure \ref{fig:clientsevercommunication} for a pictorial description.     

\paragraph{Training Objective for classification}

The learning objective is to train the neural network model by adapting the global parameters $\theta$ and the personalized parameters $\{W_i\}_{i=1}^I$. The full set of parameters is denoted by $\psi = \{\theta, W_1,\ldots,W_I\}$. Learning requires the minimization of the following training loss function that aggregates all client datasets:
\begin{equation}
    \mathcal{L}(\psi) = \sum_{i=1}^{I} \alpha_i \ell_i (W_i,\theta),
    \label{eq:fullloss}
\end{equation}
where the scalar $\alpha_i=\frac{N_i}{\sum_{j=1} N_j}$ quantifies the data volume proportionality of different clients, and $\ell_i(W_i,\theta)$ is the client-specific loss:   
\begin{equation}
\ell_i(W_i,\theta) 
= \frac{1}{N_i}\sum_{j=1}^{N_i} \ell(y_{i,j}, x_{i,j};W_i, \theta).
\label{eq:clientloss}
\end{equation}
The form of each data-individual loss $\ell(y_{i,j}, x_{i,j};W_i, \theta)$  depends on the application, e.g., on whether the task is a regression or a classification one. The case of multi-class classification, that we consider in our experiments, is detailed below while other cases can be dealt with similarly. 
\paragraph{Multi-class classification}
In multi-class classification, the input $x$ is assigned a class label $y \in \{1,\ldots,K_i\}$. In our personalized setting, each client $i$ can tackle a separate classification problem with $K_i$ classes, where classes across clients can be mutually exclusive or partially overlap. The set of personalized weights $W_i$ becomes a $K_i \times M$ matrix that allows to compute the logits in the standard cross entropy loss, 
$$
\ell(y_{i,j}, x_{i,j};W_i, \theta) = 
 - \log Pr(y_{i,j} | x_{i,j};W_i, \theta). 
$$
The class probability is modeled by the softmax function, i.e.,
$$
\text{Pr}(y_{i,j}|x_{i,j};W_i, \theta) 
= \frac{e^{z_{y_{i,j}}}}
{\sum_{k=1}^{K_i} e^{z_k}}
$$
where the $K_i$-dimensional vector of logits  is 
$z = W_i \times \phi(x_{i,j};\theta)$ and  $\phi(x_{i,j};\theta)$ is a $M \times 1$ vector. 
\begin{figure*}[t]
    \begin{center}
     \includegraphics[scale=0.5]{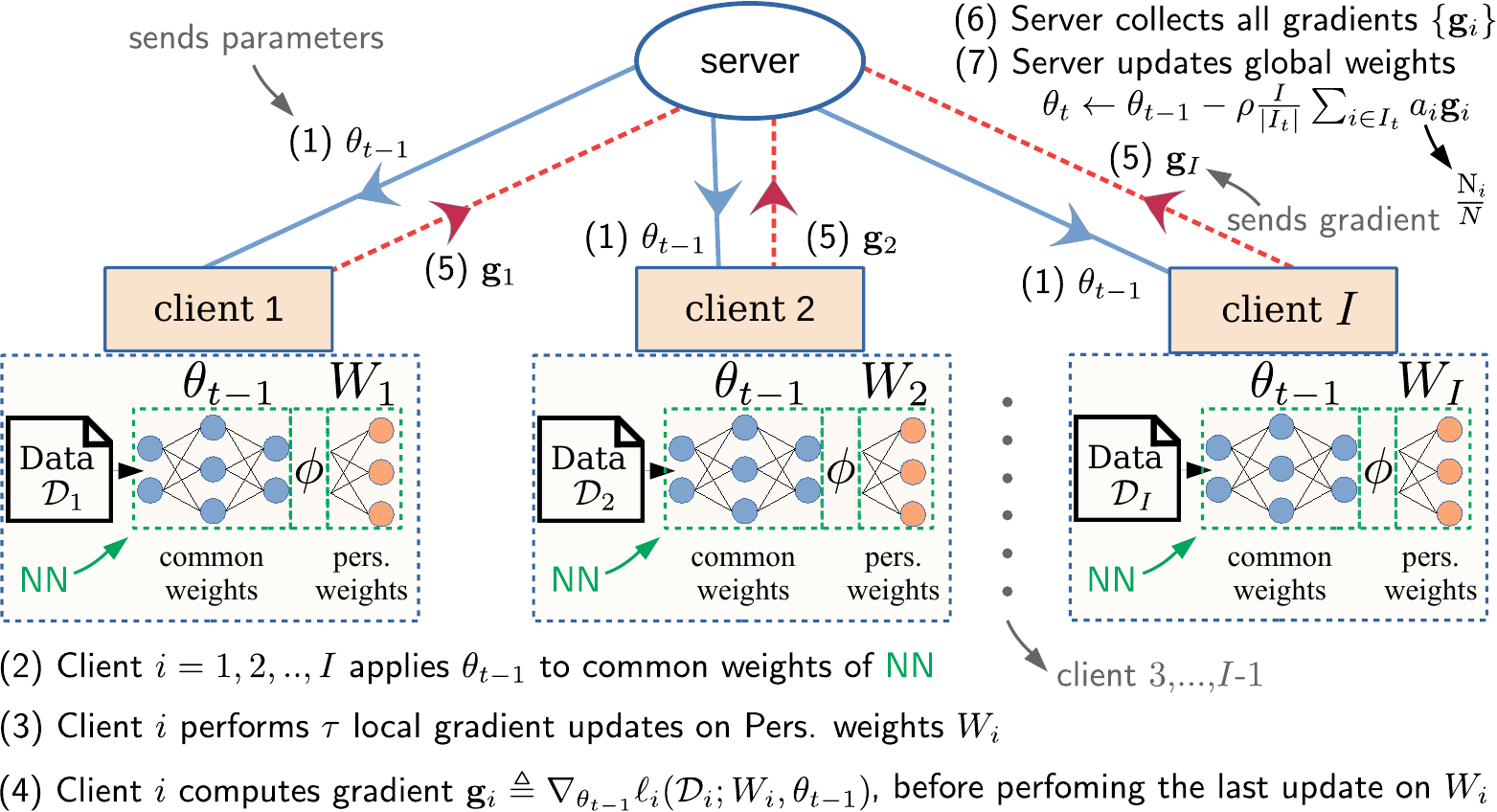}
    \caption{Training process of the PFLEGO algorithm with $I$ clients and a single server for one communication round. The numbering (1), $\dots$, (7) display the sequence of the execution steps within a single round. }
    \label{fig:clientsevercommunication}
    \end{center}
\end{figure*}

\subsection{The Proposed Algorithm}

The objective of Personalized FL is to minimize the global training loss in \eqref{eq:fullloss} over the full set of parameters $\psi$. To this end, we propose a distributed optimization algorithm that incorporates exact Stochastic Gradient Descent (SGD) steps over $\psi$. The stochasticity arises due to a random  client participation or selection process defined in the sequel, in Section \ref{sec:client-selection}. Without stochasticity, i.e., if all clients participate at every round, these previous steps become exact gradient descent (GD) steps. This means that the proposed algorithm (detailed in Section \ref{sec:serverclient}) converges similarly to standard GD or SGD methods respectively; see Sections \ref{sec:convergence}, and \ref{convergence_rate}. 
 
%
%

\subsubsection{Client Participation Process
\label{sec:client-selection}
}

We assume that the optimization of the global loss in \eqref{eq:fullloss} is performed in different rounds, where each round involves a communication of the server with some of the clients. Specifically, at the beginning of each optimization round $t$, a subset of clients $\mathcal{I}_t \subset \{1,\ldots,I\}$ is selected uniformly at random to participate. For instance, two sensible options  are: (\textit{a}) the number of clients $r_t \triangleq |\mathcal{I}_t|$ follows a Binomial distribution $\text{B}(I,\pi)$, i.e., each client participates independently with probability $\pi$, or (\textit{b}) a fixed number $0< r \leq  I$ of clients are always selected, i.e., $|\mathcal{I}_t|=r$ for any $t$. For both cases an arbitrary client $i \in \{1,\ldots, I\}$ participates in each round with probability 
$
\text{Pr}(i \in \mathcal{I}_t) = 
\frac{r}{I},
$
where for case (\textit{a}) $r = I \pi$ can be a real number, while for (\textit{b}), $r$ is strictly an integer.  

{PFLEGO is robust to stragglers\footnote{In FL, stragglers refer to client devices that are unable to complete their local training or communication tasks within the allotted time for a given round, often due to limited computational resources, poor connectivity, or interruptions.} \cite{li2020federated} and intermittent connectivity: in each round, the server aggregates gradients only from participating clients, and missing clients’ gradients can be set to zero. The theoretical convergence proof remains valid under this partial participation model, as long as the aggregation weights for the participating clients add up to one.}

\subsubsection{Client and Server Updates
\label{sec:serverclient}  
}

\paragraph{Client side} Having selected the subset of clients $\mathcal{I}_t$ to participate in round $t$, the server sends them the global model parameters $\theta_{t-1}$. Then, each client $i \in \mathcal{I}_t$ updates locally the current values $W_i \triangleq W_{i,t-1} $   of the client-specific parameters by performing a total number of $\tau$ gradient descent steps.   For the first $\tau-1$ steps, the global parameter $\theta_{t-1}$ is ``ignored'' (i.e., it remains fixed to the value sent by the server), and only the gradient $\nabla_{ W_{i}} \ell_i(W_i,\theta_{t-1})$ over the client-specific parameters $W_i$ is computed.

The gradient steps of these $\tau-1$ steps have the form 
$$
W_i \leftarrow W_i - \beta  \nabla_{ W_{i}} \ell_i(W_i,\theta_{t-1}),
$$
where $\beta$ is the learning rate. In fact, these $\tau-1$ updates could be replaced by any other optimization procedure, including one with momentum, e.g., Adam \cite{kingma2017adam}, as long as the final loss value $\ell_i(W_i,\theta)$, i.e., after these $\tau-1$ steps, is smaller or equal to the corresponding initial value.

In contrast, for the final ($\tau$-th) iteration, the client simultaneously computes the joint gradient $(\nabla_{W_i} \ell_i, \nabla_{\theta} \ell_i)$ of both $W_i$ and the shared parameters $\theta$, and it performs the final ($\tau$-th) gradient step for $W_i$ using the rule 
\begin{equation}
W_i \leftarrow W_i - \rho \frac{I}{r} \nabla_{ W_{i}} \ell_i(W_i,\theta_{t-1}),
\label{eq:clientupdate}
\end{equation}
where $\rho$ is the learning rate at round $t$, and the multiplicative scalar $\frac{1}{\text{Pr}(i \in \mathcal{I}_t) } = \frac{I}{r}$ ensures unbiasedness of the full gradient update over all parameters $\psi$; for more details about that important property of our proposed algorithm, see Section \ref{sec:convergence}.  

\paragraph{Server side} The server gathers all gradients from the participating clients and then performs a gradient update to the parameters $\theta$ by taking into account also the data proportionality weight of each client. Specifically, the update it performs takes the form:
\begin{equation}
\theta_{t} \leftarrow \theta_{t-1} - \rho \frac{I}{r} \sum_{i \in \mathcal{I}_t} \alpha_i \nabla_{\theta_{t-1}} 
 \ell_i(W_i,\theta_{t-1}),
 \label{eq:serverupdate}
\end{equation}
where again the term $\frac{I}{r}$ is included to ensure unbiasedness as detailed next. The whole optimization procedure across rounds is described by Algorithm \ref{alg:PFLEGO}.

\begin{algorithm}[t]
\small
\caption{PFLEGO}
\label{alg:PFLEGO}
\begin{algorithmic}
\Input $T$ rounds, $\tau$ local gradient updates, $I$ clients, $0 \!< r \!\leq I$ sampled clients per round, $N_i$ data samples at the $i$-th client
\State \textbf{Server:}
\Indent
\State \text{Initialize} global parameters $\theta_0$
\For{round $t\, = 1,2,...,T$}
    \State $\mathcal{I}_t \gets$ (Select a random subset of  clients)
    \State \text{Receive}
                $\bg_i$ from each \texttt{client $i$} $\in$ $\mathcal{I}_t$
    \State \text{Aggregate:}
                    $ \theta_{t} \leftarrow \theta_{t-1} - \rho \frac{I}{r}\sum_{i \in \mathcal{I}_t} \alpha_i \bg_i$
\EndFor
\EndIndent
\\
\State \textbf{ClientI\_Update($\theta_{t-1}$):}\#runs on client $i$
            \Indent
                        \State \text{Initialize} $W_i$, the first time client $i$ is visited

                        \For{local gradient
                    update $=1,2,...,\tau-1$}
          \State $W_{i}$ $\gets$ $W_{i} - \beta$ $\nabla_{ W_{i} }$$\ell_i(W_i,\theta_{t-1})$
          
                    \EndFor
                           \State \text{Compute} $( \nabla_{W_i} \ell_i( W_i, \! \theta_{t-1}), \nabla_{\theta_{t-1}} \ell_i( W_i,\theta_{t-1}))$
       
\State $W_i \leftarrow W_i - \rho \frac{I}{r} \nabla_{ W_{i}} \ell_i(W_i,\theta_{t-1})$
        \State \text{Return} $\bg_i \triangleq \nabla_{\theta_{t-1}} \ell_i(W_i, \theta_{t-1})$ to server 
            \EndIndent
\end{algorithmic}
\end{algorithm}

\subsection{Exact Stochastic Gradient Descent Optimization 
\label{sec:convergence}
}

An important property of our proposed algorithm is that the final iteration $\tau$ over $W_i$'s at the selected set of clients $\mathcal{I}_t$, combined with the update over the global parameter $\theta_{t-1}$ at the server results in an unbiased SGD step over all parameters $\psi_{t-1} = \{\theta_{t-1}, W_{1,t-1},\ldots, W_{I,t-1}\}$. To prove this rigorously, we introduce the stochastic 
gradient vector $\nabla^s_{\psi_{t-1}} \mathcal{L}(\psi_{t-1}) = \{ \nabla_{\theta_{t-1}}^s \mathcal{L}(\psi_{t-1}), \nabla_{W_1}^s \mathcal{L}(\psi_{t-1}),\ldots, \nabla_{W_I}^s \mathcal{L}(\psi_{t-1})\}$ where we use the symbol $s$ in $\nabla^s$ to indicate that these gradient vectors are stochastic. For any client  $i=1,\ldots,I$ the vector $\nabla_{W_{i,t-1}}^s \mathcal{L}(\psi_{t-1})$ is defined as
\begin{align}
\nabla_{W_{i,t-1}}^s \! \mathcal{L}(\psi_{t-1})
& \! = \!  {\bf 1}_{i \in \mathcal{I}_t} 
\frac{I}{r} \nabla_{W_{i,t-1}} \mathcal{L}(\psi_{t-1}) \nonumber \\
& \! = \! {\bf 1}_{i \in \mathcal{I}_t} 
\frac{I}{r} \alpha_i \nabla_{W_{i,t-1}} 
\! \ell_i(W_{i,t-1},\theta_{t-1}). 
\end{align}
Here, ${\bf 1}_{i \in \mathcal{I}_t}$ denotes the indicator function that equals one if $i \in \mathcal{I}_t$, i.e., if client $i$ was selected in round $t$, and zero otherwise. We also used the fact that $\nabla_{W_{i,t-1}} \mathcal{L}(\psi_{t-1}) = \alpha_i \nabla_{W_{i,t-1}} \ell_i(W_{i,t-1},\theta_{t-1})$, which follows from Equation (\ref{eq:fullloss}). Note that for selected clients in the set $\mathcal{I}_t$, the corresponding vector, $\nabla_{W_{i,t-1}}^s \! \mathcal{L}(\psi_{t-1}) = \frac{I}{r} \alpha_i \nabla_{W_{i,t-1}} \ell_i(W_{i,t-1},\theta_{t-1})$ is precisely the gradient used in the client update in \eqref{eq:clientupdate}, while for the remaining clients $i \notin \mathcal{I}_t$, $\nabla_{W_{i,t-1}}^s \mathcal{L}(\psi_{t-1}) = 0$. The stochastic gradient $\nabla_{\theta_{t-1}}^s \mathcal{L}$ is defined as
\begin{equation}
\nabla_{\theta_{t-1}}^s \! \mathcal{L}(\psi_{t-1}) \! = \! \frac{I}{r}
\! \sum_{i=1}^I {\bf 1}_{i \in \mathcal{I}_t} 
\alpha_i \nabla_{\theta_{t-1}} 
\! \ell_i(W_{i,t-1},\theta_{t-1}). 
\end{equation}
We can see that in Algorithm \ref{alg:PFLEGO}, the final client update together with the server update can be compactly written as the following gradient update over all parameters $\psi$:
$$
\psi_{t} \leftarrow \psi_{t-1} - \rho
\nabla^s_{\psi_{t-1}} \mathcal{L}(\psi_{t-1}).
$$
This is now a proper SGD step as long as the stochastic gradient  $\nabla^s_{\psi_{t-1}} \mathcal{L}(\psi_{t-1})$ is unbiased, as we state next.

\begin{prop}
The stochastic gradient $\nabla^s_{\psi_{t-1}} \mathcal{L}(\psi_{t-1})$ is unbiased, i.e.,
$\Exp [\nabla^s_{\psi_{t-1}} \mathcal{L}(\psi_{t-1})]
\! = \! \nabla_{\psi_{t-1}} \mathcal{L}(\psi_{t-1})$ where 
$\nabla_{\psi_{t-1}} \mathcal{L}(\psi_{t-1})$ denotes the exact gradient and the expectation is taken under the client participation process (either case a or b) defined in Section \ref{sec:client-selection}.
\label{prop:1}
\end{prop}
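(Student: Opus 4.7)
The plan is to verify the unbiasedness coordinate by coordinate, exploiting the linearity of expectation and the fact that the $I/r$ rescaling is precisely the inverse of the marginal participation probability. The entire argument rests on a single observation: under either of the two client-selection schemes, the marginal probability satisfies $\Pr(i \in \mathcal{I}_t) = r/I$, and hence $\Exp[\mathbf{1}(i \in \mathcal{I}_t)] = r/I$. For scheme (i) this is immediate since $i$ participates independently with probability $\rho = r/I$; for scheme (ii) it follows by symmetry because every size-$r$ subset of $\{1,\ldots,I\}$ is equally likely, and each index $i$ lies in a $\binom{I-1}{r-1}/\binom{I}{r}=r/I$ fraction of them. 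I would state and prove this marginal identity as the first step.

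Next, I would handle the client-specific blocks. For any fixed $i$, taking expectation of
$\nabla_{W_i}^s \mathcal{L} = \mathbf{1}(i \in \mathcal{I}_t)\,\tfrac{I}{r}\,\alpha_i \nabla_{W_i} \ell_i(W_i,\theta)$
gives $\tfrac{r}{I}\cdot\tfrac{I}{r}\,\alpha_i \nabla_{W_i} \ell_i(W_i,\theta) = \alpha_i \nabla_{W_i} \ell_i(W_i,\theta)$, which by \eqref{eq:fullloss} equals the exact partial gradient $\nabla_{W_i} \mathcal{L}(\psi)$, since $\mathcal{L}$ depends on $W_i$ only through the $i$-th summand.

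For the global-parameter block, I would use linearity of expectation to push it through the sum:
\begin{equation*}
\Exp[\nabla_{\theta}^s \mathcal{L}]
= \frac{I}{r}\sum_{i=1}^I \Exp[\mathbf{1}(i \in \mathcal{I}_t)]\,\alpha_i \nabla_{\theta}\ell_i(W_i,\theta)
= \sum_{i=1}^I \alpha_i \nabla_{\theta}\ell_i(W_i,\theta),
\end{equation*}
which, again by \eqref{eq:fullloss}, is exactly $\nabla_{\theta}\mathcal{L}(\psi)$. Stacking the blocks yields $\Exp[\nabla^s_\psi \mathcal{L}] = \nabla_\psi \mathcal{L}(\psi)$.

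The argument is essentially arithmetic, so there is no real obstacle; the only subtlety worth being explicit about is that the two sampling schemes must be handled on the same footing, so I would make sure to note that even though in scheme (i) the random set size $r_t$ fluctuates and $r = I\rho$ may be non-integer, the marginal $\Pr(i\in\mathcal{I}_t)=r/I$ still holds and is the only property used; no joint or higher-moment assumption on $\mathcal{I}_t$ is needed for unbiasedness. Variance of $\nabla^s_\psi \mathcal{L}$, which would distinguish the two schemes, is not required here and can be deferred to the convergence discussion.
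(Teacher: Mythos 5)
Your proof is correct and follows essentially the same route as the paper's: replace the indicator $\mathbf{1}(i\in\mathcal{I}_t)$ by its expectation $r/I$, which cancels the $I/r$ factor block by block, yielding the exact gradient. You merely make explicit what the paper states in one line (the marginal-probability computation for both sampling schemes and the per-block bookkeeping), which is fine but not a different argument.
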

\begin{proof}
By taking the expectation 
$\Exp [\nabla^s_{W_{i,t-1}} \mathcal{L}(\psi_{t-1})]$ for any $i$, and the expectation $\Exp[ \nabla^s_{\theta_{t-1}} \mathcal{L}(\psi_{t-1})]$, the indicator function ${\bf 1}_{i \in \mathcal{I}_t}$ is replaced by its expected value  $\text{Pr}(i \in \mathcal{I}_t) = \frac{r}{I}$ (this value is the same for cases a and b), which gives the exact gradient.
\end{proof}



\subsection{Convergence Rate}
\label{convergence_rate}

In this section we
simplify notation to write $\nabla_{\theta} \mathcal{L} \triangleq \nabla_{\theta} \mathcal{L}(\psi)$, which is the gradient of the full loss over the shared parameters
$\theta$. Similarly, we also denote the corresponding gradient for a client specific loss by  
$\nabla_{\theta} \ell_i \triangleq \nabla_{\theta} \ell_i(W_i, \theta)$.

In Proposition \ref{prop:2} we state our convergence result, which holds also in non-convex settings \cite{haoyu}, including how to set the values for $\rho$, and $\beta$ at the server-side and at the client-side, respectively, to ensure convergence. We follow the same convention as in \cite{haoyu,ghadimi,lian2017,alistarh} where the average expected squared gradient norm is used to characterize the convergence rate.
    
\begin{prop}
    \label{prop:2}

    If the following assumptions hold:
    \begin{itemize}
        \item Lipschitz continuity of $\nabla_\theta \mathcal{L}$ w.r.t. the $\ell_2$ norm, with a Lipschitz constant L: \begin{itemize}
            \item[] $\| \nabla_\theta \mathcal{L} - \nabla_{\theta'} \mathcal{L} \| \leq L \| \theta - \theta'\|.$
        \end{itemize}
        \item The $\ell_2$ norm of gradients is bounded by a constant $G$: \begin{itemize}
            \item[] $\|\nabla_\theta \ell_i\| \leq G.$
        \end{itemize}
        \item The learning rates $\beta$ and $\rho$ satisfy the bounds: \begin{itemize}
            \item[] $0 < \beta < \frac{2}{L}$ and $0 < \rho < \frac{2}{L}\frac{r}{\mathcal{I}}$,
        \end{itemize}
    \end{itemize} then we have that PFLEGO's parameters are guaranteed to converge to a stationary point. For the convergence, we also use Proposition \ref{prop:1}, in which, the computed gradients are unbiased gradients, i.e., $\mathbb{E} [\nabla_\theta \ell_i ] = \nabla_\theta \mathcal{L}$.
    
    
    
     

\end{prop}

\begin{corollary}
    \label{corol:1}
        We establish two convergence rates from the main result of Proposition \ref{prop:2}.
        \begin{itemize}
            \item If we set $\rho = \frac{1}{L}\frac{\sqrt{r}}{\sqrt{T}\sqrt{I}}$, then
        PFLEGO achieves a convergence rate of $\mathcal{O} \left (\sqrt{\frac{I}{T}} \right )$.
            \item If we further set $r=I$, then
        PFLEGO achieves a convergence rate of $\mathcal{O} \left (\frac{1}{\sqrt{T}} \right )$.
        \end{itemize}
         Both convergence rates indicate that the performance progressively improves and PFLEGO gets closer to the optimal solution as the number of rounds $T$ increases.
\end{corollary}

\begin{proof}
For the proofs of Proposition \ref{prop:2}, and of Corollary \ref{corol:1}, see Appendix \ref{appendix:proof}.
\end{proof}

\subsection{Computational Complexity}

In this section we discuss the computational advantage of PFLEGO against other methods such as FedAvg and FedPer, even when the number of gradient update steps $\tau > 1$ at the client-side. Notably, PFLEGO roughly runs $\frac{\tau}{2}$ times faster than the previously mentioned methods, which becomes significant, when the minimization of energy consumption is important, i.e., the size of the model parameter vector is large, and the NN has a large number of layers.

{In PFLEGO, each client performs two full forward passes and one full backward pass through the neural network per communication round, regardless of the value of $\tau$.  This reduction in computation is possible since,} during the first $\tau-1$ client updates the global parameters $\theta$ are fixed. 
Indeed, at the beginning of each round, we pass once the data from the NN, store all feature vectors, and then carry out $\tau-1$ GD steps to update only the client-specific parameters. For the final $\tau$-th iteration, we need to pass the data for a second time from the NN  to compute the joint gradient. 
Formally, given that the complexity per round is dominated by the NN evaluations, then PFLEGO is $O(1)$ while others are $O(\tau)$. {By contrast, baseline methods such as FedAvg \cite{mcmahan2017communicationefficient} and FedPer \cite{arivazhagan2019federated} require a full forward and backward pass through the entire network for each of the $\tau$ local update steps, resulting in $\tau$ such passes per round.} Consequently in our algorithm the clients consume much less energy per round, which can be useful for applications with energy constraints.

{On the server side, only a subset of $r \leq I$ clients participates in each round. The server aggregates updates from these 
$r$ clients and performs a single gradient update of the global parameters. Aggregation involves summing $r$ vectors of size 
$|\theta|$, which has complexity $O(r\cdot|\theta|)$. The subsequent parameter update adds an additional 
$O(|\theta|)$, resulting in a total server-side complexity per round of $O(r \cdot |\theta|)$. Thus, while PFLEGO significantly reduces per-round computation on clients, the server-side cost scales linearly with the number of participating users and the number of global parameters, similar to FedAvg.}

\section{Experiments} \label{experiments}

In order to evaluate the performance benefits of the proposed PFLEGO algorithm, we compare it against state-of-the-art algorithms, in particular, we compare against the proposed personalized algorithms: \begin{itemize}
    \item (\textit{a}) FedRep and FedBabu\cite{fedbabu} which have the FedPer\cite{arivazhagan2019federated} NN architecture,
    \item (\textit{b}) personalized variants of FedAvg \cite{mcmahan2017communicationefficient}: one that uses a regularization term such as FedProx\cite{li2020federated}, and Ditto\cite{ditto} which trains 2 NN per client, and
    \item (\textit{c}) FedRecon \cite{singhal2021federated}, which follows a stochastic block coordinate descent scheme, as opposed to our SGD approach.
 \end{itemize}
We experiment with the Omniglot, CIFAR-10, MNIST, Fashion-MNIST and EMNIST datasets.


{At the client-side we use the GD optimizer, and at the server side we use the Adam optimizer \cite{kingma2017adam} as it offers stability compared to the GD optimizer.} {In our case, it is natural to use Adam \cite{kingma2017adam} at the server since PFLEGO performs a stochastic gradient descent (SGD) update, whereas other FL algorithms such as FedAvg \cite{mcmahan2017communicationefficient} and FedPer \cite{arivazhagan2019federated} simply compute an average of weights which does not require an optimizer.}

We report training loss (given by \eqref{eq:fullloss}) and test accuracy for the compared methods. Plotting the training loss shows how fast each FL algorithm optimizes the model, i.e., how many rounds are required, while test accuracy quantifies predictive classification performance. Results are averages over all clients.



\subsection{Dataset and NN Architecture Description}
\label{appendix:datasetdesciption}

{We provide the description of the datasets used in our experiments, along with their corresponding
neural network architecture. For further details about the neural network architectures see Table \ref{tab:architectures_altogether} and Figure \ref{fig:nn_architectures} in the Appendix \ref{appendix:neural_network_architectures}.}

    \textit{Omniglot}. This dataset was introduced in
    \cite{Lake2015HumanlevelCL} and consists of 1623 $105\times105$ handwritten characters from 50 different alphabets, with 20 samples per handwritten character. Omniglot can be a natural choice for Personalized learning due to its small number of samples per character and large number of different handwritten characters per alphabet.
    Each alphabet can be considered as a classification problem with a certain number of classes, e.g., the English  alphabet has $26$ classes. We use $4$ convolutional layers, each layer followed by one max pooling layer; the architecture we use is the same as the one from \cite{finn2017modelagnostic}.
 
 \textit{CIFAR-10}. This dataset consists of $32 \times 32$ RGB images of 10 different classes of visual categories \citep{krizhevsky2009learning}. 
    We use the same architecture  as in \cite{yao2020federated}, which includes two convolutional layers of $64$ filters each, and a kernel of size $5$. Each layer is followed by a max pooling layer of size $3\times3$ with stride $2$. The output of the convolutional layers passes through two additional fully connected layers of size $384$ and $192$. The activation function for the convolutional and fully connected layers is ReLU.

\textit{MNIST}. This dataset consists of $28 \times 28$ handwritten grayscale images of single digits from classes $0$ to $9$. For MNIST, we use an MLP architecture that consists of one fully connected layer of $200$ units with a ReLU activation function.
    
    \textit{Fashion-MNIST}. This dataset is similar to MNIST but more challenging 
    and consists of $28 \times 28$ grayscale images of $10$ different classes of  clothing. We use the same MLP architecture as the one in MNIST.
   
   \textit{EMNIST}. This dataset extends the MNIST dataset with grayscale handwritten digits. In total, there are $62$ different classes of handwritten letters and digits. We use the same MLP architecture as the one in MNIST.


\subsection{Experimental Setup
\label{appendix:experimental_setup}
}

For Omniglot, we assume that a single alphabet is stored in each client. Due to the fact that each handwritten system of each alphabet is unique, there is no class label set overlap among the clients, which makes Omniglot the hardest and most personalized FL problem in our experiments. At each alphabet, data of each class are split into $75\%$ used for training and $25\%$ used for testing. Also, standard data augmentation is used by including rotated (by multiples of $90^o$) image samples \citep{finn2017modelagnostic}. The setup we follow is based on \cite{shamsian2021personalized} that uses $I=50$ clients, $\tau=50$ inner steps per client, and $T=5,000$ communication rounds, and each client has a unique alphabet. For the FedAvg algorithm, the final layer of the common weights is set to 55 outputs, which is equal to the maximum number of classes among all $50$ alphabets.

For MNIST, Fashion-MNIST, EMNIST, and CIFAR-10, we use $I= 100$ clients, $\tau=50$ inner steps per client, and $T=200$ communication rounds. MNIST, Fashion-MNIST, and CIFAR-10 are well-balanced datasets and contain 10 classes. EMNIST is more challenging since it has 62 classes and varying number of examples per class. We simulate several FL scenarios by varying the amount of task-personalization among clients. This involves varying the degree of class label set overlap among clients, so that different clients can have different classes in their private datasets.

\subsection{Different Degrees of Personalization}
\label{appendix:degreeofpersonalization}
For all datasets, except Omniglot, which is personalized by design, i.e., for MNIST, CIFAR-10, Fashion-Mnist and EMNIST, we artificially simulate personalized FL problems by varying the degree of personalization. 
This is quantified by the size $K$ of classes randomly assigned to each client from the total set of classes, so that
the smaller $K$ is, the higher the degree of personalization is, since the probability  of class overlap among clients reduces with smaller $K$ values. 

We consider three degrees  
of personalization. 
\begin{itemize}
    \item (\textit{i}) High-Pers where each client has $K=2$ randomly chosen classes from the total set of $C$ classes ($C=10$ for  MNIST, CIFAR-10, Fashion-MNIST and $C=62$ for EMNIST),
    \item (\textit{ii}) Medium-Pers where $C/2$ classes are randomly assigned to each client, and
    \item (\textit{iii}) No-Pers where all clients  have data points from all $C$ classes, i.e., all clients solve the same task.
\end{itemize} 



\begin{table*}[ht]
\caption{Test accuracy for MNIST, CIFAR-10, EMNIST, and the Fashion-MNIST datasets for different degrees of personalization.}
\label{table:all_datasets_test_acc}
    \begin{center}
    \begin{small}
  \resizebox{\textwidth}{!}{%
        \begin{tabular}{llllllll}
        \toprule
         &  & \multicolumn{ 3}{c}{MNIST} & \multicolumn{ 3}{c}{CIFAR-10} \\ 
         \midrule
        Method / Deg of Pers.  &  & \multicolumn{1}{c}{High-Pers} & \multicolumn{1}{c}{Medium-Pers} & \multicolumn{1}{c}{No-Pers} & \multicolumn{1}{c}{High-Pers} & \multicolumn{1}{c}{Medium-Pers} & \multicolumn{1}{c}{No-Pers} \\ 
        
        FedPer &  & 97.88 $\pm$ 0.25 & 92.83 $\pm$ 0.46 & 87.23 $\pm$ 0.33 & 85.15 $\pm$ 1.08 & 61.01 $\pm$ 0.84 & 37.88 $\pm$ 0.59 \\ 
        FedAvg &  & 97.54 $\pm$ 0.25 & 92.83 $\pm$ 0.52 & 93.12 $\pm$ 0.24 & 85.18 $\pm$ 0.96 & 64.53 $\pm$ 0.75 & 61.33 $\pm$ 0.52 \\

        FedProx &  & 97.50 $\pm$ 0.26 & 92.62 $\pm$ 0.54 & 93.02 $\pm$ 0.25 & 85.94 $\pm$ 0.87 & 64.07 $\pm$ 0.60 & 60.00 $\pm$ 0.62 \\

    Ditto &  & 98.25 $\pm$ 0.12 & 94.05 $\pm$ 0.47 & 92.98 $\pm$ 0.25 & 85.89 $\pm$ 0.98 & 64.18 $\pm$ 0.78 & 60.67 $\pm$ 0.49 \\

    FedRep &  & 98.03 $\pm$ 0.23 & 93.85 $\pm$ 0.47 & 89.28 $\pm$ 0.28 & 84.03 $\pm$ 0.83 & 66.64 $\pm$ 0.67 & 46.52 $\pm$ 0.96 \\

        FedBabu &  & 97.19 $\pm$ 0.33 & 92.46 $\pm$ 0.46 & \textbf{93.98 $\pm$ 0.24} & 87.88 $\pm$ 0.66  & 66.64 $\pm$ 0.68 & 56.48 $\pm$ 0.51 \\
        
        PFLEGO &  & \textbf{98.70 $\pm$ 0.21} & \textbf{95.16 $\pm$ 0.32} & 91.37 $\pm$ 0.19 & \textbf{87.81 $\pm$ 0.94} & \textbf{74.83 $\pm$ 0.66} & \textbf{63.31 $\pm$ 0.61} \\ 
        \midrule
         &  & \multicolumn{ 3}{c}{EMNIST} & \multicolumn{ 3}{c}{Fashion-MNIST} \\ \midrule
        Method / Deg of Pers.  &  & \multicolumn{1}{c}{High-Pers} & \multicolumn{1}{c}{Medium-Pers} & \multicolumn{1}{c}{No-Pers} & \multicolumn{1}{c}{High-Pers} & \multicolumn{1}{c}{Medium-Pers} & \multicolumn{1}{c}{No-Pers} \\ 
        FedPer &  & 97.78 $\pm$ 0.51 & 74.19 $\pm$ 0.36 & 48.12 $\pm$ 0.23 & 96.14 $\pm$ 0.35 & 88.22 $\pm$ 0.64 & 77.44 $\pm$ 0.59 \\ 
        FedAvg &  & 97.29 $\pm$ 0.54  & 68.82 $\pm$ 0.29 & \textbf{69.40 $\pm$ 0.10} & 96.35 $\pm$ 0.47 & 87.51 $\pm$ 0.73 & 83.59 $\pm$ 0.35 \\ 

        FedProx &  & 97.24 $\pm$ 0.54 & 54.28 $\pm$ 0.58 & 69.20 $\pm$ 0.09 & 95.21 $\pm$ 0.48 & 87.44 $\pm$ 0.78 & 83.54 $\pm$ 0.38 \\

        Ditto &  & 98.11 $\pm$ 0.48 & 67.24 $\pm$ 0.28 & 69.11 $\pm$ 0.09 & 96.45 $\pm$ 0.47 & 89.30 $\pm$ 0.50 & 83.48 $\pm$ 0.34 \\

        FedRep &  & 98.20 $\pm$ 0.45 & 77.62 $\pm$ 0.34 & 65.23 $\pm$ 0.18 & 95.99 $\pm$ 0.38 & 89.10 $\pm$ 0.68 & 79.18 $\pm$ 0.33 \\

        FedBabu &  & 97.08 $\pm$ 0.58 & 60.58 $\pm$ 0.34 & 71.82 $\pm$ 0.16 & 95.68 $\pm$ 0.44  & 88.00 $\pm$ 0.72 & \textbf{84.12 $\pm$ 0.43} \\
        
        PFLEGO &  & \textbf{98.79 $\pm$ 0.36} & \textbf{77.75 $\pm$ 0.33} & 60.79 $\pm$ 0.15 & \textbf{96.34 $\pm$ 0.43} & \textbf{89.84 $\pm$ 0.52} & 81.49 $\pm$ 0.51 \\ 
        \bottomrule
        \end{tabular}
         }
    \end{small}
    \end{center}
\end{table*}

\begin{table*}[!htbp]
\caption{Test accuracy for the Omniglot dataset, which exhibits a high degree of personalization. The average is measured by averaging the last 10 global rounds of the first 1000 global rounds.}
\label{omniglot_single_table_results}
\begin{center}
\begin{small}
\resizebox{\textwidth}{!}{%
\begin{tabular}{cccccccccc}
\toprule
\multicolumn{8}{c}{Omniglot}
\\
\midrule
FedRep &
FedBabu &
FedProx & Ditto & FedPer & FedAvg & FedRecon & PFLEGO\\ 
69.65 $\pm$ 2.47 & 68.45 $\pm$ 1.53 & 48.41 $\pm$ 1.83 & 50.88 $\pm$ 1.72 & 68.02 $\pm$ 1.74 & 49.65 $\pm$ 1.74 & 74.06 $\pm$ 1.19 & \textbf{75.85 $\pm$ 1.21} \\
\bottomrule
\end{tabular}
}
\end{small}
\end{center}
\end{table*}

\begin{table*}[!htbp]
\caption{Average running time (in seconds per round) for each algorithm on Omniglot. Lower is better.}
\label{omniglot_time_results}
\begin{center}
\begin{tabular}{ccccccc}
\toprule
\multicolumn{7}{c}{Omniglot}
\\

\midrule
FedRep & FedBabu & FedProx & Ditto & FedPer & FedAvg & PFLEGO \\
10.571 & 16.814 & 18.632 & 26.105 & 14.436 & 16.553 & {\bf7.024} \\
\bottomrule
\end{tabular}
\end{center}
\end{table*}

\subsection{Results and Discussion}
\label{sec:results_and_discussion}
Table \ref{table:all_datasets_test_acc}
reports test accuracy scores for the aforementioned state-of-the-art algorithms and our approach PFLEGO 
for all datasets, except Omniglot, and three degrees of personalization (high/medium/no personalization). 
Table \ref{omniglot_single_table_results} reports the  test accuracy for Omniglot. Each reported accuracy value and confidence interval is the mean of the corresponding values at the final 10 global rounds. In all tables the best
performing method is indicated in bold font. We observe that our algorithm has significantly higher accuracy from the other algorithms for high degree of personalization; see High-Pers columns in Table \ref{table:all_datasets_test_acc} and the Omniglot results in Table \ref{omniglot_single_table_results}. In the case of Medium-Pers, our algorithm has better performance than the baselines in all datasets as well. Note the significant difference in performance in CIFAR-10, where our method has achieved $8\%$ higher accuracy than all other algorithms.   
{Finally, in the case of No-Pers, the orthogonal initialization of personalized weights in FedBabu outperforms even FedAvg, except on CIFAR-10, where PFLEGO obtains higher test accuracy than all methods. The superior performance of PFLEGO can be attributed to its use of unbiased exact SGD updates. Unlike methods such as FedAvg and FedPer that rely on averaging locally updated model weights, PFLEGO directly aggregates unbiased gradients, ensuring that each global update moves consistently toward the true optimum of the overall objective. We also see that PFLEGO exhibits substantially lower per-
round wall-clock time as shown in Table \ref{omniglot_time_results}}.

To visualize the learning speed, Figures \ref{fig:main_mnist_plot}, \ref{fig:main_fashion_mnist_plot}, \ref{fig:main_cifar10_plot}, \ref{fig:main_emnist_plot} show the training loss and test accuracy with respect to the number of rounds for MNIST, Fashion-MNIST, CIFAR-10, and EMNIST respectively for different degrees of personalization. Figure \ref{fig:main_omniglot_plot} plots the same quantities for the Omniglot dataset. These plots clearly indicate that PFLEGO achieves high classification accuracy faster in the high-personalized regime and requires fewer communication rounds to minimize the overall training loss.

\begin{figure*}[ht]
\begin{center}
    \begin{tabular}{ccc}
     \multicolumn{3}{c}{{\includegraphics[scale=\myscaleplottitle]
    {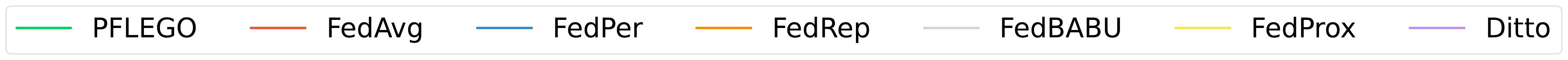}}}
    \end{tabular}\\
    \begin{tabular}{ccc}
    {\includegraphics[scale=\myscaleplotsubplot]
    {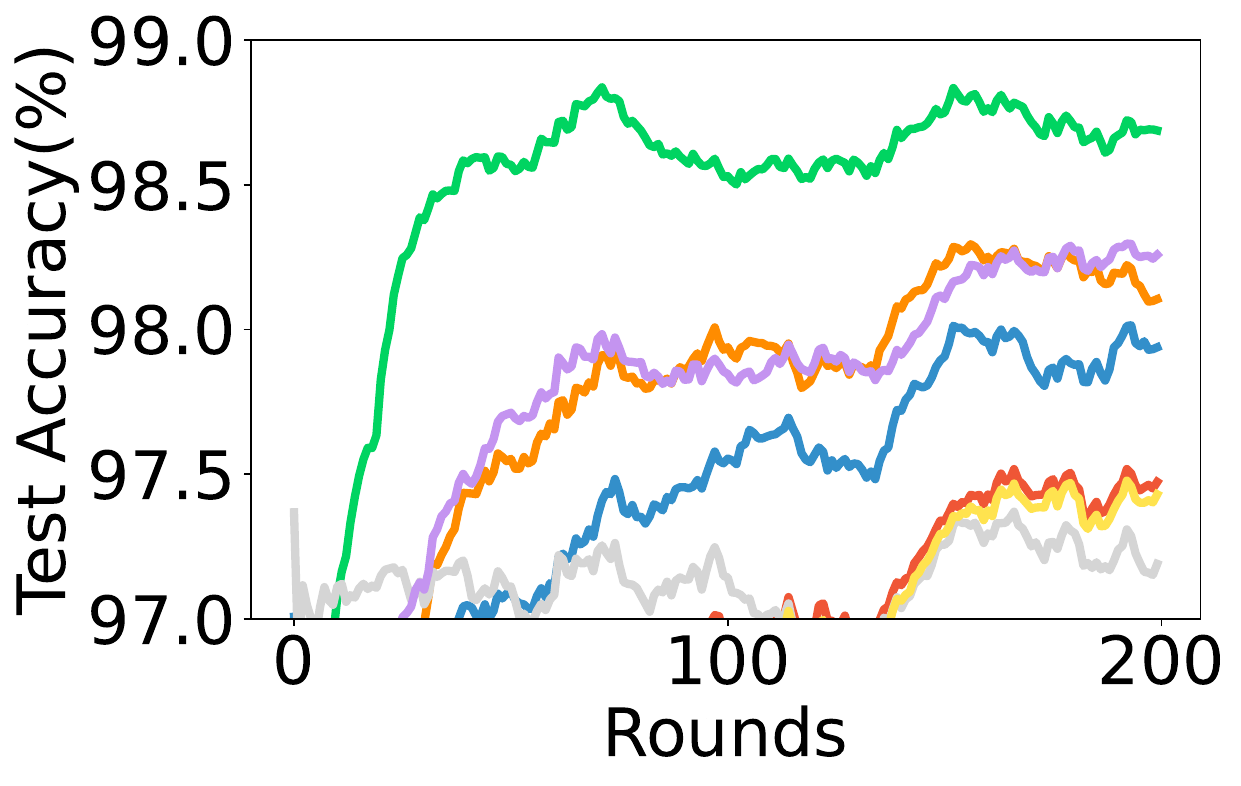}} &
    {\includegraphics[scale=\myscaleplotsubplot]
    {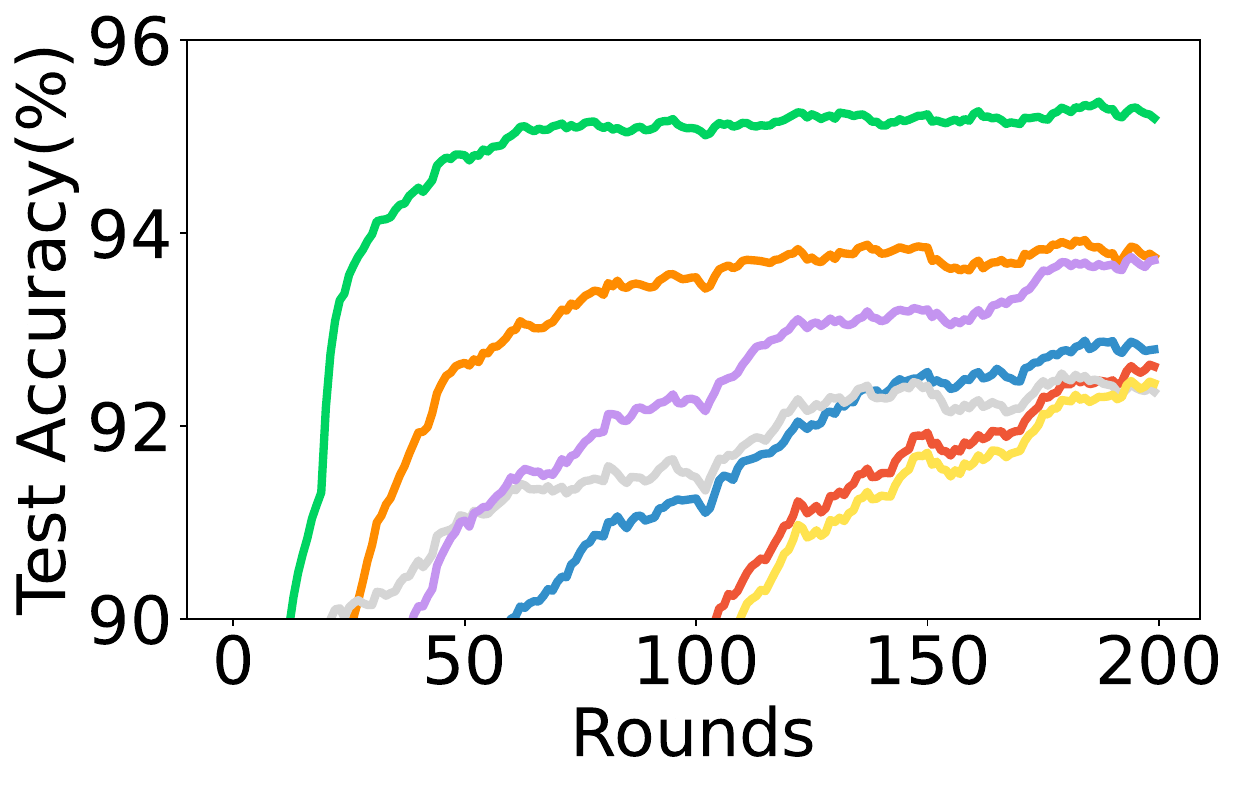}} &
    {\includegraphics[scale=\myscaleplotsubplot]
    {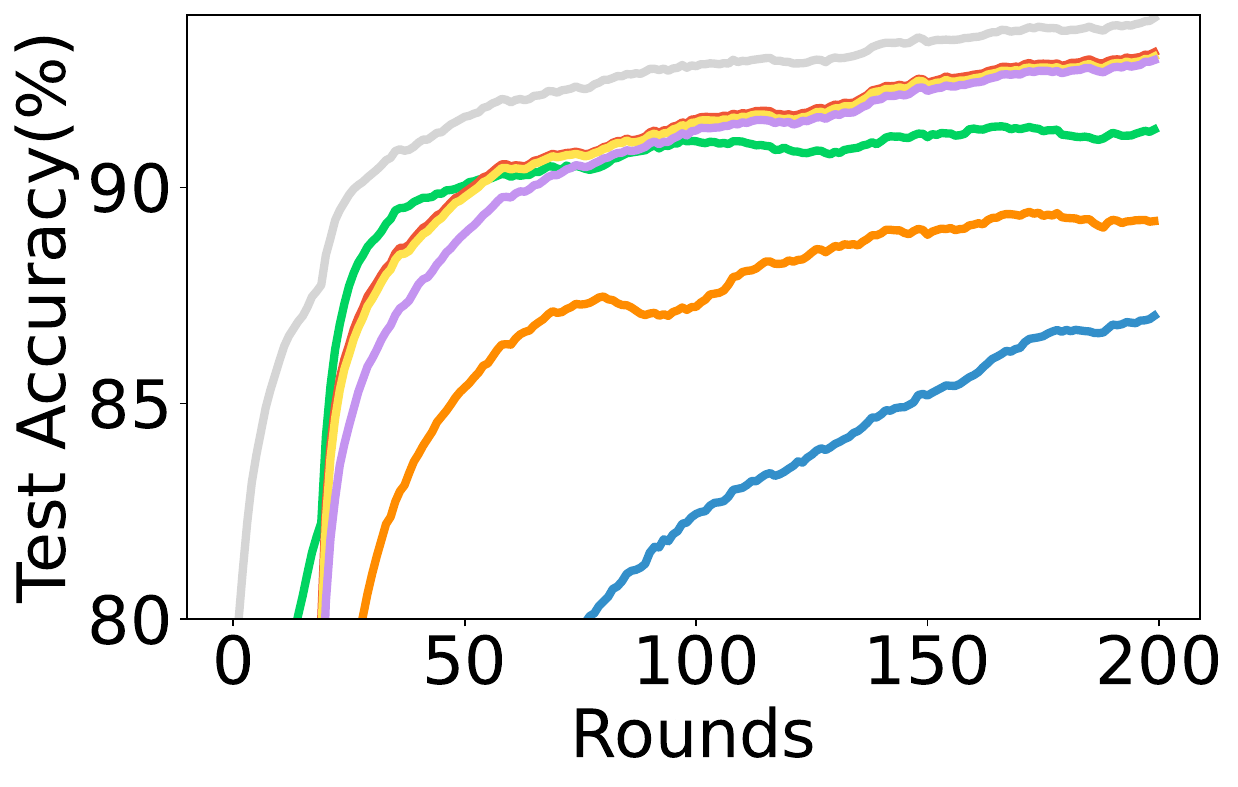}} \\
    {\includegraphics[scale=\myscaleplotsubplot]
    {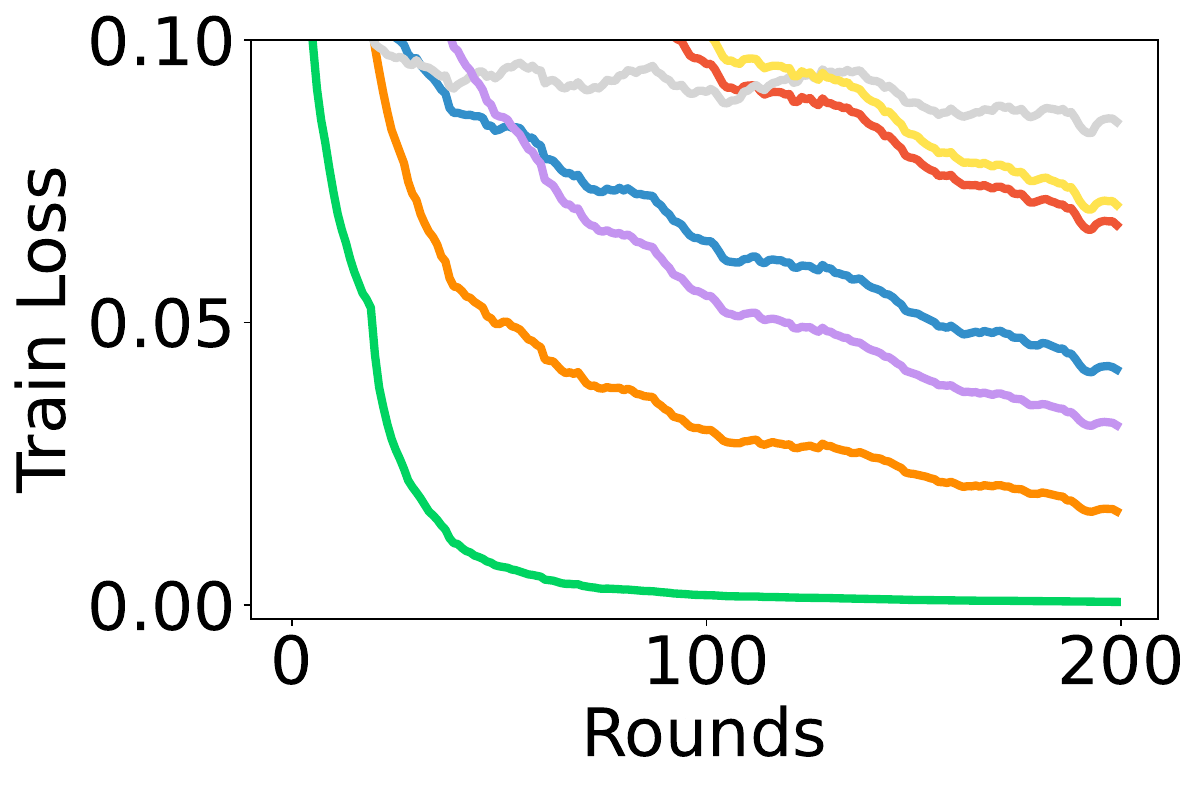}} &
    {\includegraphics[scale=\myscaleplotsubplot]
    {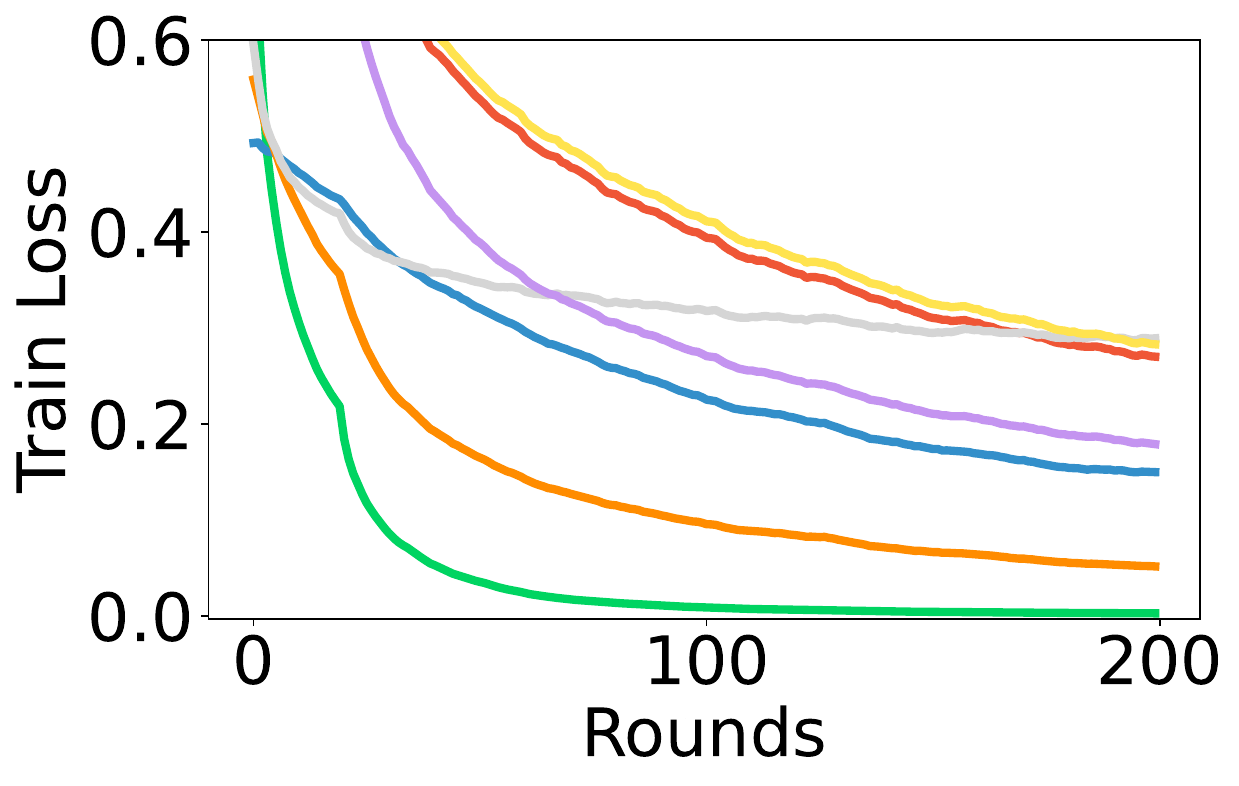}} &
    {\includegraphics[scale=\myscaleplotsubplot]
    {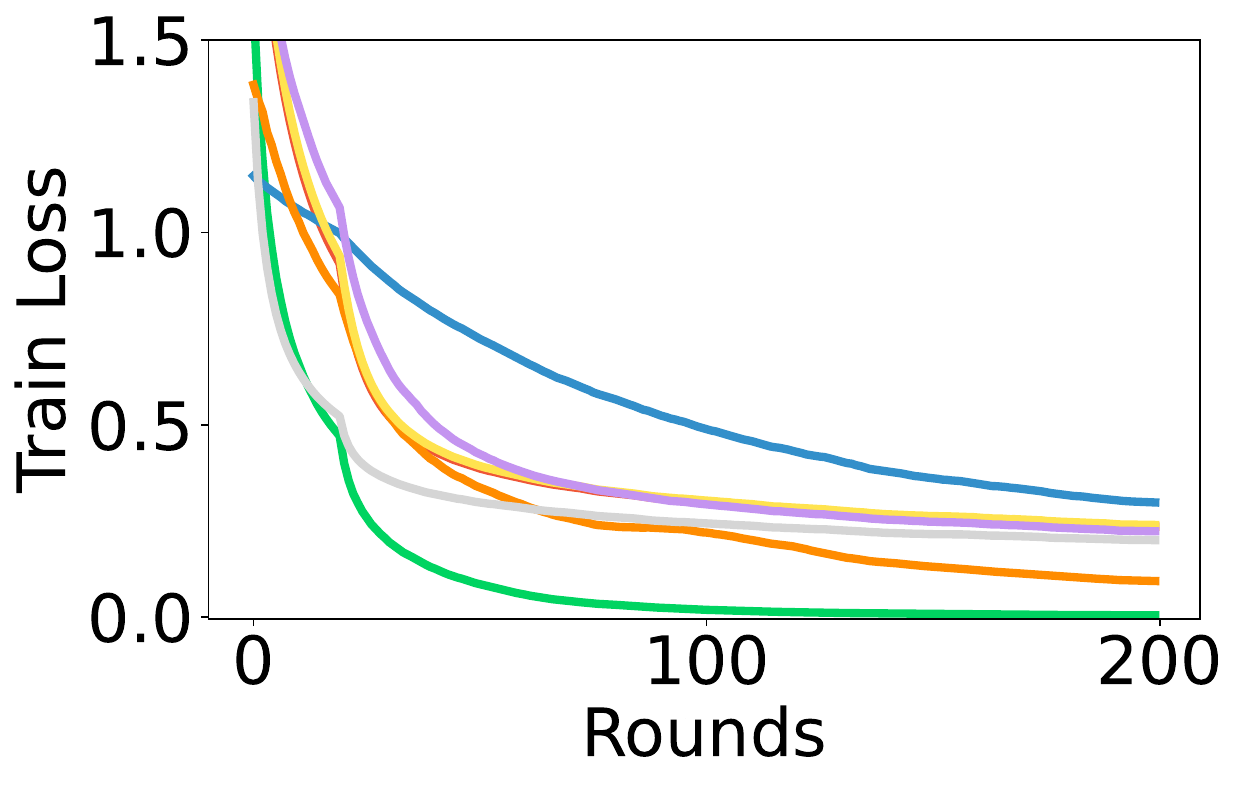}} \\
    High-pers  & Medium-pers & No-pers
    \end{tabular}
       \caption{Test accuracy (top row) and training loss (bottom row) for MNIST dataset for PFLEGO, FedAvg and FedPer over $T=200$ {communication} rounds, $I=100$ clients, and $r=20\%$ client participation per round. Each column corresponds to a degree of personalization  (high, medium, no personalization). 
        }
      \label{fig:main_mnist_plot}
  \end{center}
\end{figure*}

\begin{figure*}[ht]
\begin{center}
\begin{tabular}{ccc}
    \multicolumn{3}{c}{{\includegraphics[scale=\myscaleplottitle]
   {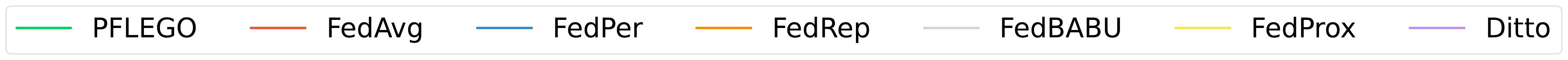}}}
   \end{tabular}\\
\begin{tabular}{ccc}
   {\includegraphics[scale=\myscaleplotsubplot]
   {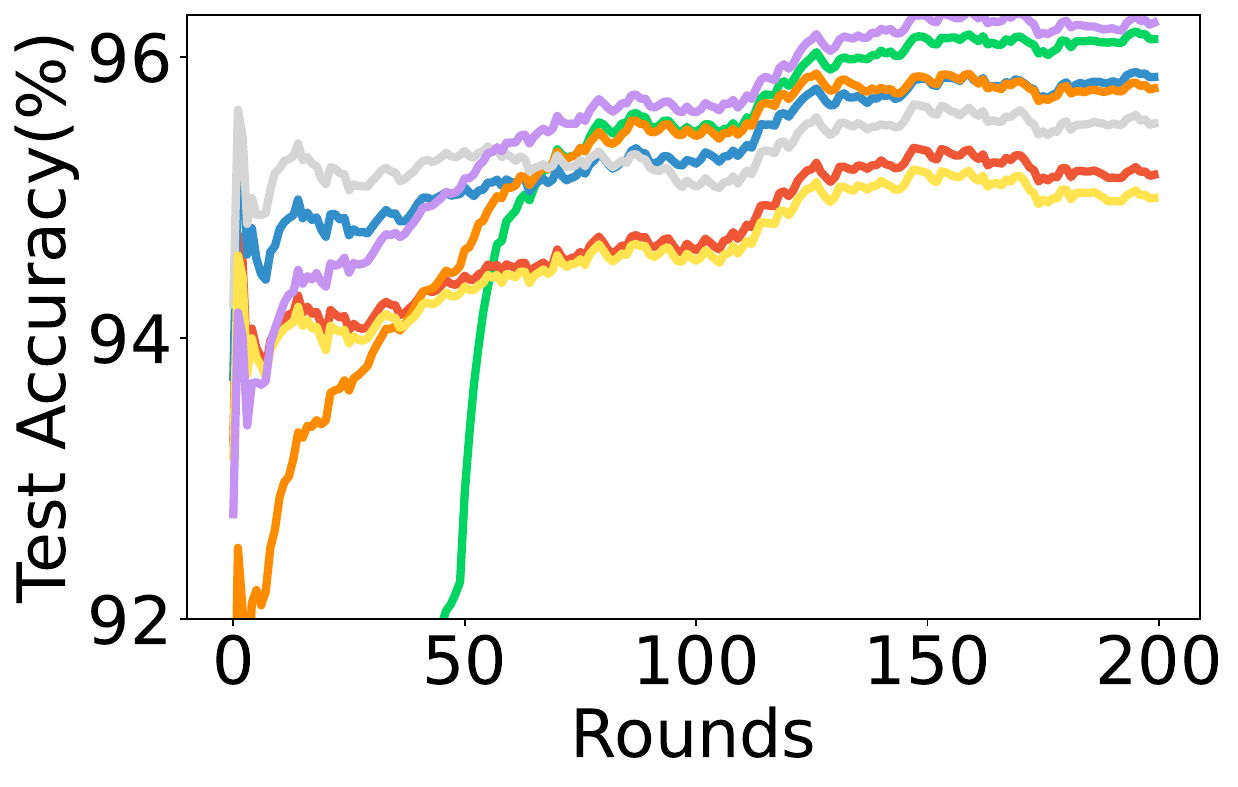}} &
   {\includegraphics[scale=\myscaleplotsubplot]
   {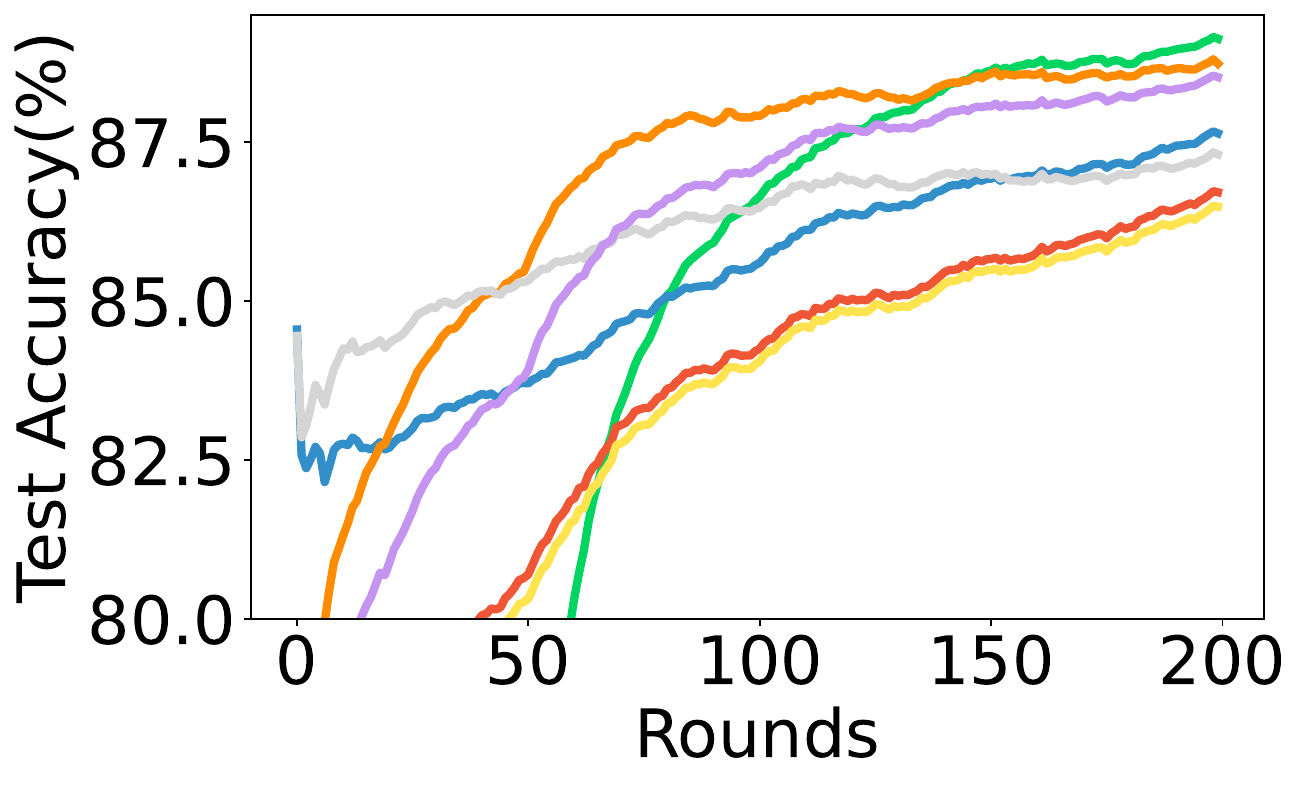}} &
   {\includegraphics[scale=\myscaleplotsubplot]
   {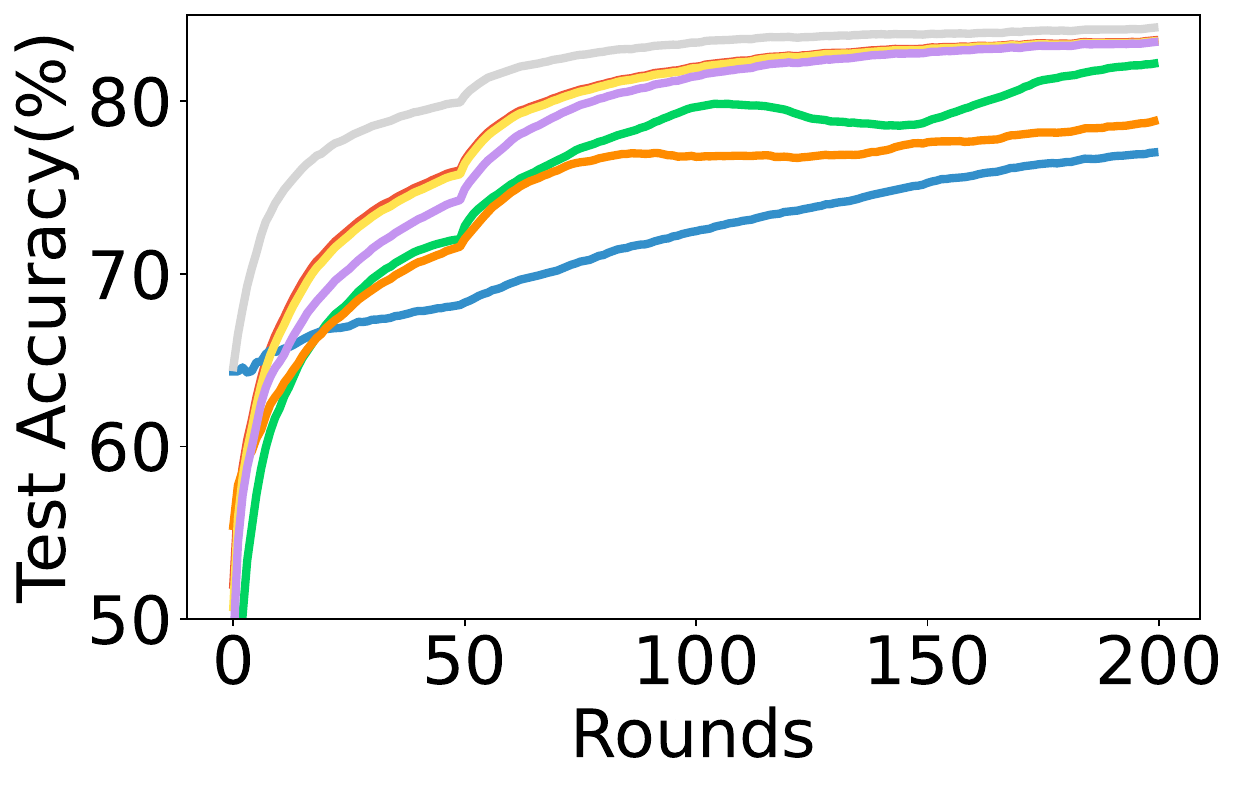}} \\
   {\includegraphics[scale=\myscaleplotsubplot]
   {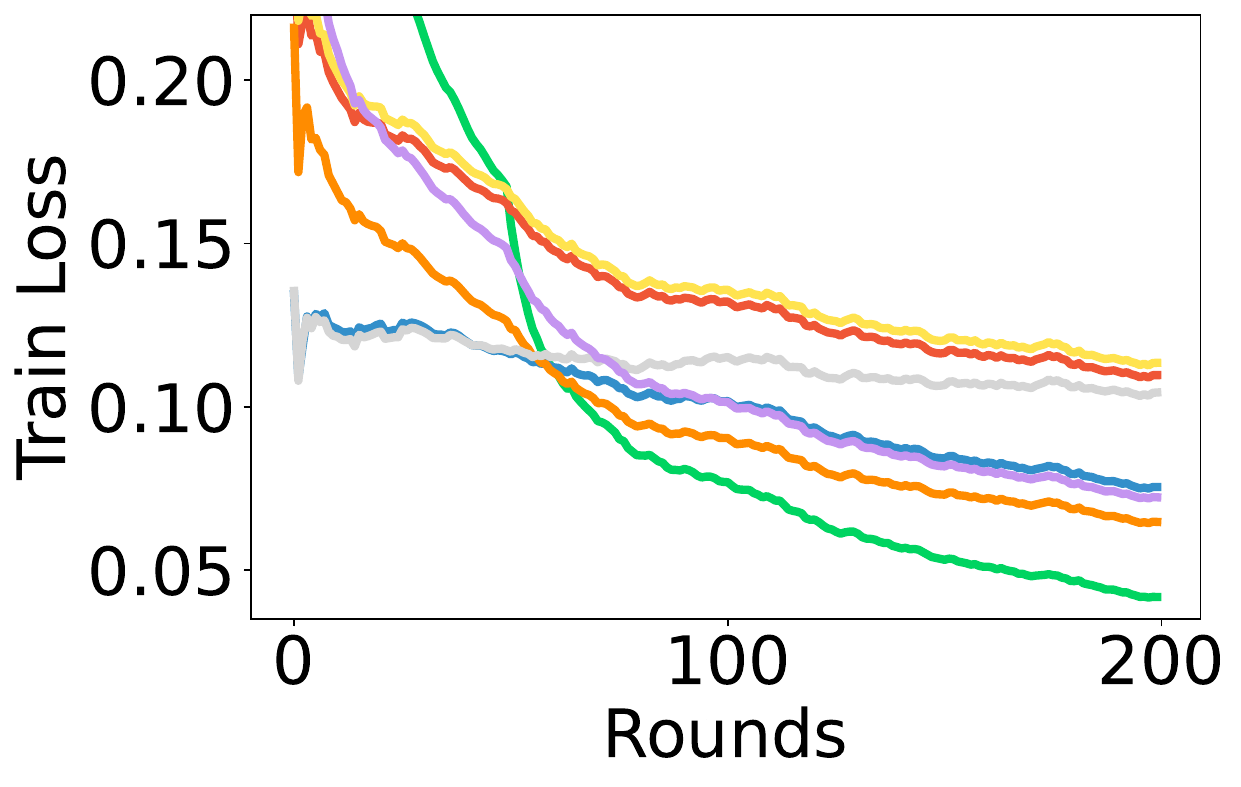}} &
   {\includegraphics[scale=\myscaleplotsubplot]
   {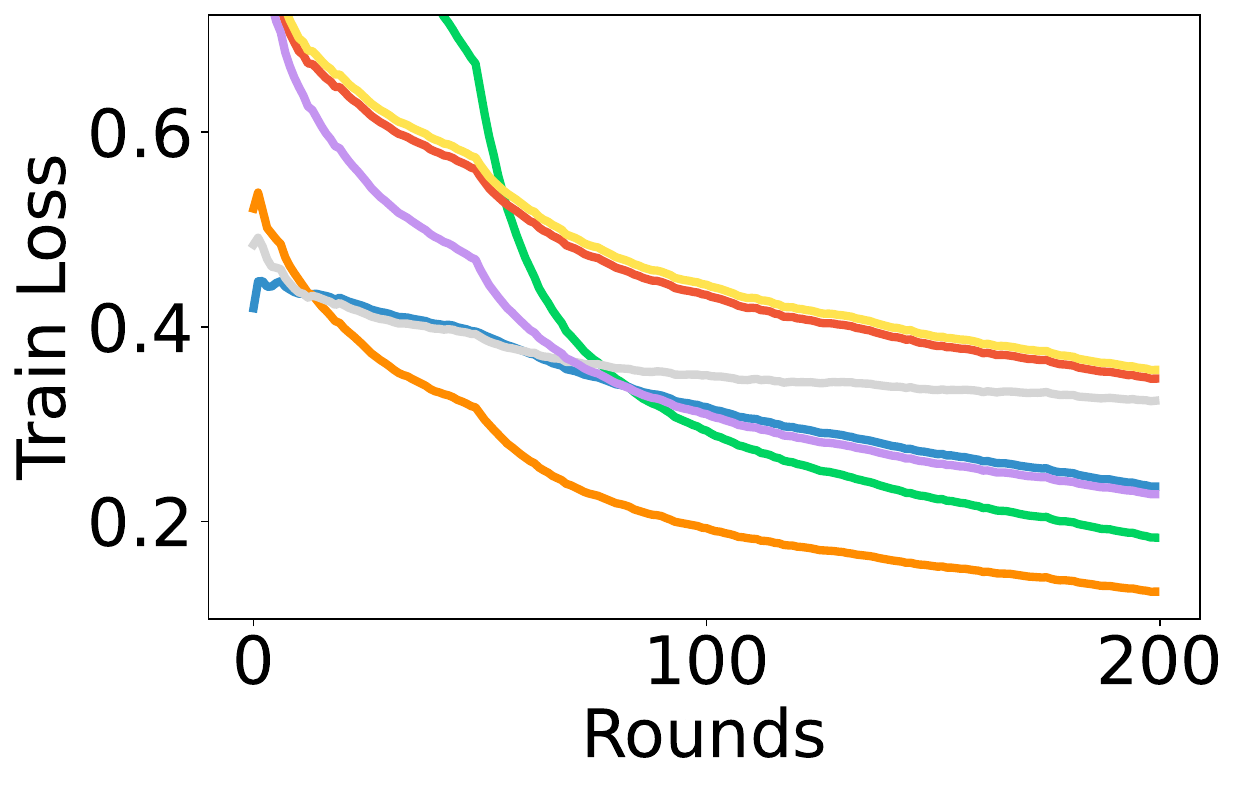}} &
   {\includegraphics[scale=\myscaleplotsubplot]
   {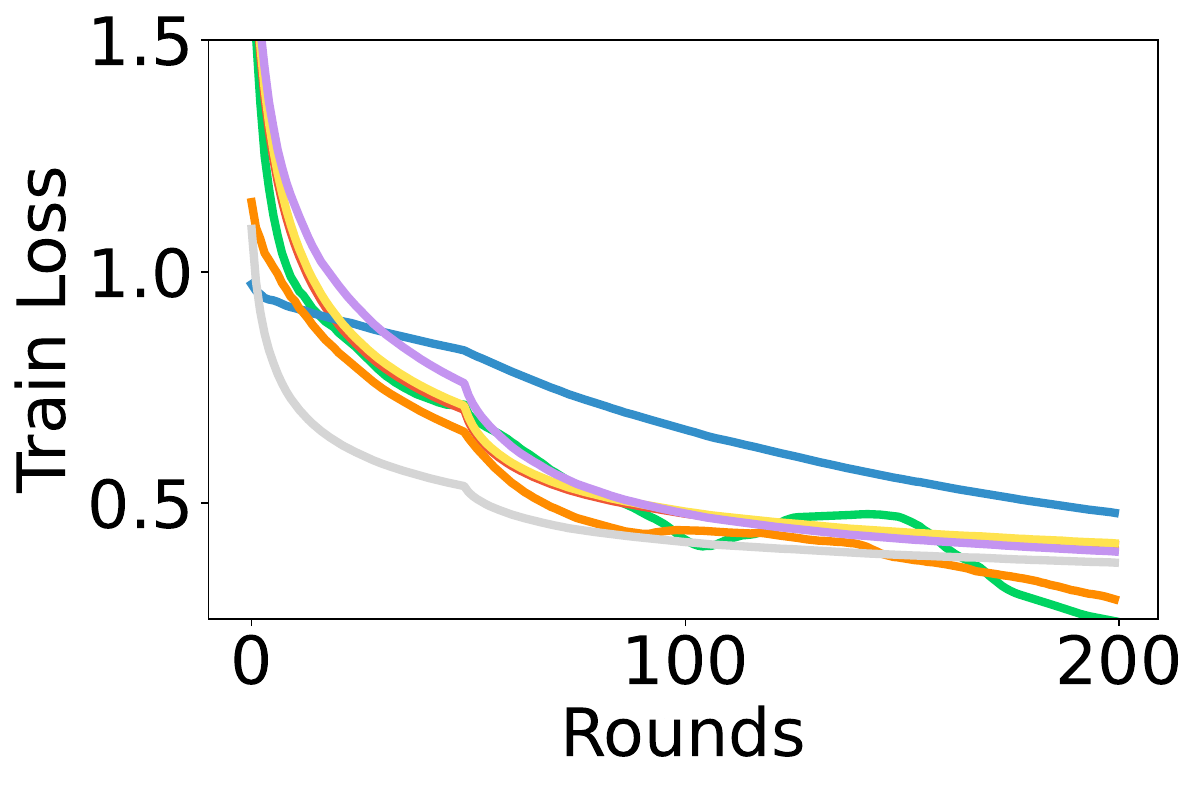}} \\
   High-pers  & Medium-pers & No-pers
   \end{tabular}
   \caption{Test accuracy (top row) and training loss (bottom row) over 200 {communication} rounds of PFLEGO, FedAvg and FedPer on Fashion-MNIST. Each of the three columns corresponds to a certain degree of personalization. Settings: 100 clients, 200 rounds 50 inner steps, r=20.
    }
  \label{fig:main_fashion_mnist_plot}
  \end{center}
\end{figure*}

\begin{figure*}[ht]
    \begin{center}
    \begin{tabular}{ccc}
     \multicolumn{3}{c}{{\includegraphics[scale=\myscaleplottitle]
    {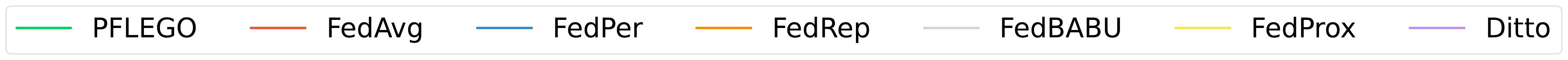}}}
    \end{tabular}\\
    \begin{tabular}{ccc}
    {\includegraphics[scale=\myscaleplotsubplot]
    {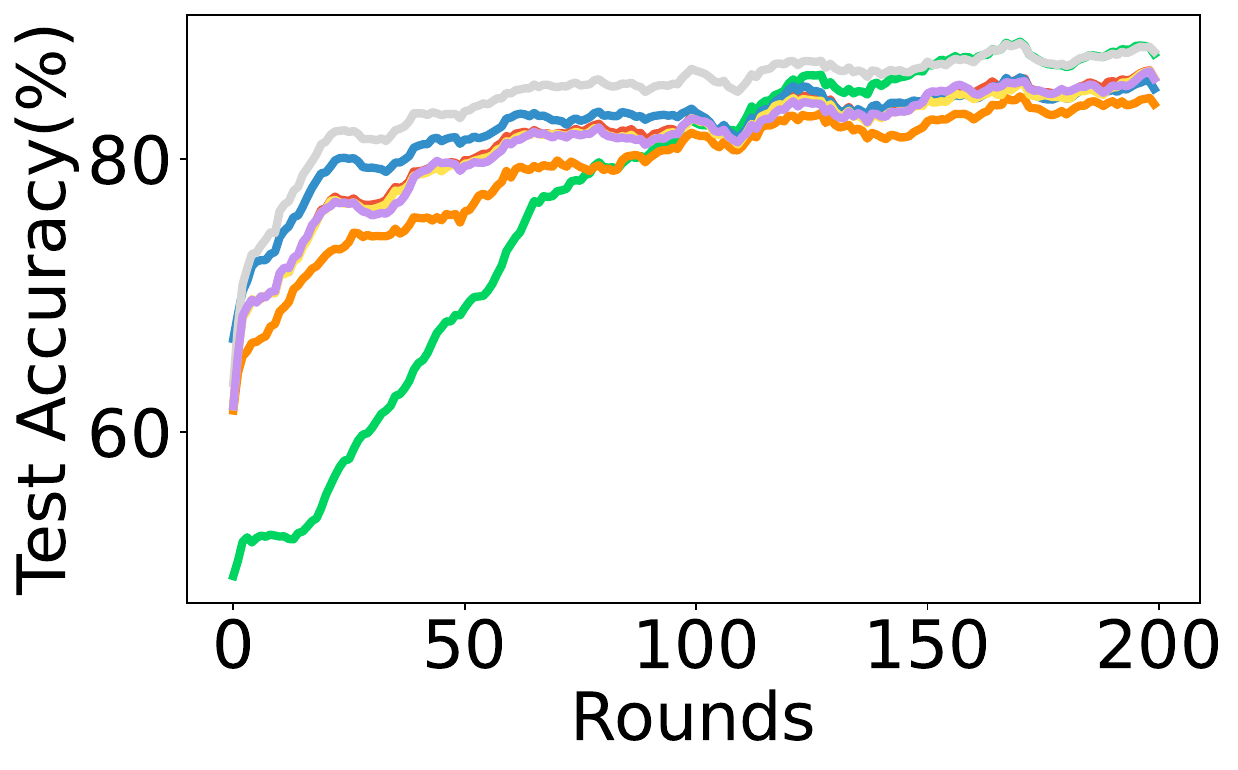}} &
    {\includegraphics[scale=\myscaleplotsubplot]
    {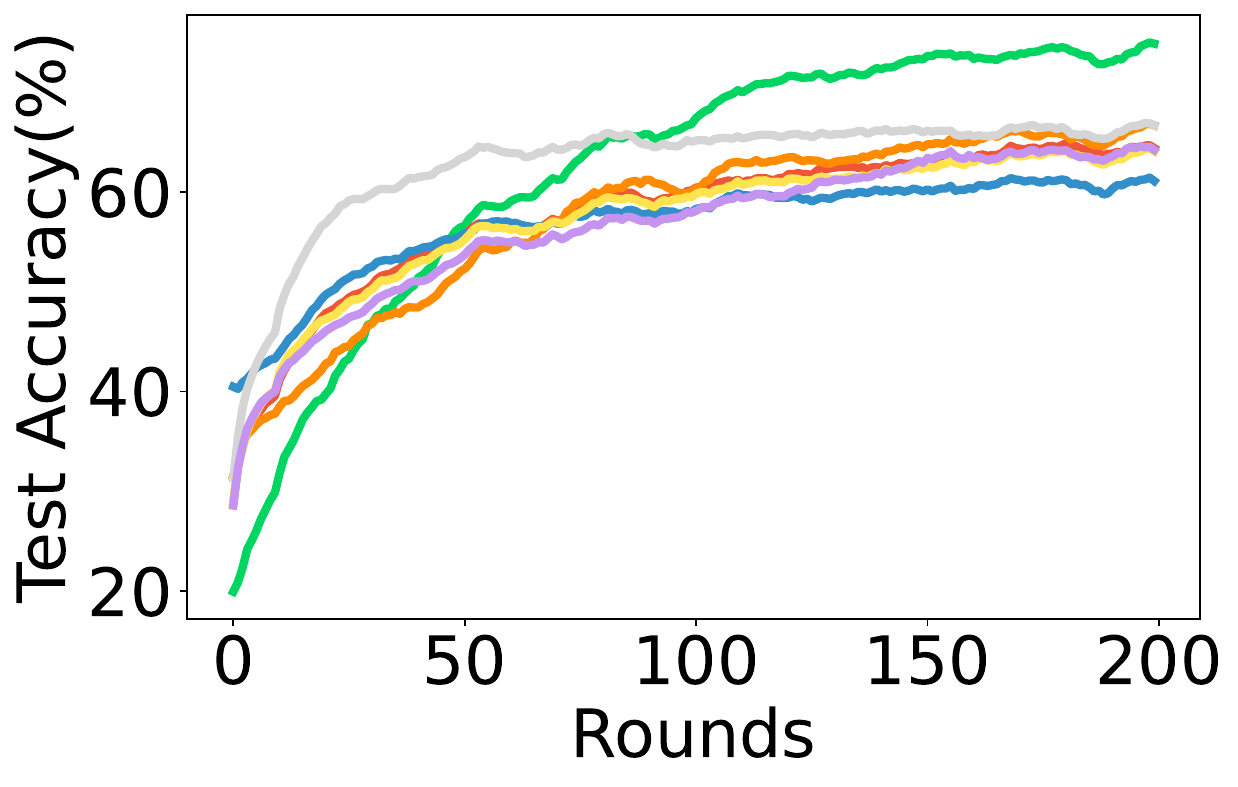}} &
    {\includegraphics[scale=\myscaleplotsubplot]
    {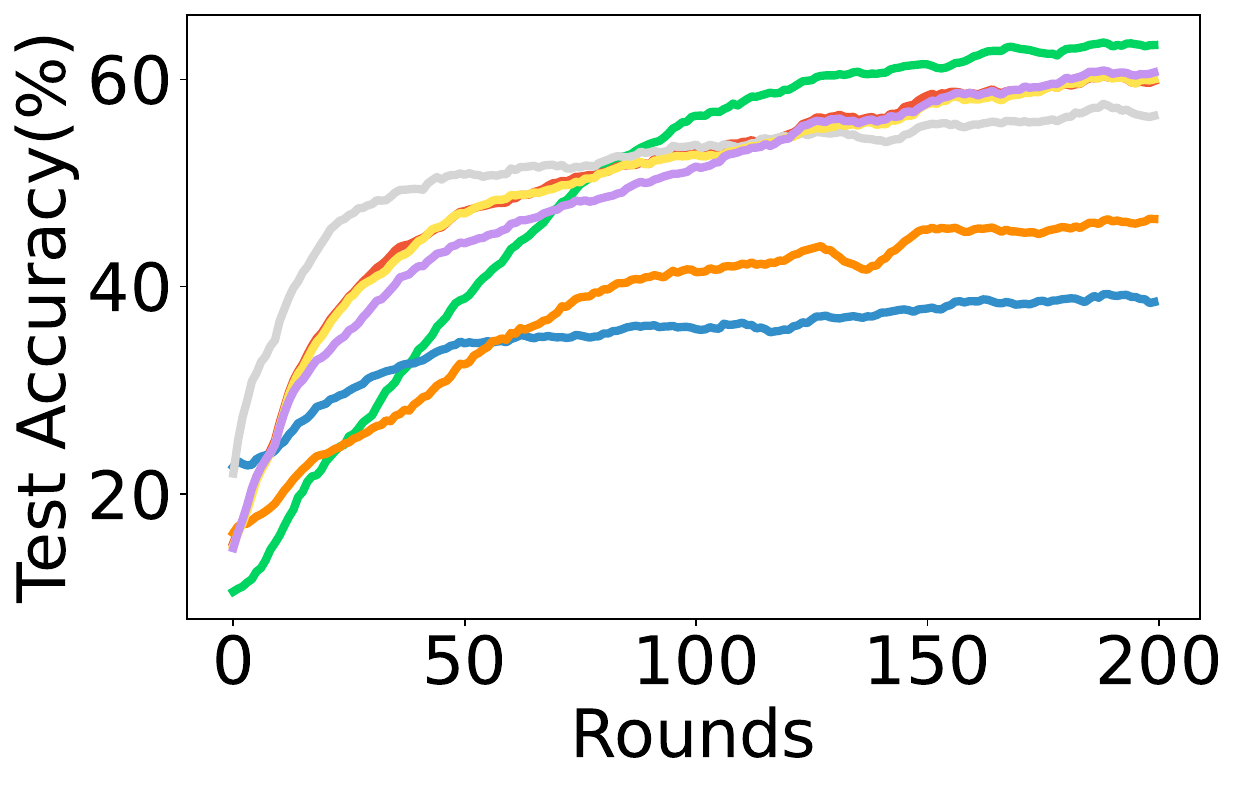}} \\
    {\includegraphics[scale=\myscaleplotsubplot]
    {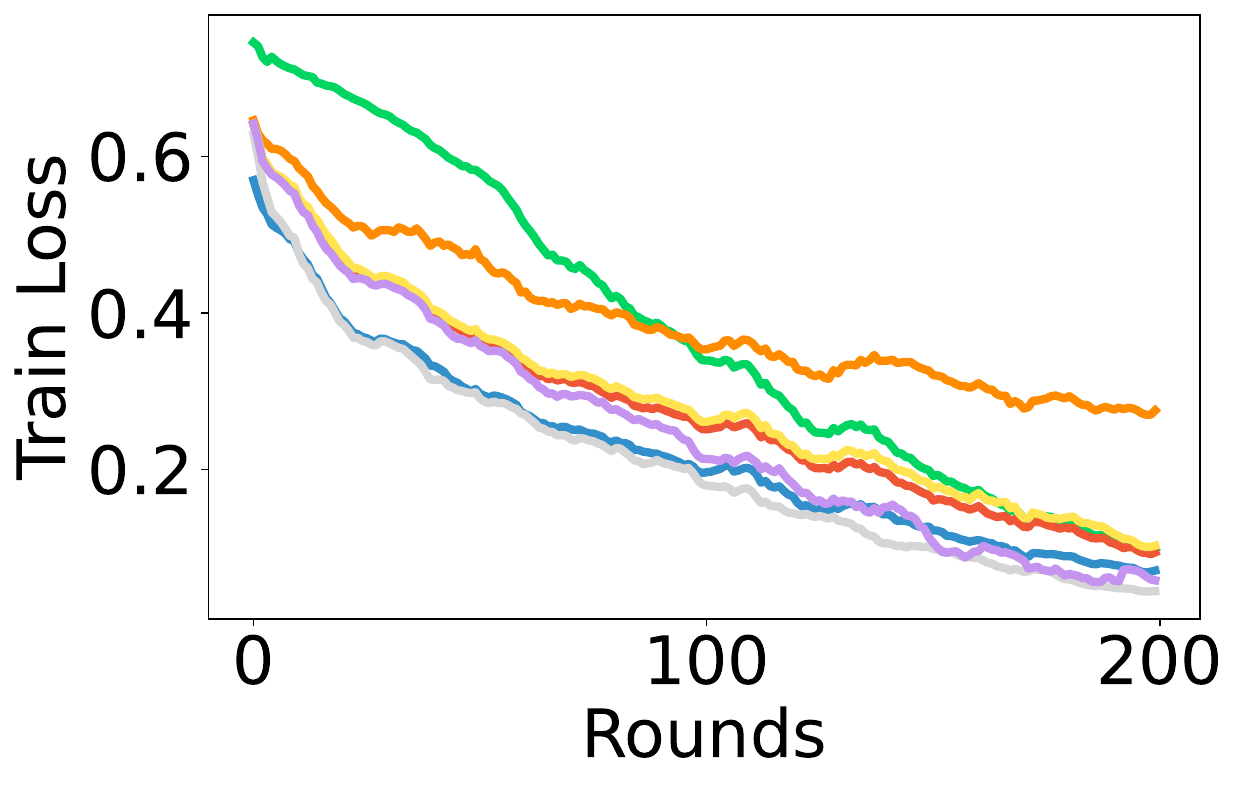}} &
    {\includegraphics[scale=\myscaleplotsubplot]
    {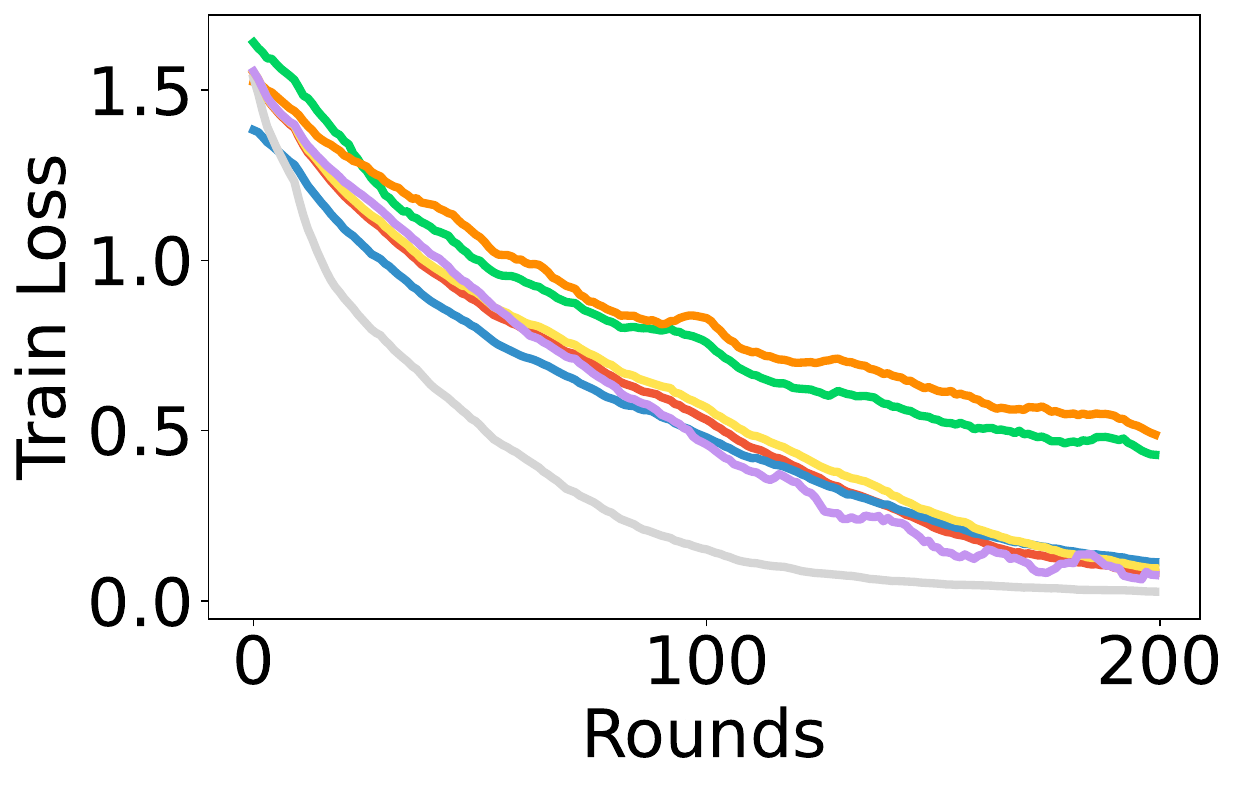}} &
    {\includegraphics[scale=\myscaleplotsubplot]
    {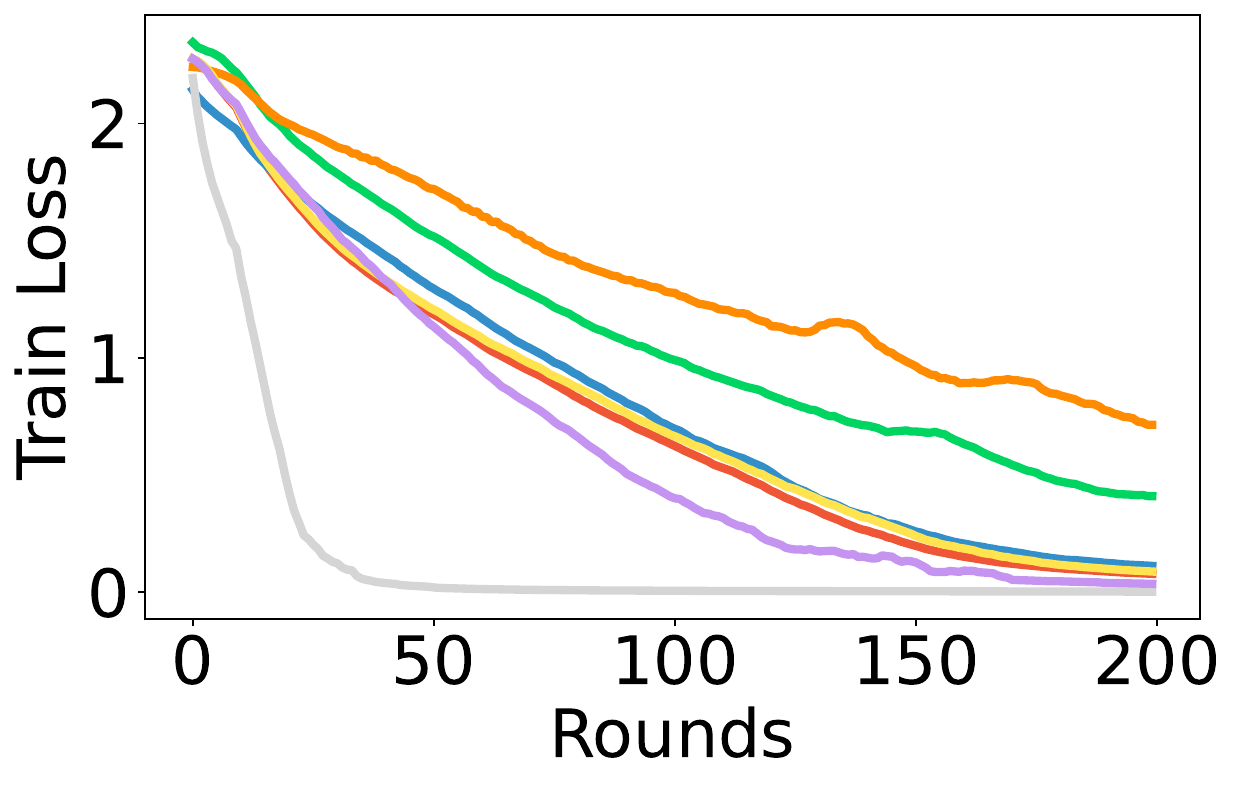}} \\
    High-pers  & Medium-pers & No-pers
    \end{tabular}
   \caption{Test accuracy (top row) and training loss (bottom row) over 200 rounds of PFLEGO, FedAvg and FedPer with CNN model on CIFAR-10 dataset. Each of the three columns corresponds to a certain degree of personalization. Settings: 100 clients, 200 {communication} rounds 50 inner steps, r=20.
    }
  \label{fig:main_cifar10_plot}
    \end{center}
\end{figure*}  

\begin{figure*}[!htb]
    \begin{center}
    \begin{tabular}{ccc}
     \multicolumn{3}{c}{{\includegraphics[scale=\myscaleplottitle]
    {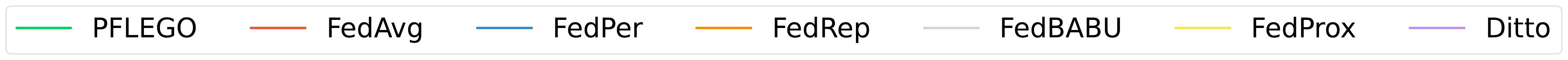}}}
    \end{tabular}\\
    \begin{tabular}{ccc}
    {\includegraphics[scale=\myscaleplotsubplot]
    {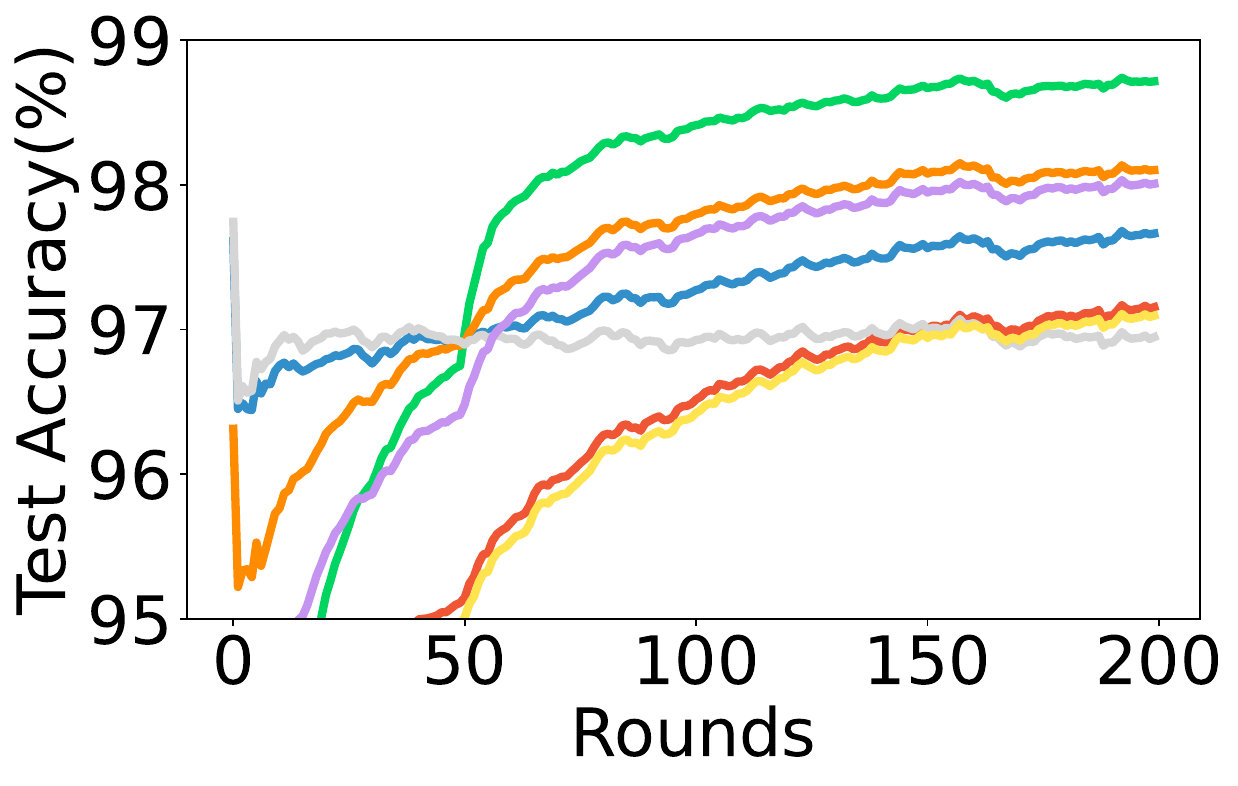}} &
    {\includegraphics[scale=\myscaleplotsubplot]
    {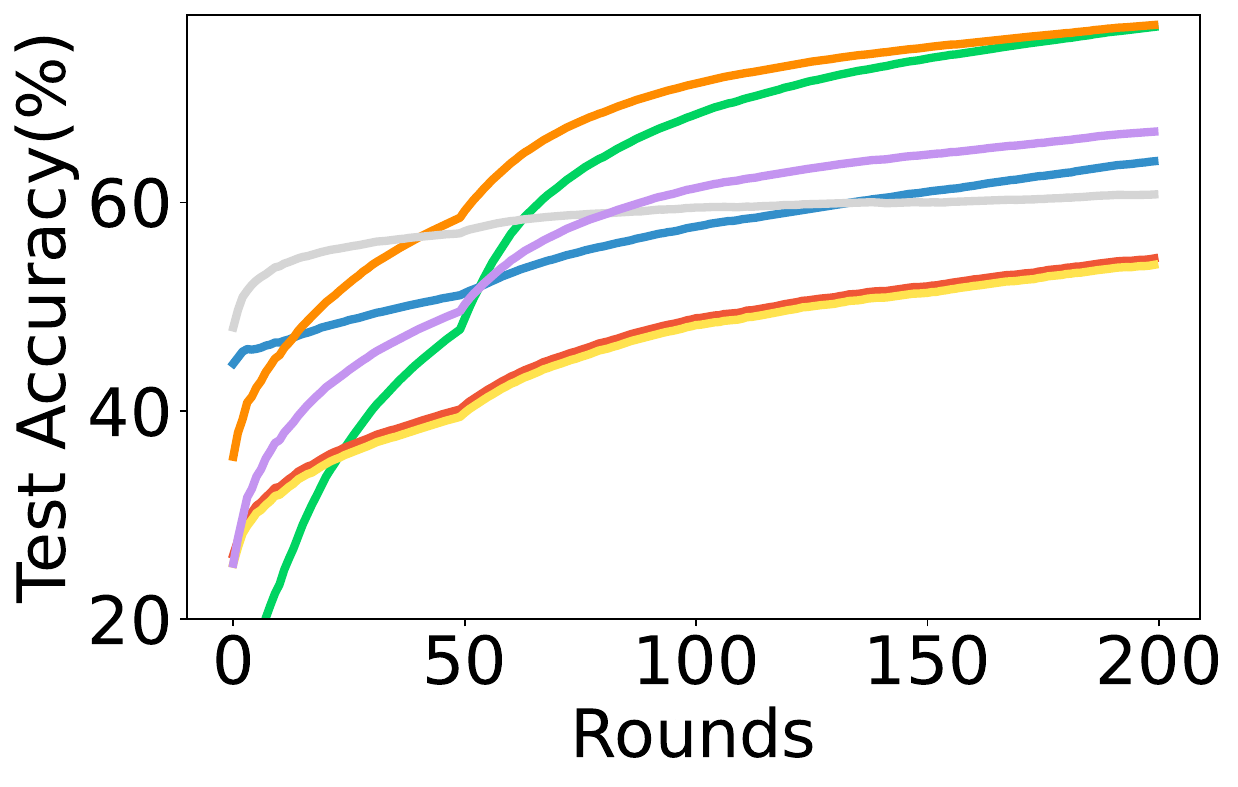}} &
    {\includegraphics[scale=\myscaleplotsubplot]
    {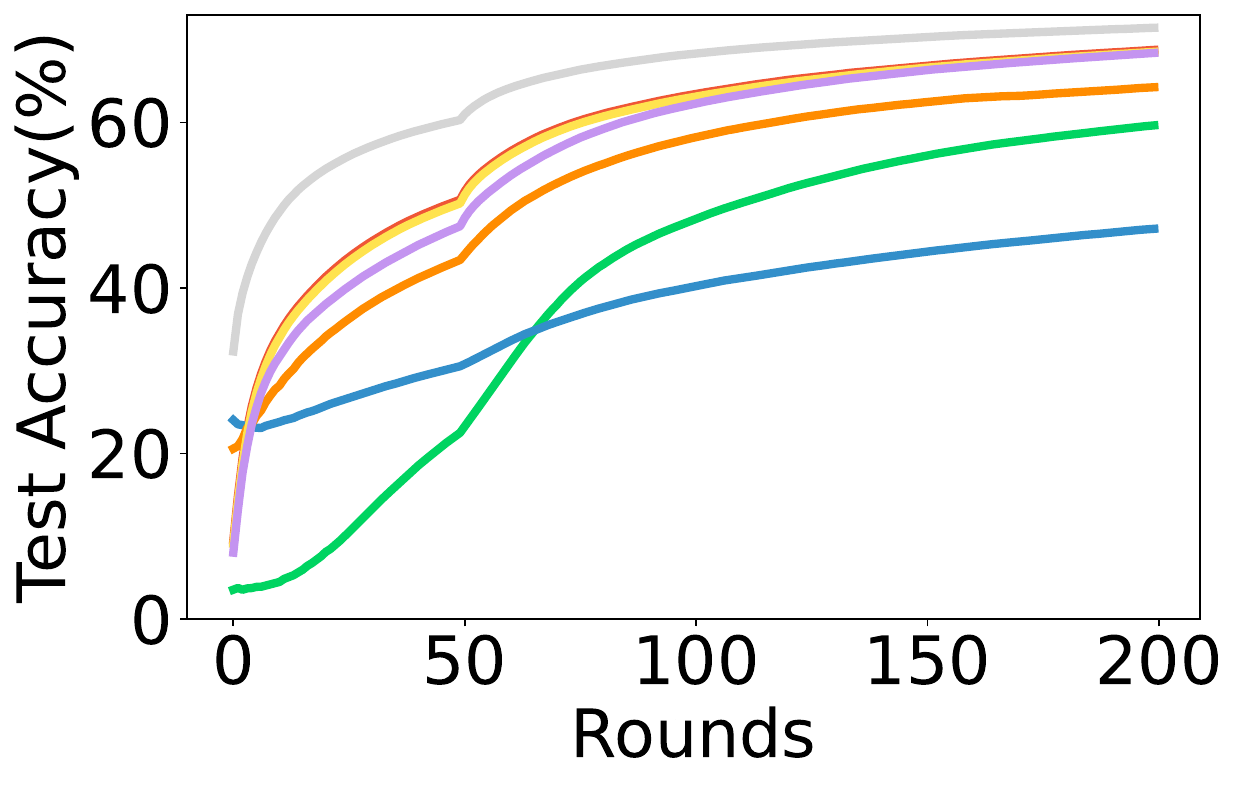}} \\
    {\includegraphics[scale=\myscaleplotsubplot]
    {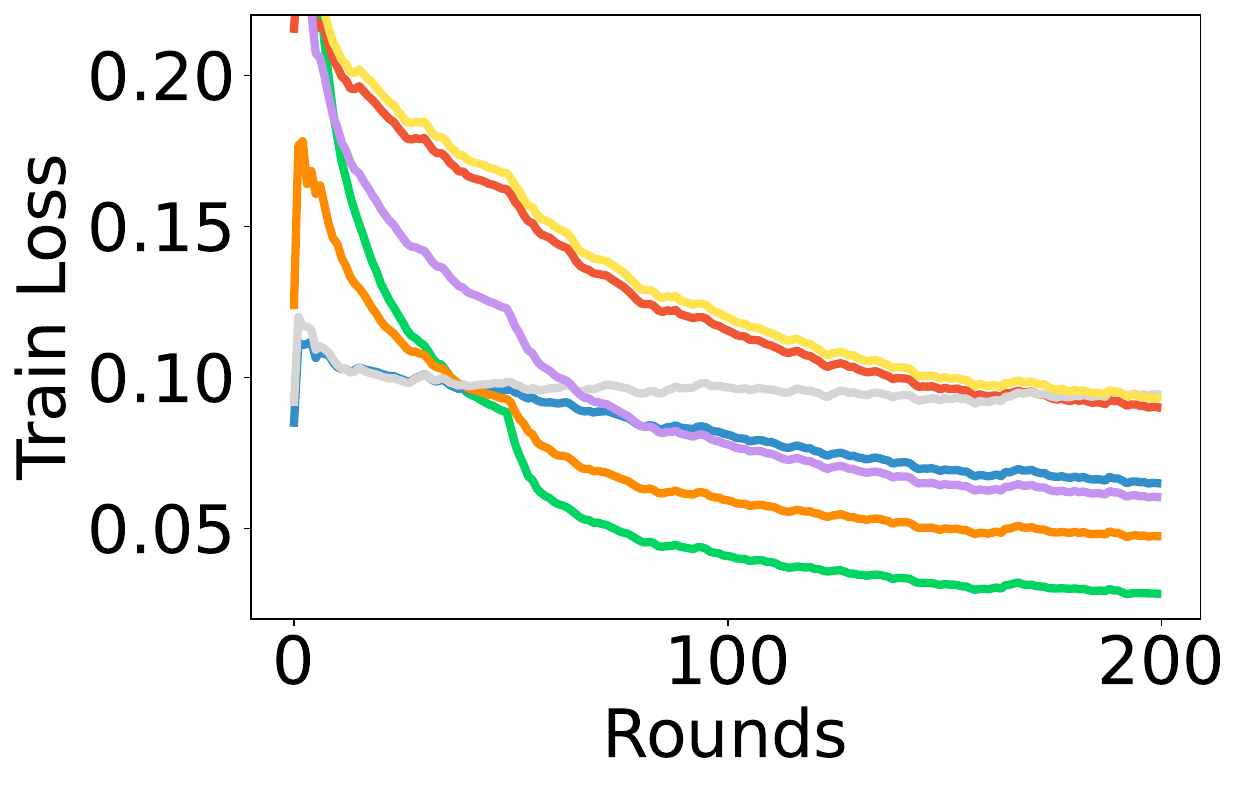}} &
    {\includegraphics[scale=\myscaleplotsubplot]
    {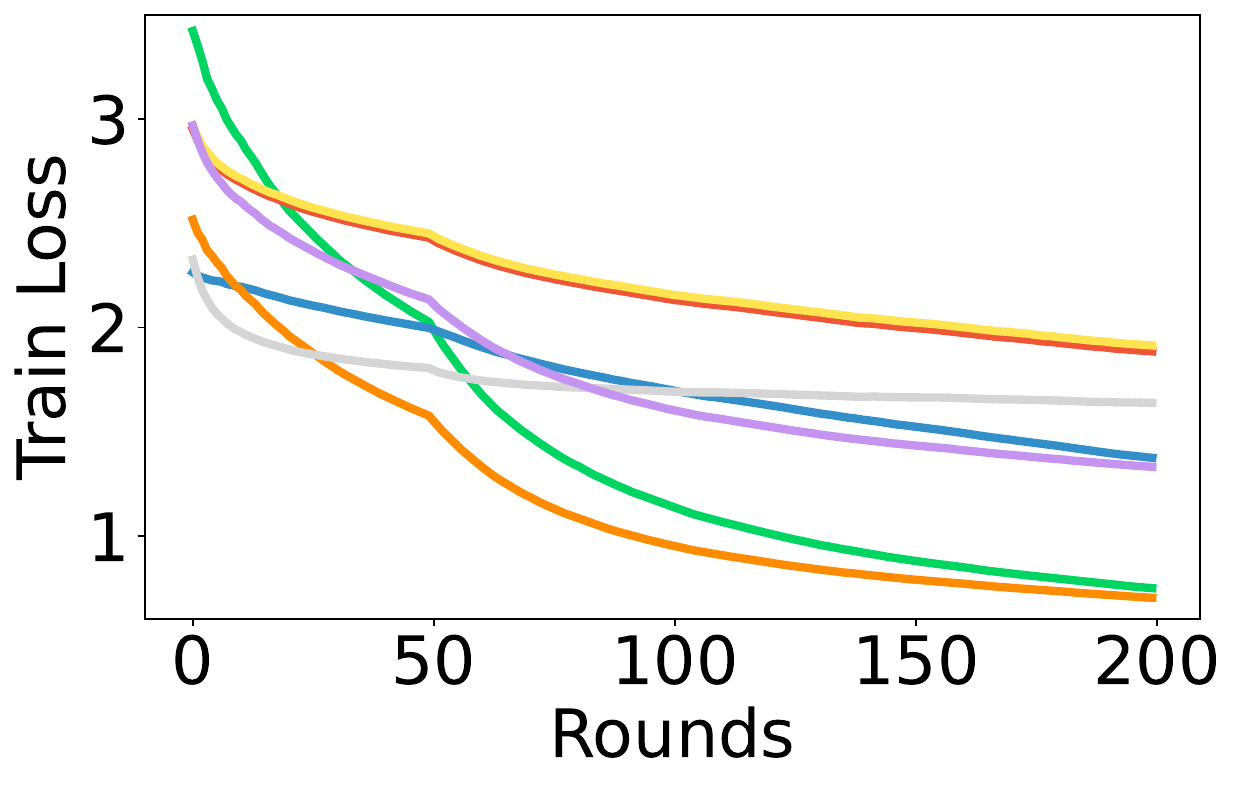}} &
    {\includegraphics[scale=\myscaleplotsubplot]
    {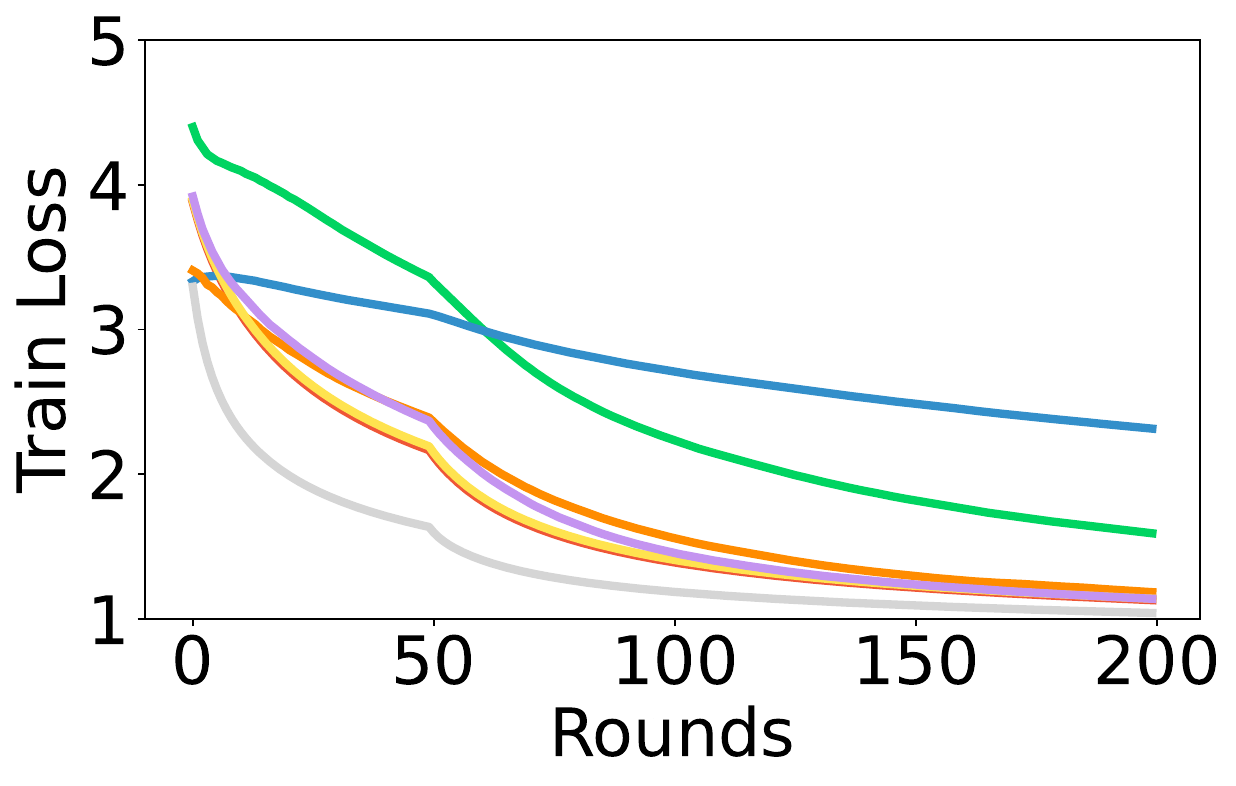}} \\
    High-pers  & Medium-pers & No-pers
    \end{tabular}
   \caption{Test accuracy (top row) and training loss (bottom row) over 200 rounds of PFLEGO, FedAvg and FedPer with MLP model on EMNIST dataset. Each of the three columns corresponds to a certain degree of personalization. Settings: 100 clients, 200 {communication} rounds 50 inner steps, r=20.
    }
  \label{fig:main_emnist_plot}
    \end{center}
\end{figure*}

\begin{figure*}[ht]
\begin{center}
\begin{tabular}{ccc}
 \multicolumn{3}{c}{{\includegraphics[scale=0.26]
{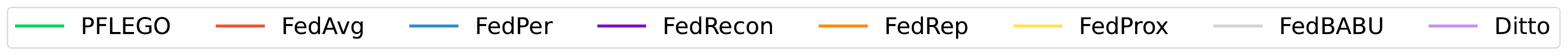}}}
\end{tabular}\\
\begin{tabular}{ccc}
{\includegraphics[scale=0.207]
{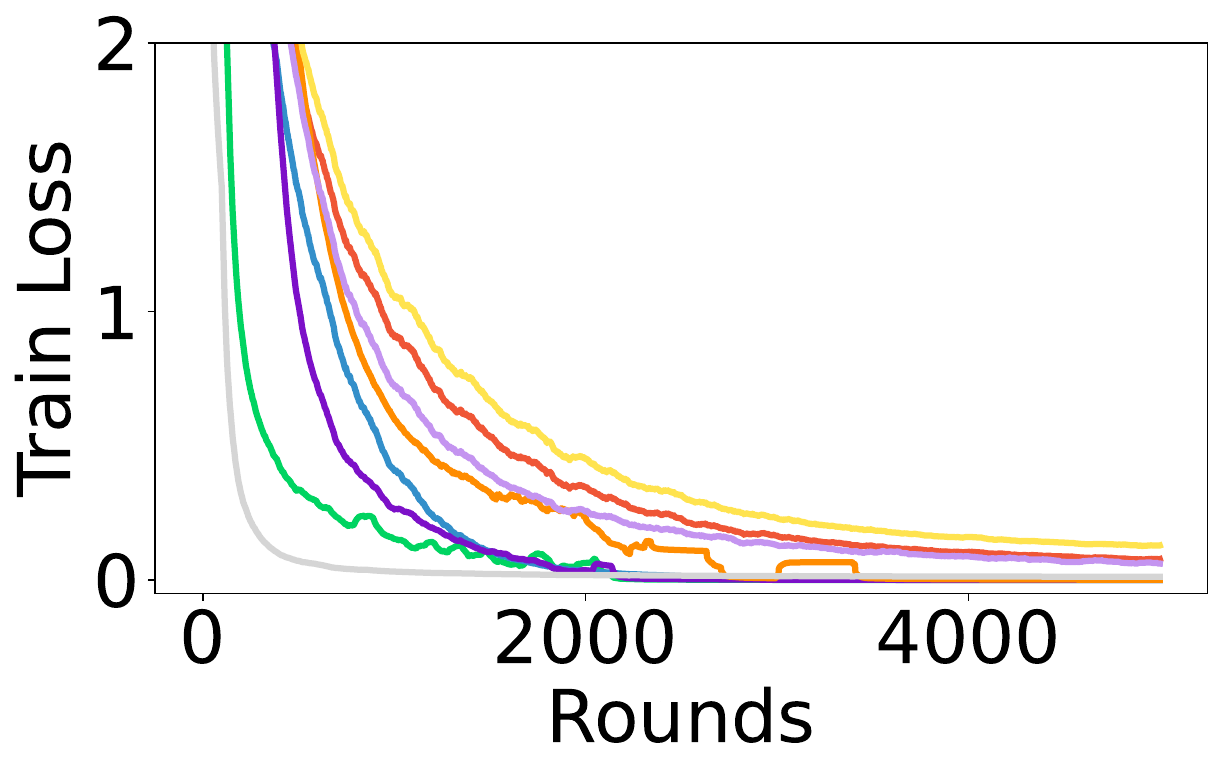}} &
{\includegraphics[scale=0.207]
{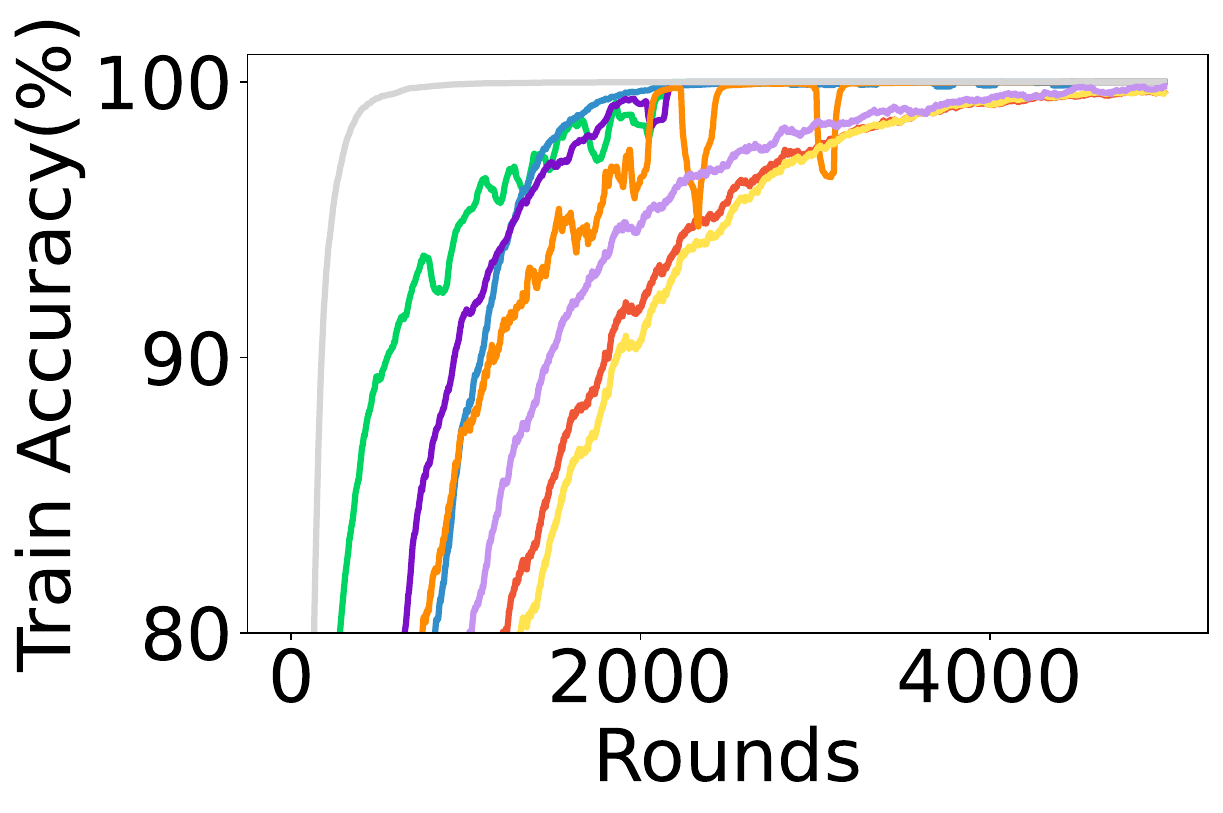}} &
{\includegraphics[scale=0.207]
{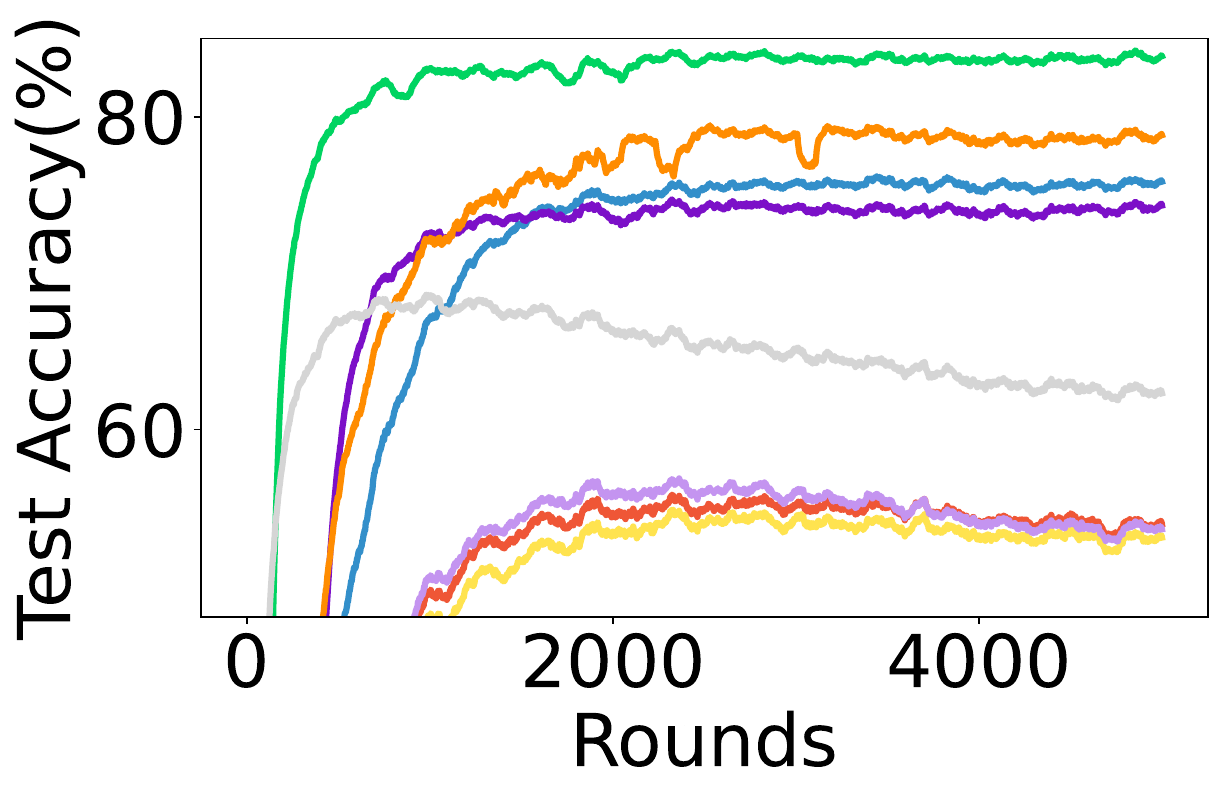}}
\end{tabular}
 \caption{Training loss (left), training accuracy (middle) and test accuracy (right) for the Omniglot dataset. All FL methods were run for $5000$ {communication} rounds, with $50$ inner client steps and $r=20\%$ client participation per round.
 }
  \label{fig:main_omniglot_plot}
  \end{center}
\end{figure*}  

\begin{figure*}[ht]
    \begin{center}
    \begin{tabular}{ccc}
    \multicolumn{3}{c}{{\includegraphics[scale=0.27]
    {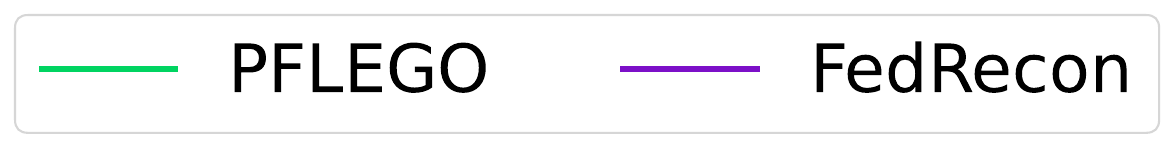}}}
    \end{tabular}\\

    \begin{tabular}{ccc}
{\includegraphics[scale=0.20]
{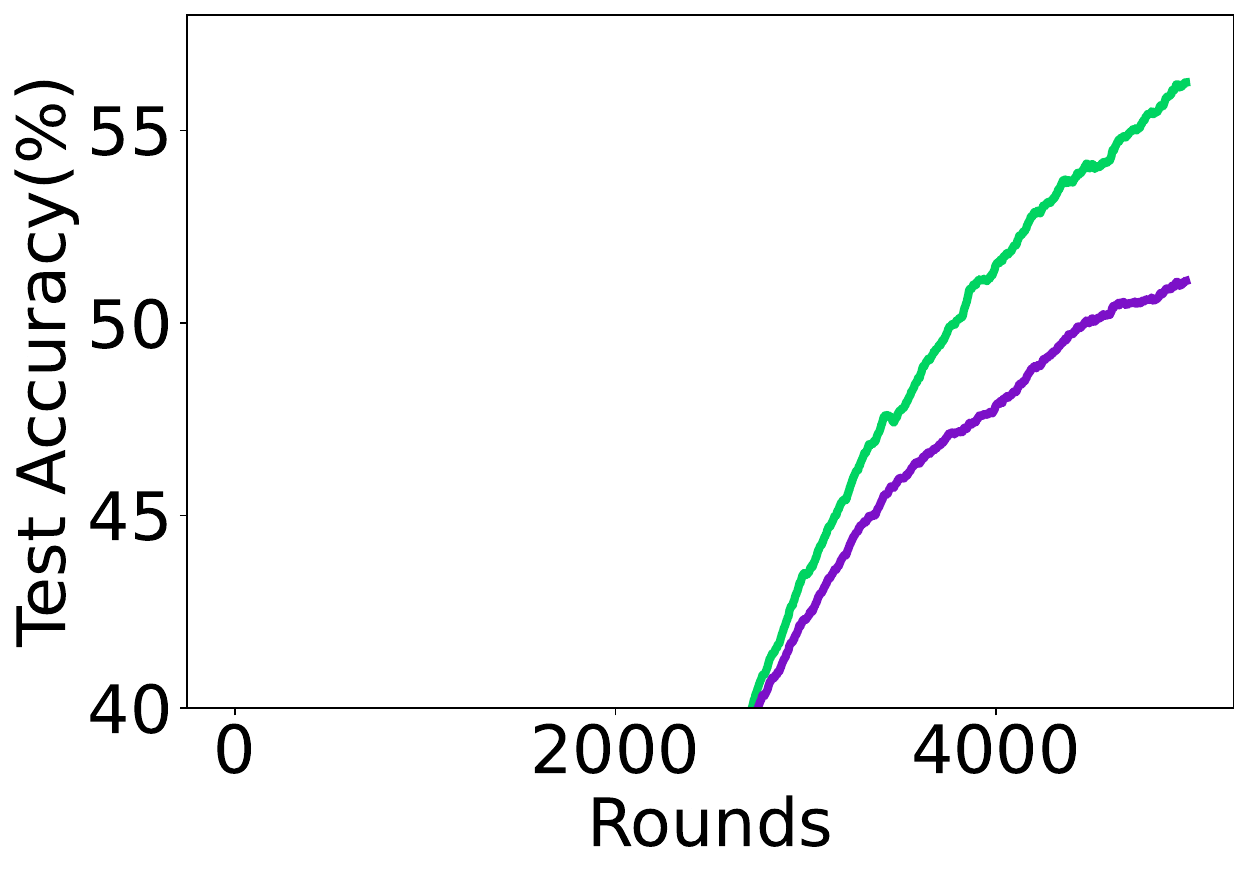}} &
{\includegraphics[scale=0.20]
{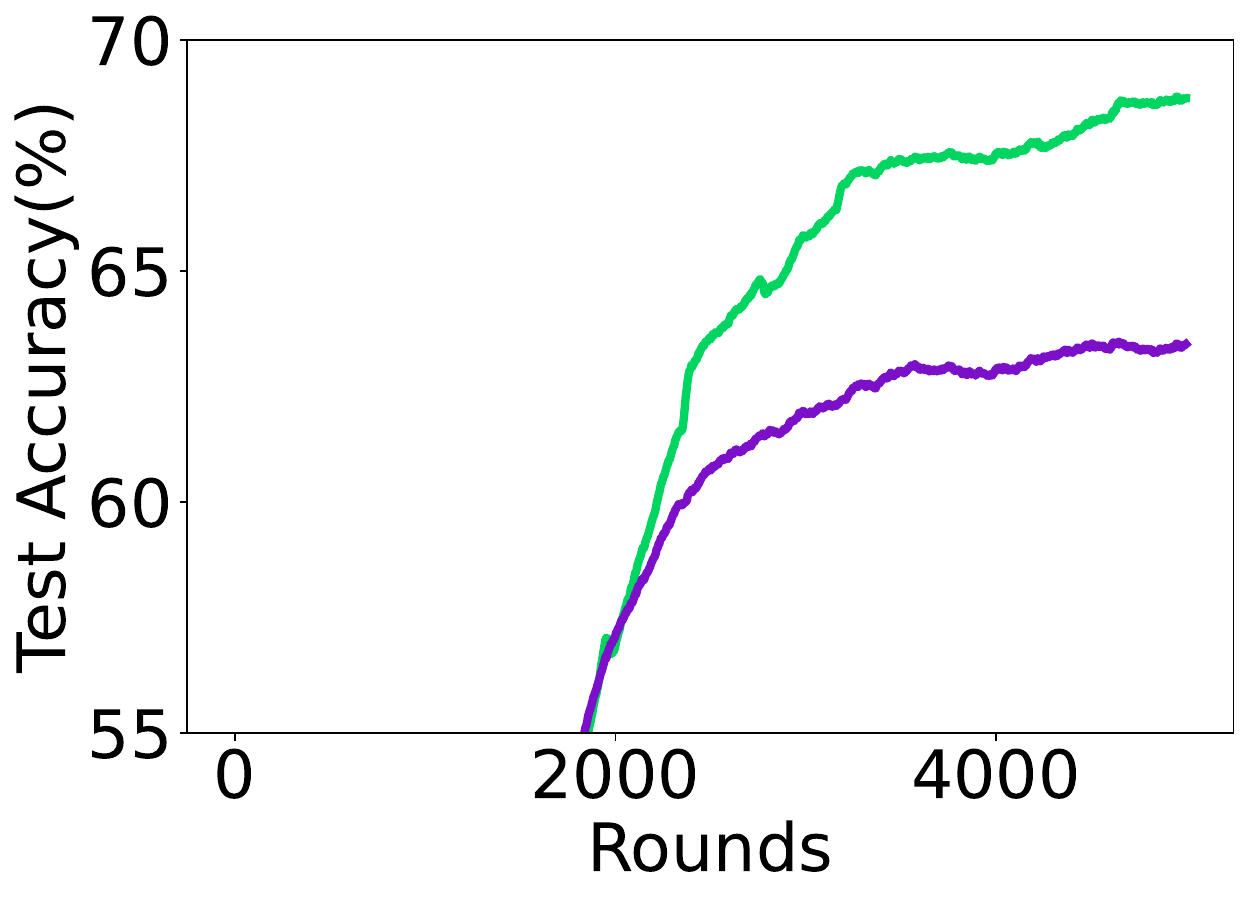}} &
{\includegraphics[scale=0.20]
{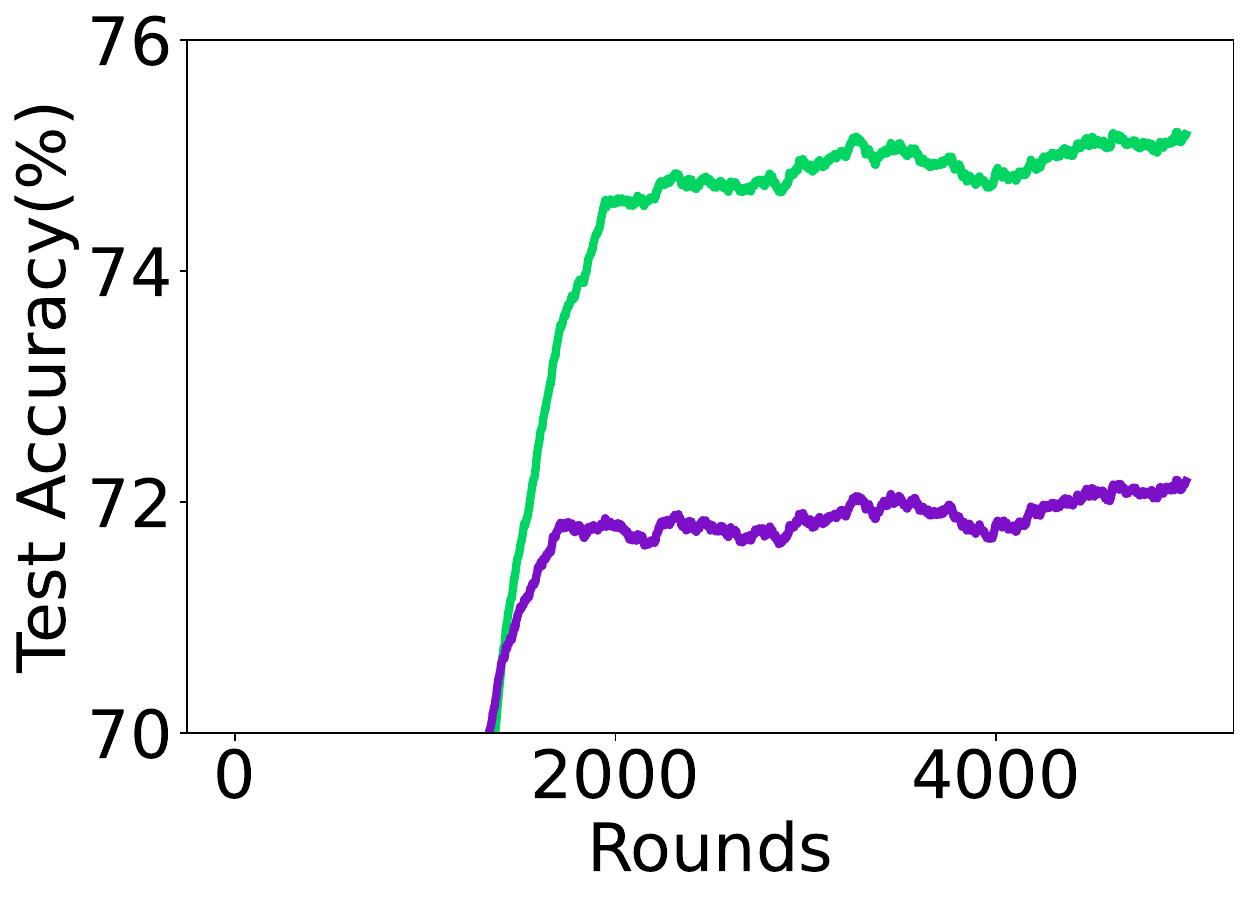}}\\

(a) $\tau=5$ & (b) $\tau=10$ & (c) $\tau=25$\\

{\includegraphics[scale=0.20]
{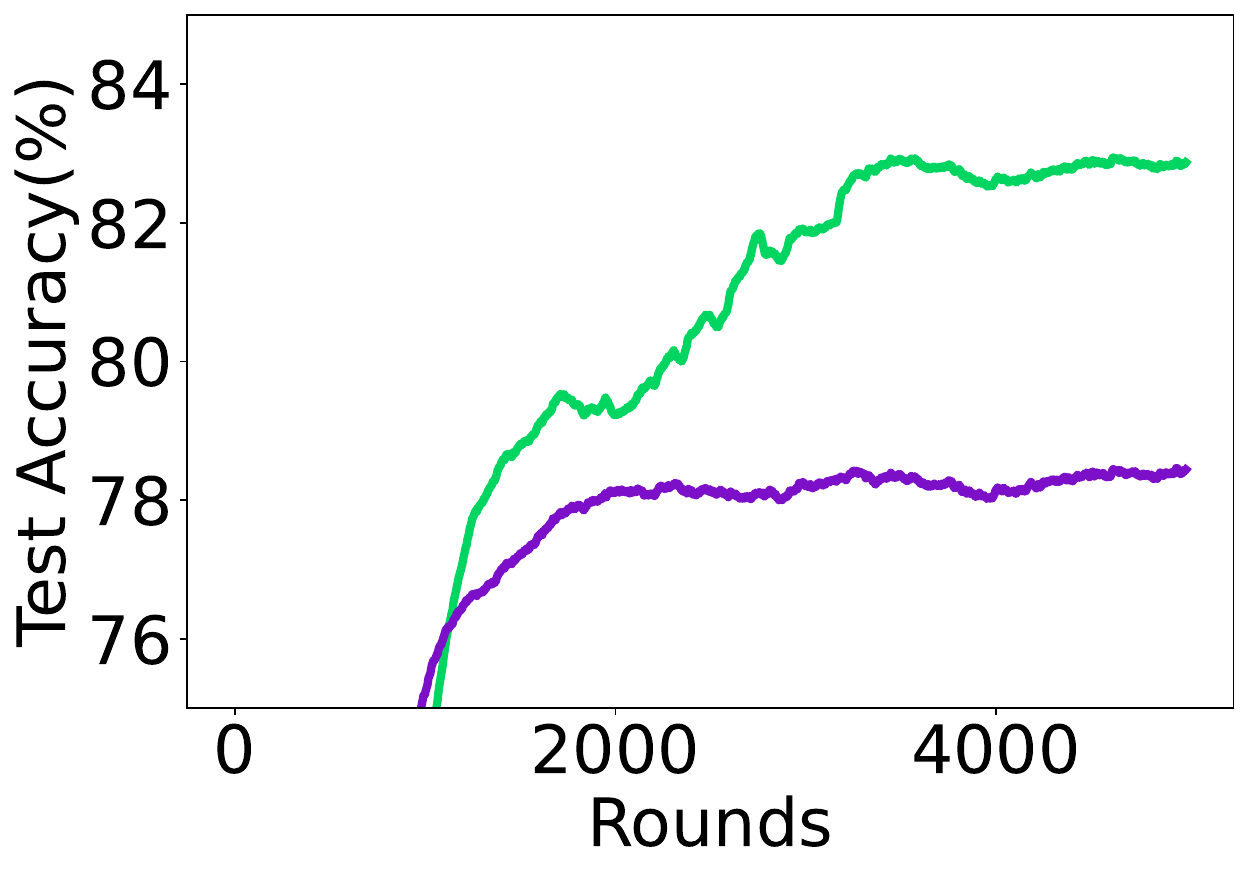}} &

{\includegraphics[scale=0.20]
{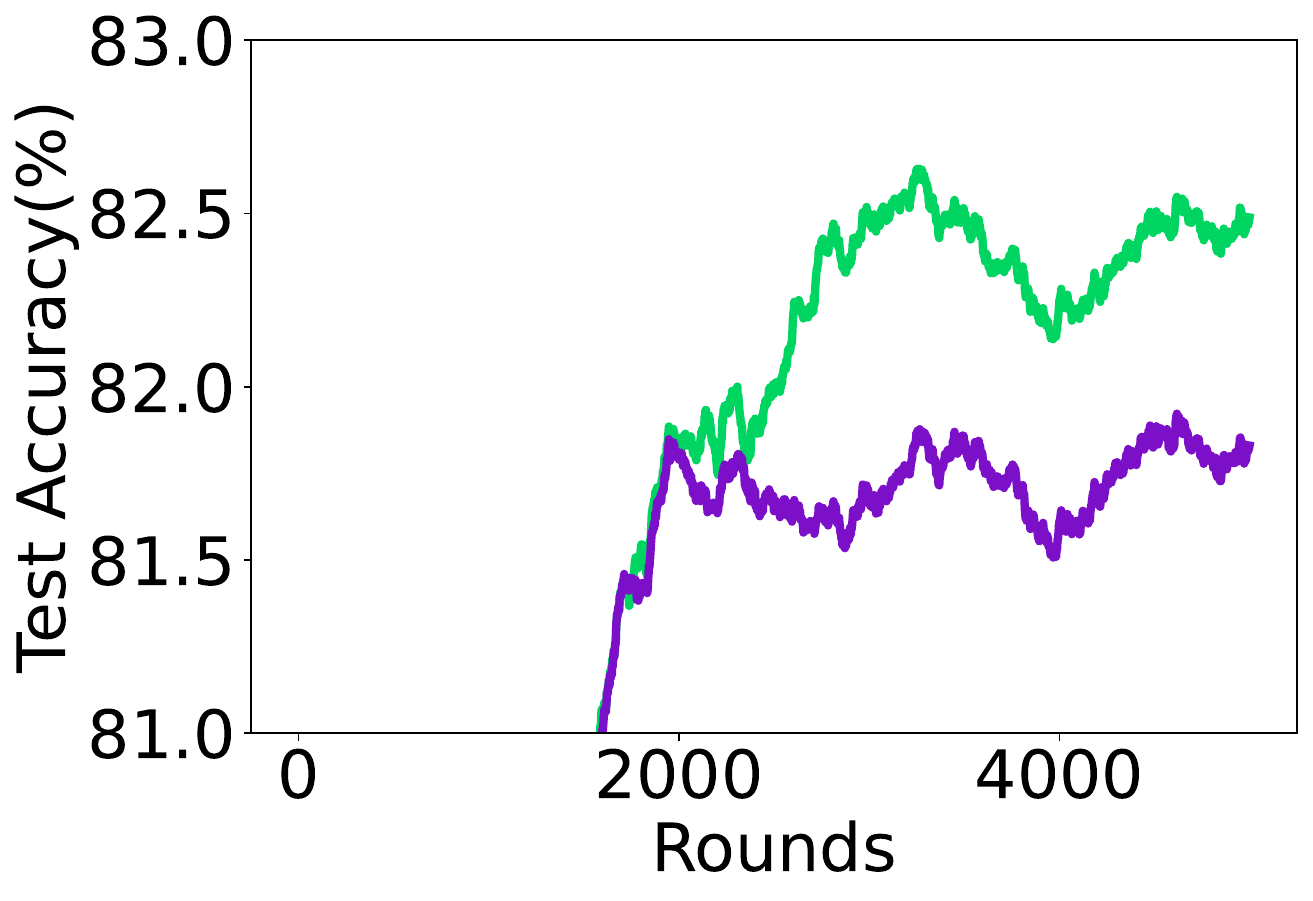}}&
{\includegraphics[scale=0.20]
{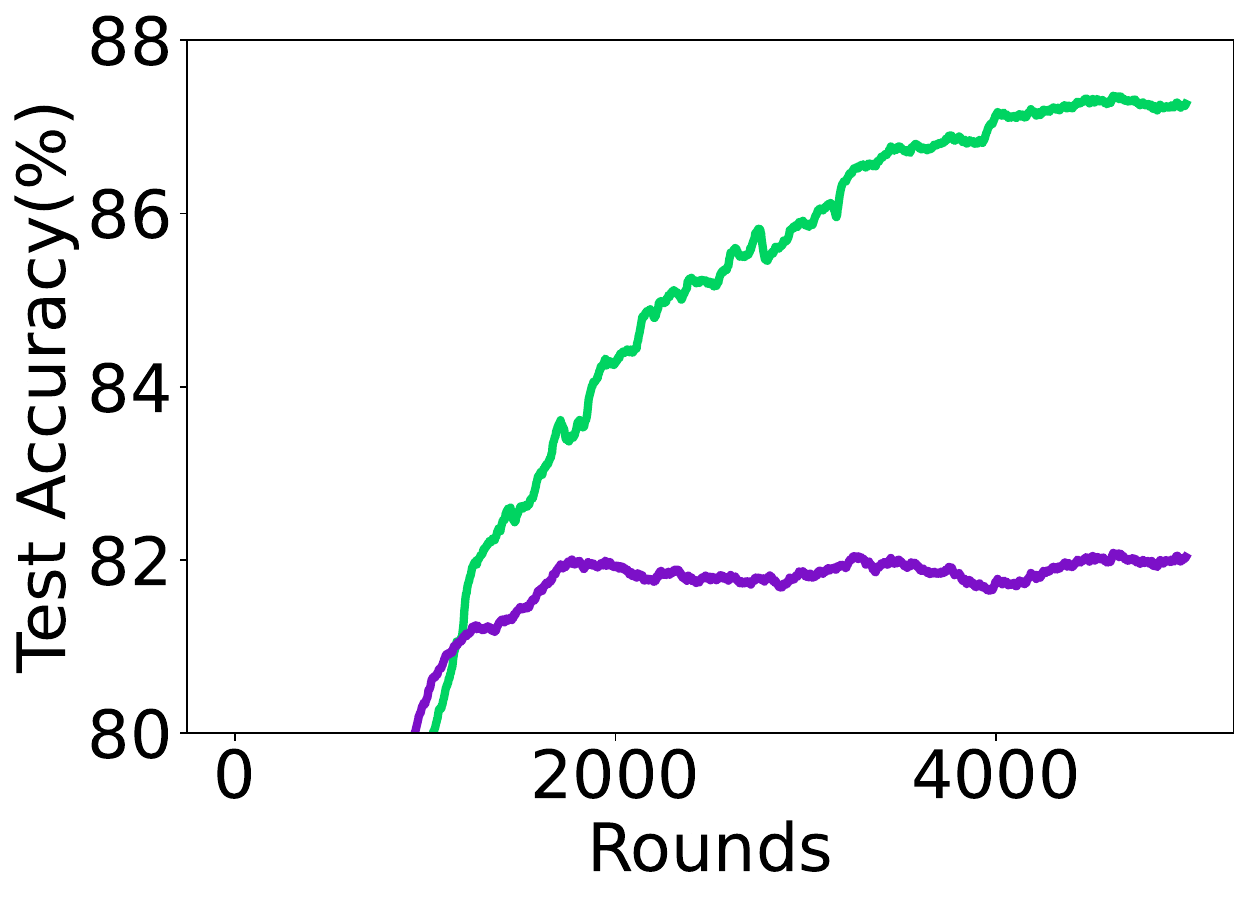}}\\
(d) $\tau=50$ & (e) $\tau=75$ & (f) $\tau=100$ \\

\end{tabular}
    \caption{PFLEGO vs the block coordinate descent FedRecon algorithm. Settings: 50 clients, 5000 {communication} rounds, r $\sim$ [4,28], $\rho = 0.001, \beta = 0.007$ and Inner Steps $\tau$: 5 (a), 10 (b) 25 (c), 50 (d), 75 (e), and 100 (f) for the Omniglot dataset. }
    \label{fig:fedrec_vs_pflego}
    \end{center}
\end{figure*}


We also perform an additional comparison between our method PFLEGO and the block coordinate descent algorithm FedRecon \cite{singhal2021federated} at the Omniglot dataset. At the client side we test various values on the number of the inner steps $\tau$. At the server side, at the beginning of each round the server randomly selects at least 4 participants up to 28 participants. We observe that in each case PFLEGO has higher test accuracy than FedRecon regardless of the number of inner steps $\tau$, {validating that performing multiple local updates with an exact global SGD step can achieve both strong accuracy and efficient communication}. In Figure \ref{fig:main_omniglot_plot}, we observe that FedBabu\cite{fedbabu} minimizes the training loss faster than PFLEGO, but PFLEGO achieves higher accuracy. That is because the client-specific parameters in FedBabu have an orthogonal weight initialization at the server side and are never trained (Figure \ref{fig:fedrec_vs_pflego}). 

\begin{figure*}[!htbp]
    \begin{center}
    \begin{tabular}{cc}
    \multicolumn{2}{c}{{\includegraphics[scale=0.20]
    {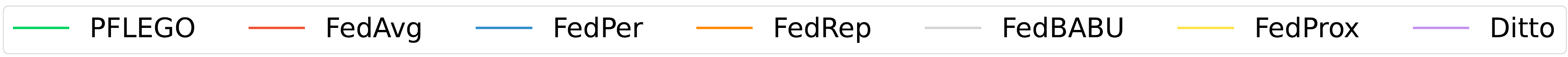}}}
    \end{tabular}\\
    
    \begin{tabular}{cc}
    {\includegraphics[scale=0.24]
    {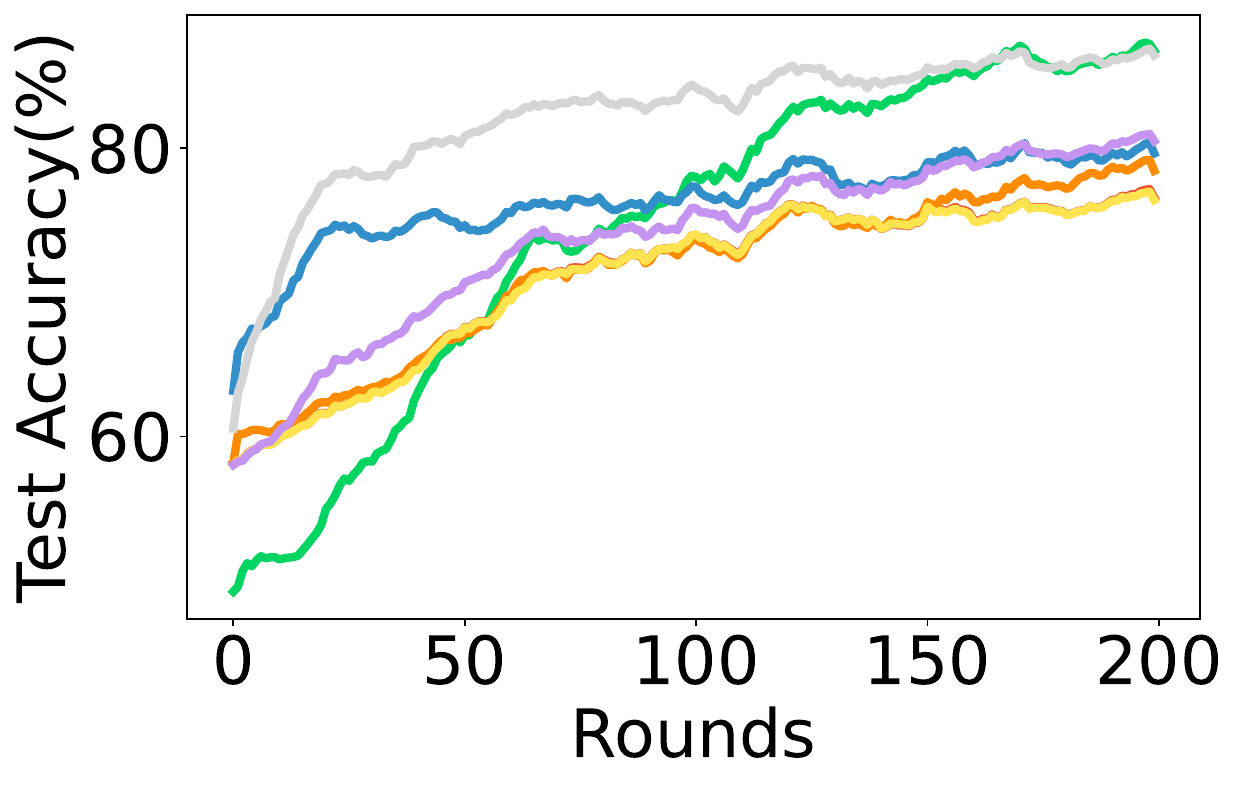}} &
    {\includegraphics[scale=0.24]
    {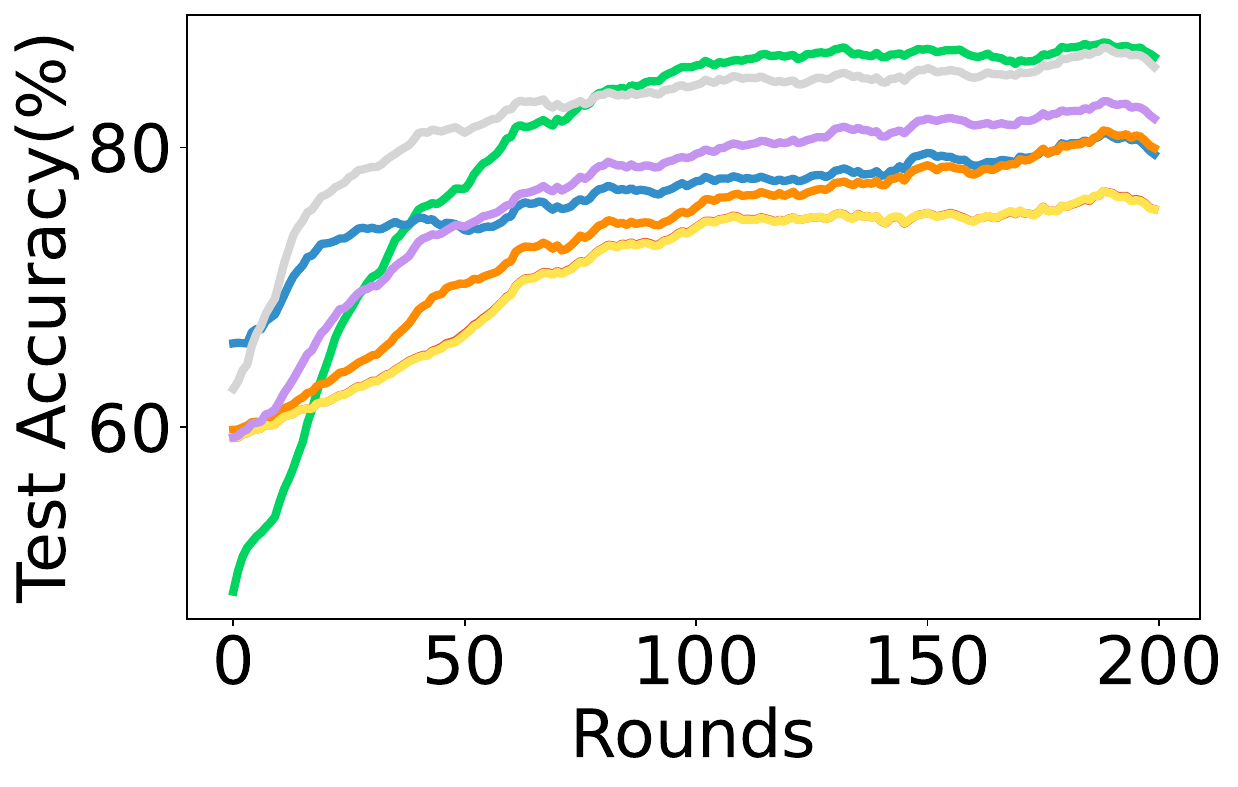}} \\
    $r = 20\%$ & $r = 40\%$ \\ 
    {\includegraphics[scale=0.24]
    {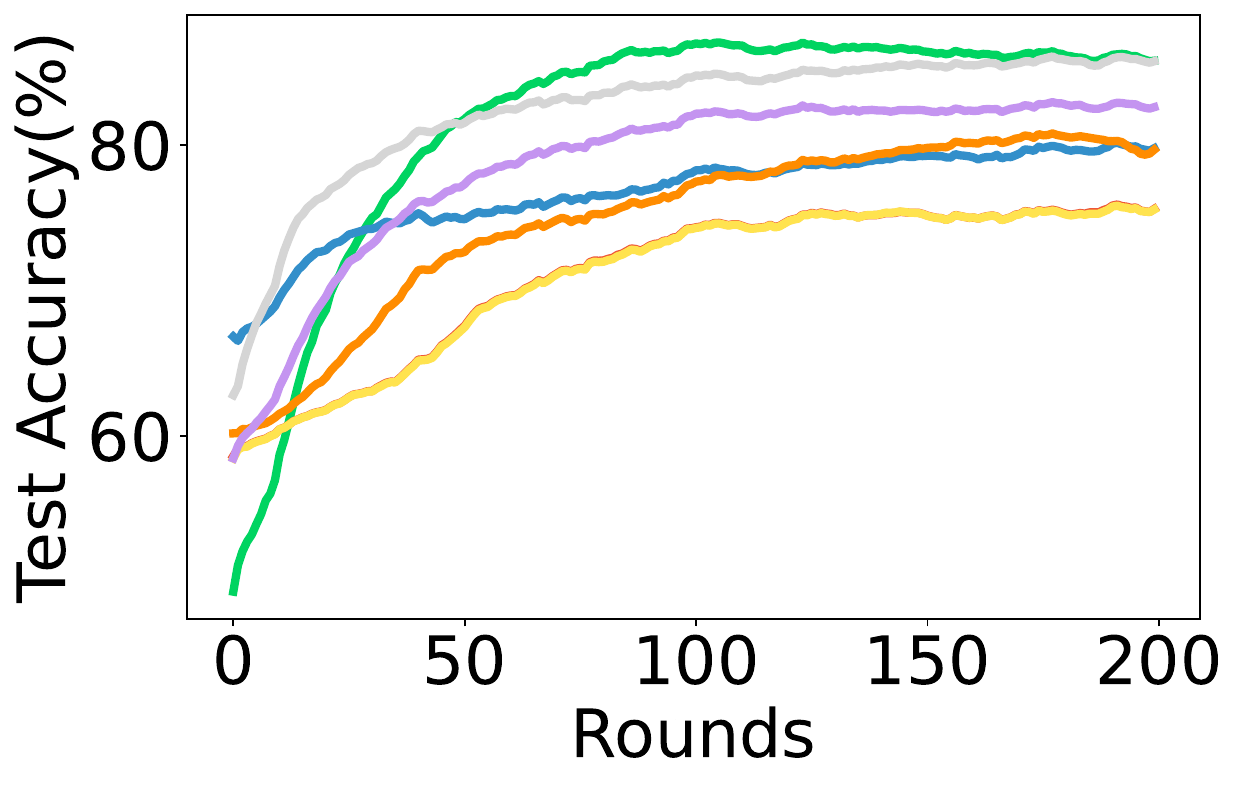}} &
    {\includegraphics[scale=0.24]
    {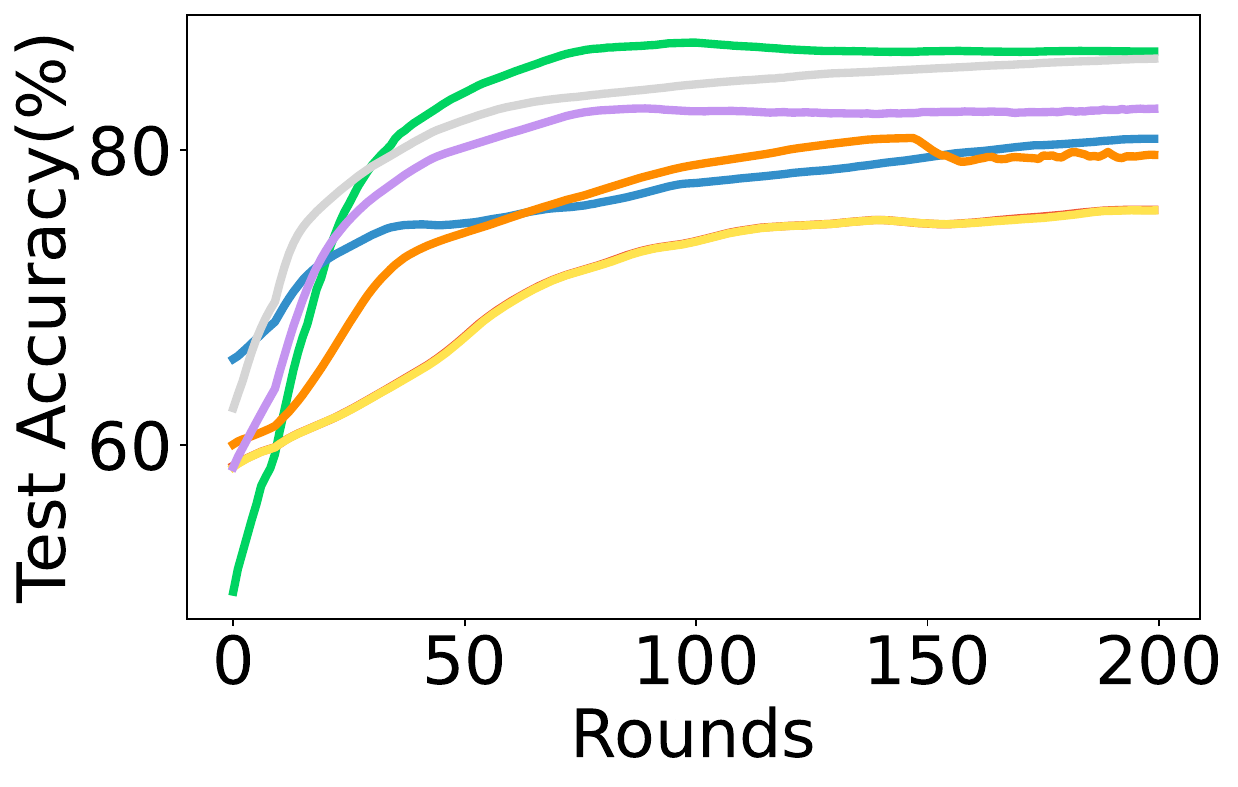}} \\
    $r = 60\%$ & $r = 100\%$ 
    \end{tabular}
   \caption{Test accuracy of the ablation study of client participation using the CIFAR-10 for the High-pers case. PFLEGO is compared against FedAvg and FedPer in terms of the ability to minimize the train loss across rounds. The panel in each column corresponds to a different 
   participation percentage $r$.}
  \label{fig:abl_cifar_plot_test_acc}
    \end{center}
\end{figure*}    

\begin{figure*}[!htbp]
\begin{center}
\begin{tabular}{ccc}
 \multicolumn{3}{c}{{\includegraphics[scale=0.267]
    {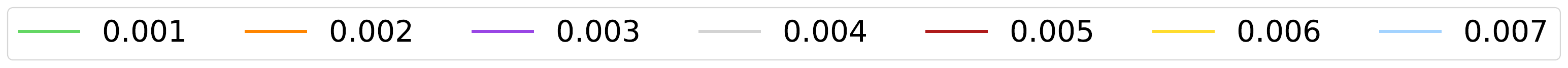}}}
    \end{tabular}\\

    \begin{tabular}{ccc}
{\includegraphics[scale=0.202]
{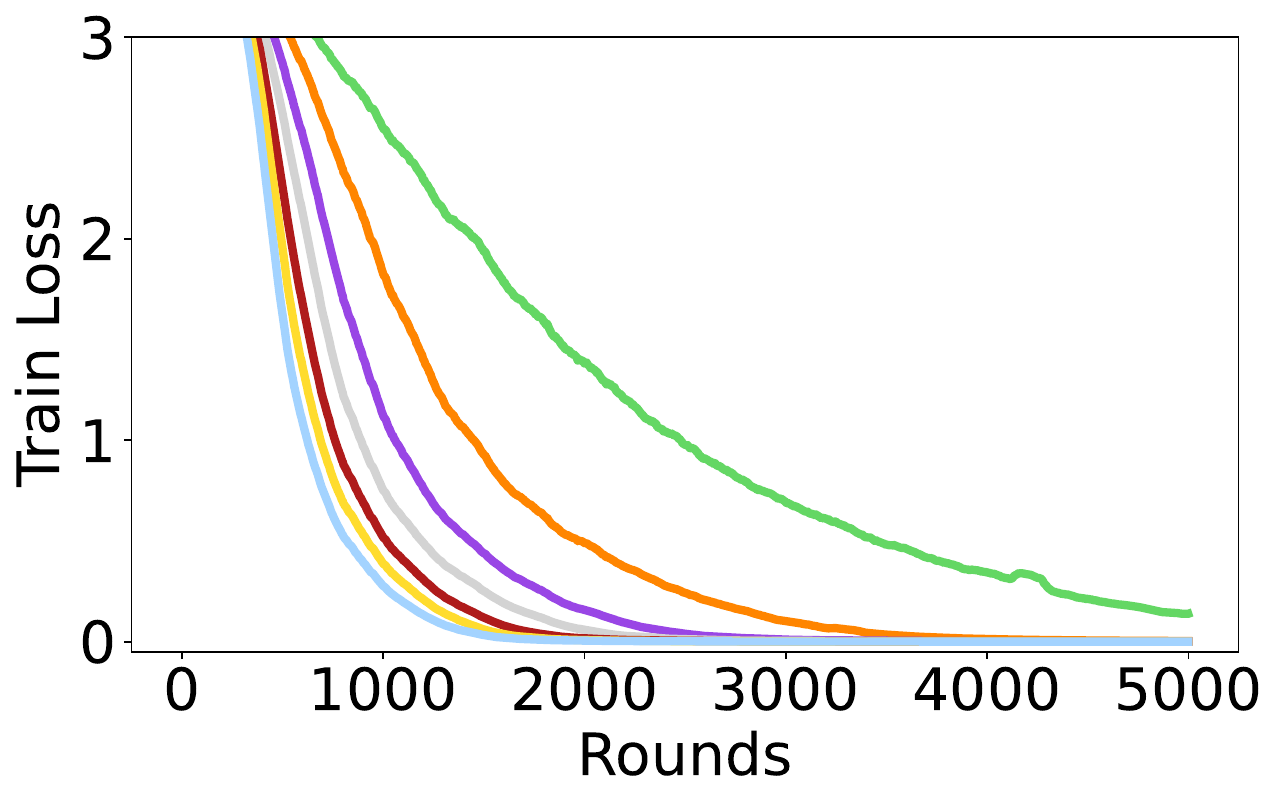}} &
{\includegraphics[scale=0.202]
{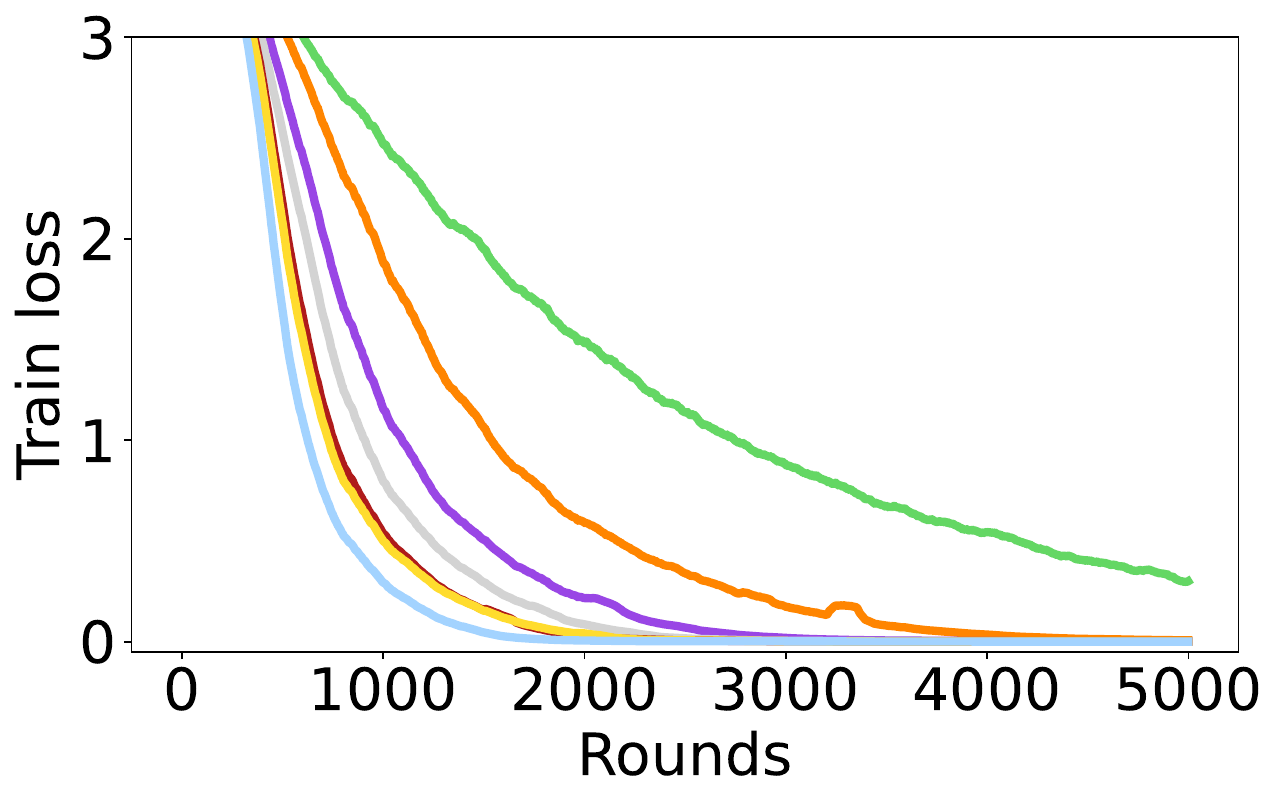}} &
{\includegraphics[scale=0.202]
{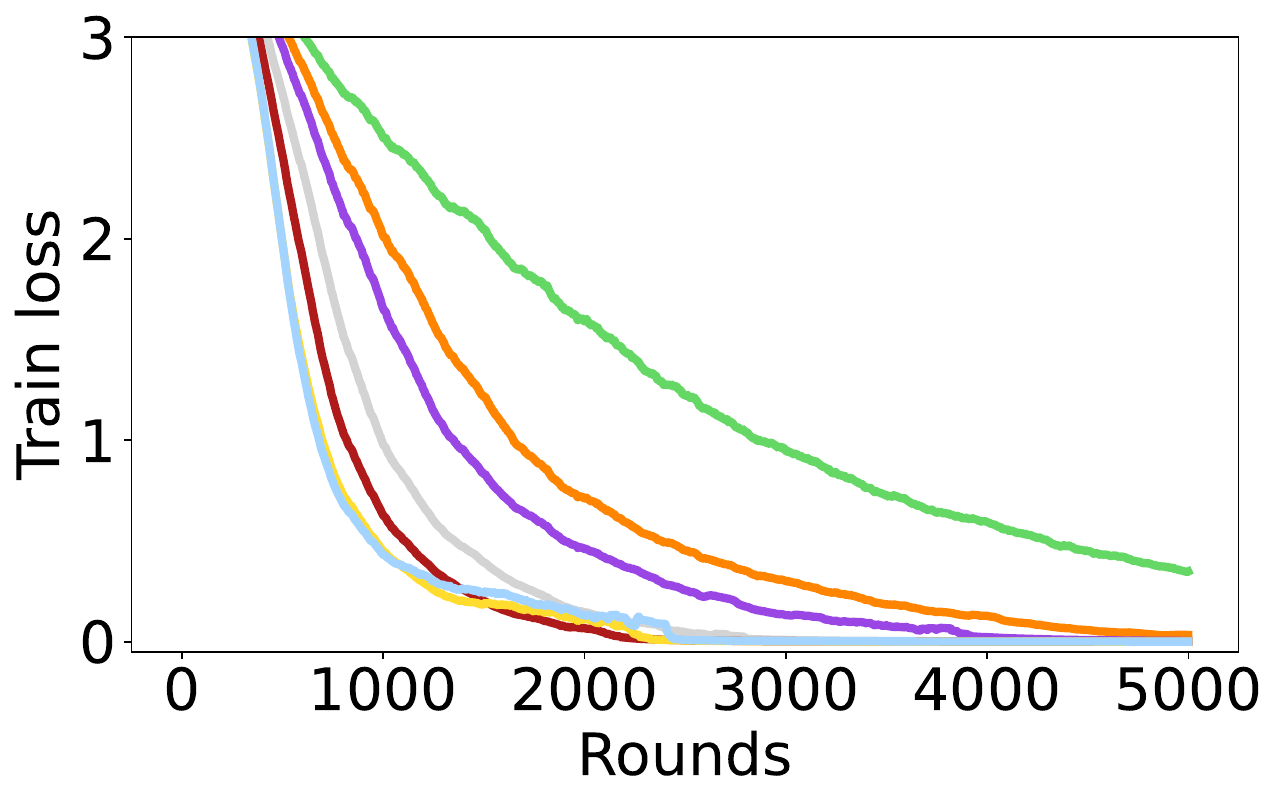}} \\
$\rho = 0.001$ & $\rho=0.002$ & $\rho=0.003$
\end{tabular}
   \caption{Training loss for different value of learning rate $\rho$. For each value of $\rho$ we increase the client learning rate value $\beta$. All experiments are on the Omniglot dataset using PFLEGO with settings: $I=50$ clients, $\tau=50$ inner steps, $T=5000$ {communication} rounds, and $r=20\%$ participating clients per round.
   }
      \label{fig:varying_beta}
      \end{center}
\end{figure*}

\subsection{Ablation Studies}
\label{appendix:effectoflearningrateandparticipationrate}

We carry out experiments 
to investigate the effect of hyperparameters such as the  client participation rate and the 
client and server learning rates. 

\textbf{Effect of participation rate $r$}
We examine the effect of clients participation rate $r$ 
in each optimization round, i.e., the average percentage 
$r$ of clients participating  in each round where a single server update 
is performed over 
$\theta$. We consider $r \in \{20,40,60, 100\}$. 
We use the CIFAR-10 dataset, with 
$T=200$ rounds and 
$\tau=50$ inner client GD steps per round. The client learning rate for all algorithms is $\beta=0.001$ while for PFLEGO the learning rate of the server is $\rho=0.001$.
From Figure \ref{fig:abl_cifar_plot_test_acc} 
we observe that for FedAvg and FedPer, the convergence speed does not vary with $r$, while for PFLEGO the optimization algorithm converges faster as $r$ increases. The more we increase the number of participation, the more gradients will be sent back to the server for aggregation. Then the server can perform an SGD step to the common weights that minimizes the total loss. \eqref{eq:fullloss}.


\label{appendix:effectoflearningrate}
\textbf{Effect of learning rate} In Figure \ref{fig:varying_beta} we conduct an ablation to study the effect of the client learning rate $\beta$ for
PFLEGO, and for various values of the server learning
rate value $\rho$ (i.e., the base rate in Adam). We observe
that we systematically obtain faster convergence whenever $\beta$ is usually
greater than $\rho$. This aligns well with Proposition \ref{prop:2} in Section \ref{sec:proposed_framework}, where for a Lipschitz constant $L > 0$, $0 < \beta < \frac{2}{L}$, and $0 < \rho < \frac{2}{L}\frac{r}{I}$, we ensure convergence, and we observe that:
(i) $\rho$ belongs to a smaller interval than $\beta$
and (ii) $\rho$ are inversely proportional to $L$. 

\section{Conclusions}
\label{conclusion}
We propose PFLEGO, a new Personalized FL algorithm which has theoretical convergence guarantees and, compared to previous FL methods it has lower computation cost.
Regarding convergence, we show rigorously that PFLEGO performs exact SGD-based steps, where the stochasticity arises due to randomness in client participation. Experimentally, we observe that we can achieve faster convergence and lower training loss than state-of-the-art alternatives such as FedAvg, FedPer and others. Importantly, PFLEGO's advantage arises in regimes where a high degree of personalization is needed, since as seen  
in the experiments its performance does not saturate but it keeps improving with the number of local client updates. At the same time these local updates are computationally efficient since they do not require additional neural network evaluations.  
We show that the convergence of PFLEGO
is ensured through a rigorous proof which holds even in non-convex settings, e.g.,
neural networks, where a $\mathcal{O} \left (\frac{1}{\sqrt{T}} \right )$ convergence rate is established with respect to the number of communication rounds $T$.
 
There exists several issues that warrant further investigation. First, in this work, the degree of personalization, i.e., the overlap between the subsets of classes available to each client, was assumed to be known a priori. If this were not known, a learning algorithm would need to be devised, which gradually learns the degree of personalization needed, e.g., through estimating similarity of tasks of different clients, so as to decide whether a personalization algorithm (e.g.,  PFLEGO) would be needed or not. Second, in this work, we aimed at optimizing total training loss over clients. Another direction for investigation would be to elaborate on fairness aspects by ensuring similar loss across clients, through a carefully selected objective function and a subsequent decentralized FL process. {Third, our current analysis and experiments demonstrate robustness to random client participation in each round, but do not address the case where the client population itself evolves over time. Extending PFLEGO to handle dynamic client participation, where new clients may join and others may leave, is an important direction for future work. Fourth, although our method is robust to partial participation and stragglers, we do not explicitly model or simulate unreliable network conditions or client dropouts due to connectivity issues. Incorporating and evaluating PFLEGO under more realistic network reliability scenarios is another valuable avenue for future research. Fifth, while our experiments focus on classification tasks, the PFLEGO framework is general and can be directly applied to regression problems or to data with non-IID features, as long as the loss function is differentiable. Exploring the performance of PFLEGO on regression tasks and more diverse data modalities is an interesting direction for future work.}

\section*{Acknowledgments}

This work was conducted in the context of the Horizon Europe project PRE-ACT (Prediction of Radiotherapy side effects using explainable AI for patient communication and treatment modification). It was supported by the European Commission through the Horizon Europe Program (Grant Agreement number 101057746), by the Swiss State Secretariat for Education, Research and Innovation (SERI) under contract number 22 00058, and by the UK government (Innovate UK application number 10061955). The authors wish to thank Assoc. Prof. Stavros Toumpis from Athens University of Economics and Business for helpful comments and discussions on the presentation of the paper.

\section*{Data availability}

The datasets used to perform the experiments are included
in the paper. The datasets are automatically accessed during the execution of the code through the TensorFlow Datasets library and is available in the corresponding author at \href{https://github.com/sotirisnik/PFLEGO}{https://github.com/sotirisnik/PFLEGO}.\\\\

\bibliographystyle{plain}
\bibliography{main}


\onecolumn
\appendix

\section{Experimental settings}
\subsection{Neural network architectures}
\label{appendix:neural_network_architectures}
In Table \ref{tab:architectures_altogether} we present the architectures for MNIST, Fashion-MNIST, EMNIST, CIFAR-10, and Omniglot. We also provide an equivalent pictorial visualization of the used NN architectures; see Figure \ref{fig:nn_architectures}. 
In Table \ref{datasets_details_table} we summarize the details of the different datasets.

\begin{table}[ht]
\caption{Architecture details for all datasets. $B$ denotes the number of data inputs to the neural network, $A$ denotes the activation function. {We use the ReLU activation function throughout our architectures, as it is the de facto standard in modern deep learning due to its simplicity, empirical effectiveness, and ability to mitigate vanishing gradient issues that can occur with traditional activation functions such as sigmoid or tanh.} }
    \label{tab:architectures_altogether}
        \centering
        \begin{tabular}{lccc}
        \toprule
                   & \multicolumn{3}{c}{MNIST, Fashion-MNIST and EMNIST}                                            \\ \midrule
        Layer      & Kernel details           & Stride               & Output shape                         \\ \midrule
        Input      & None                     & None                    & B$\times$28$\times$28$\times$1       \\
        Flatten    & 784                      & None                    & B$\times$784                         \\ 
        Dense      & 200, A=ReLU                       & None                    & B$\times$200    
        \\ 
        Dense(classifier)      & 10                       & None                    & B$\times$10                          \\ \midrule
        
                   & \multicolumn{3}{c}{CIFAR-10}                                                                     \\ \midrule
        Input      & None                     & None                           & B$\times$32$\times$32$\times$3       \\
        Conv1      & F=64, \hspace{1pt}K=5$\times$5, \hspace{1pt}A=ReLU & 1                              & B$\times$32$\times$32$\times$20      \\
        Max Pool 1 & K=3$\times$3             & 2                              & B$\times$16x16$\times$20             \\
        Conv2      & F=64, \hspace{1pt}K=5$\times$5, \hspace{1pt}A=ReLU & 1                              & B$\times$16x16$\times$20             \\
        Max Pool 2 & K=3$\times$3             & 2                              & B$\times$8$\times$8$\times$20        \\
        Flatten    & 4096                     & None                           & B$\times$1280                        \\
        Dense      & 384, \hspace{1pt}A=ReLU               & None                           & B$\times$800                         \\
        Dense      & 192, \hspace{1pt}A=ReLU               & None                           & B$\times$500                         \\
        Dense(classifier)      & 10                       & None                           & B$\times$10                          \\
                  \midrule
                   & \multicolumn{3}{c}{Omniglot}                                                                     \\ \midrule
        Input      & None                     & None                           & B$\times$28$\times$28$\times$1       \\
        Conv1      & F=64, \hspace{1pt}K=3$\times$3, \hspace{1pt}A=ReLU & 1                              & B$\times$28$\times$28$\times$64      \\
        MaxPool1   & K=2$\times$2             & 2                              & B$\times$14x14$\times$64             \\
        Conv2      & F=64, \hspace{1pt}K=3$\times$3, \hspace{1pt}A=ReLU & 1                              & B$\times$14$\times$14$\times$64      \\
        MaxPool2   & K=2$\times$2             & 2                              & B$\times$7x7$\times$64               \\
        Conv3      & F=64, \hspace{1pt}K=3$\times$3, \hspace{1pt}A=ReLU & 1                              & B$\times$7$\times$7$\times$64        \\
        MaxPool3   & K=2$\times$2             & 2                              & B$\times$3x3$\times$64               \\
        Conv4      & F=64, \hspace{1pt}K=3$\times$3, \hspace{1pt}A=ReLU & 1                              & B$\times$3$\times$3$\times$64,A=ReLU \\
        MaxPool4   & K=2$\times$2             & 2                              & B$\times$1x1$\times$64               \\
        Flatten    & 64                       & None                           & B$\times$64                          \\
        Dense(classifier)      & 55                       & None                           & B$\times$55   \\ \bottomrule
        \end{tabular}
\end{table}

\begin{figure}[h]
\begin{center}
\begin{tabular}{ccc}
{\includegraphics[scale=0.4,angle =0]{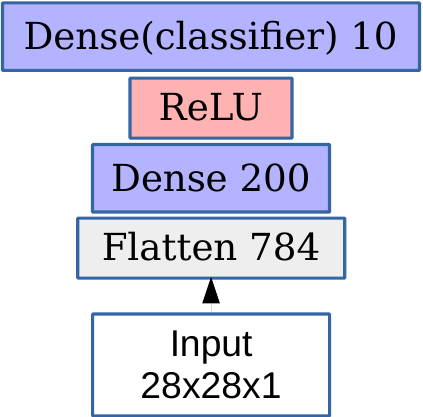}} &

{\includegraphics[width=0.4\textwidth,angle =0]{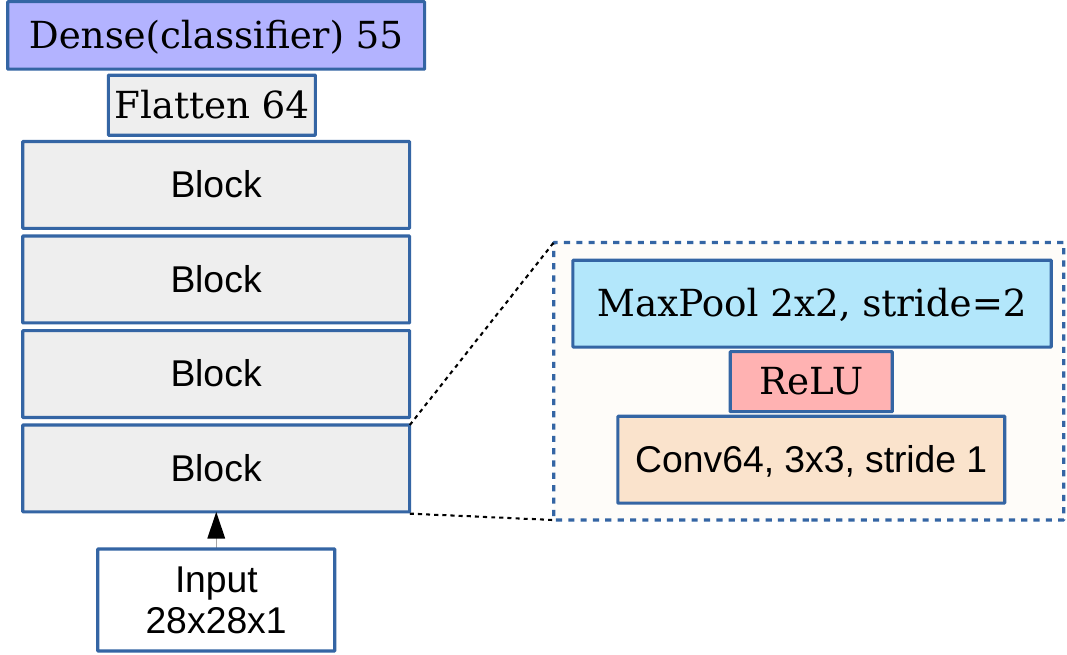}} &
{\includegraphics[width=0.4\textwidth,angle =90]{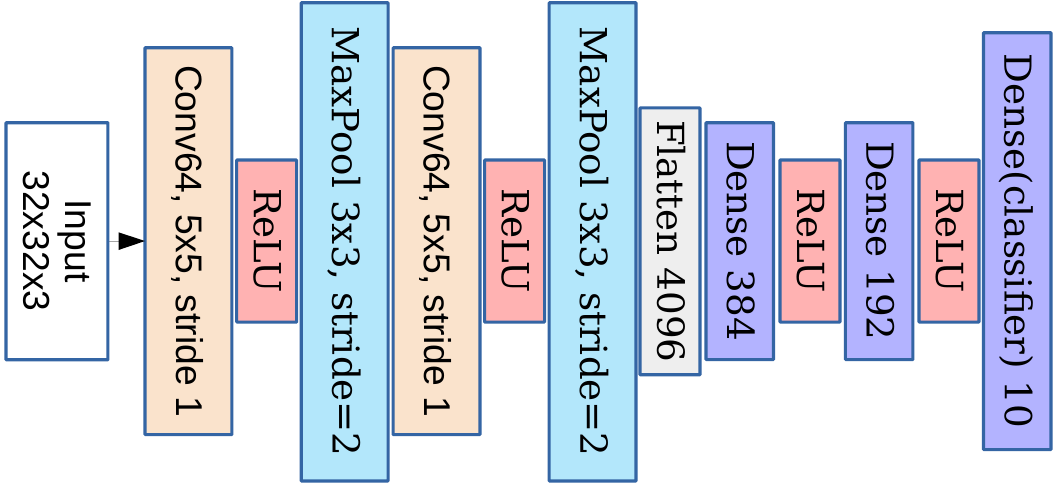}
}
 \\
MNIST & Omniglot & CIFAR-10
\end{tabular}
   \caption{Model architectures. \textbf{Left:} MLP architecture for MNIST, Fashion-MNIST and EMNIST; \textbf{Middle:} 4-convolutional layer architecture for Omniglot; \textbf{Right:} 2-convolutional layer architecture for CIFAR-10. }
  \label{fig:nn_architectures}
  \end{center}
\end{figure}

\begin{table}[!htbp]
\caption{Summary of dataset characteristics}
    \label{datasets_details_table}
    \vskip 0.15in
    \centering
        \begin{tabular}{cccccc}
        \toprule
        Dataset & Input Dimension & Labels & Train set & Test set & Feat. vector units $M$ \\
        \midrule
        MNIST & $28\times28\times1$ & 10 & 60,000 & 10,000 & 200 \\ 
        CIFAR-10 & $32\times32\times3$ & 10 & 50,000 & 10,000 & 192 \\ 
        Fashion-MNIST & $28\times28\times1$ & 10 & 60,000 & 10,000 & 200 \\ 
        EMNIST & $28\times28\times1$ & 62 & 697,932 & 116,323 & 200 \\ 
        Omniglot & $28\times28\times1$ & 1623 & 19,280 & 13,180 & 64 \\ 
        \bottomrule
        \end{tabular}
\vskip -0.1in
\end{table}

\subsection{Dataset splitting across clients}
\label{subsec:datasetsplit}
{In Figure \ref{fig:round_robin}, we see an example assignment of 7 data points to 3 clients using a round-robin (RR) partitioning scheme. In this example, all data points belong to the same class; in our experiments, each class is processed and distributed in the same way. The procedure for distributing data points of a given class (e.g., class 0) is as follows: 
 (a) shuffle all the data points of class 0, (b) filter all clients that were predetermined to contain the labels of class 0, (c) iterate through all filtered clients from step (b) and assign 1 data point to them. Step (c) is repeated in a cyclic manner until all data points are distributed.

In the example, the 7 data points are first randomly permuted; then we traverse the permuted sequence from left to right and assign one data point to each client in a cyclic manner until all examples are exhausted.
In contrast with random partition, RR does not require to shuffle the data samples first, however we do not omit the shuffling step of the data samples at the RR partition during our experiments.
}
\begin{figure}[H]
  \vskip 0.2in
  \begin{center}
  \includegraphics[width=0.8\textwidth]{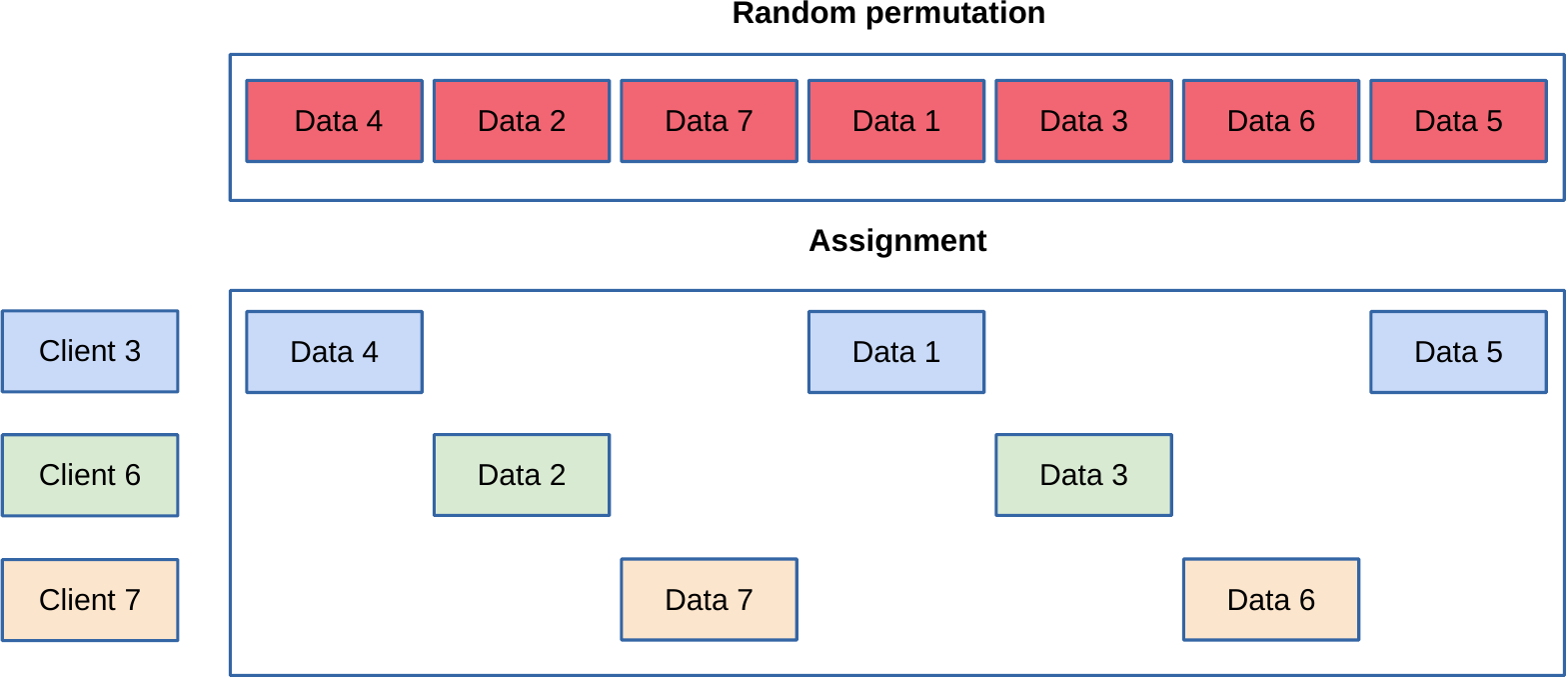}
  \caption{Assignment of 7 data points to 3 clients using a Round-Robin (RR) partitioning scheme.}
  \label{fig:round_robin}
  \end{center}
  \vskip -0.2in
\end{figure}

\subsection{Hyperparameter selection}
\label{subsec:hyperparameterselection}
For MNIST, CIFAR-10, EMNIST, and Fashion-MNIST we perform a hyperparameter search for $\beta$ in the set of values $\{0.001, 0.002, \ldots ,0.007\}$, and for $\rho$ is in the set $\{0.001, 0.002, 0.003\}$. For Omniglot we further examine two additional values for $\beta$, $0.008$ and $0.009$. For PFLEGO we further utilize our findings from Proposition 
    \ref{prop:2}: first we choose the learning rate of the server $\rho$, and then we choose the learning of the client $\beta \in \left (0,\frac{2}{L} \right )$. In Tables \ref{tab:best_hyper_results},\ref{omniglot_single_table_hyperparameters} we
    give the values of the best hyperparameters for various degrees of personalization. In all experiments, the server samples $20\%$ of the clients in each round. Note that for all experiments of FedBabu \cite{fedbabu}, we use momentum equal to 0.999, and each client performs 5 gradient updates at the common weights, after the client performs its gradient updates on the personalized weights. For FedProx \cite{li2020federated} we observe that $\mu=0.25$ obtained the best results. For Ditto \cite{ditto} we observe that a higher value of $\mu$ is more appropriate in "No-Pers", and respectively a lower value is more appropriate in the high personalization case.

\begin{table*}[ht]
\caption{Best hyperparameters for MNIST, CIFAR-10, EMNIST, and Fashion-MNIST on various personalized settings.}
\label{tab:best_hyper_results}
\vskip 0.15in
    \begin{center}
    \begin{small}
  \resizebox{\textwidth}{!}{%
        \begin{tabular}{llllllll}
        \toprule
         &  & \multicolumn{ 3}{c}{MNIST} & \multicolumn{ 3}{c}{CIFAR-10} \\ 
         \midrule
        Method / Deg of Pers.  &  & \multicolumn{1}{c}{High-Pers} & \multicolumn{1}{c}{Medium-Pers} & \multicolumn{1}{c}{No-Pers} & \multicolumn{1}{c}{High-Pers} & \multicolumn{1}{c}{Medium-Pers} & \multicolumn{1}{c}{No-Pers} \\ 
        
        FedPer &  & \multicolumn{1}{c}{$\beta=0.007$} & \multicolumn{1}{c}{$\beta=0.007$} & \multicolumn{1}{c}{$\beta=0.007$} & \multicolumn{1}{c}{$\beta=0.007$} & \multicolumn{1}{c}{$\beta=0.005$} & \multicolumn{1}{c}{$\beta=0.005$} \\ 
        FedAvg &  & \multicolumn{1}{c}{$\beta=0.007$} & \multicolumn{1}{c}{$\beta=0.007$} & \multicolumn{1}{c}{$\beta=0.007$} & \multicolumn{1}{c}{$\beta=0.007$} & \multicolumn{1}{c}{$\beta=0.007$} & \multicolumn{1}{c}{$\beta=0.006$} \\

        FedProx &  & \multicolumn{1}{c}{$\beta=0.007$, \hspace{1pt} $\mu=0.25$} & \multicolumn{1}{c}{$\beta=0.007$, \hspace{1pt} $\mu=0.25$} & \multicolumn{1}{c}{$\beta=0.007$, \hspace{1pt} $\mu=0.25$} & \multicolumn{1}{c}{$\beta=0.007$, \hspace{1pt} $\mu=0.25$} & \multicolumn{1}{c}{$\beta=0.007$, \hspace{1pt} $\mu=0.25$} & \multicolumn{1}{c}{$\beta=0.006$, \hspace{1pt} $\mu=0.25$} \\

    Ditto &  & \multicolumn{1}{c}{$\beta=0.007$, \hspace{1pt} $\mu=0.25$} & \multicolumn{1}{c}{$\beta=0.007$, \hspace{1pt} $\mu=0.5$} & \multicolumn{1}{c}{$\beta=0.007$, \hspace{1pt} $\mu=1.0$} & \multicolumn{1}{c}{$\beta=0.007$, \hspace{1pt} $\mu=0.25$} & \multicolumn{1}{c}{$\beta=0.007$, \hspace{1pt} $\mu=0.5$} & \multicolumn{1}{c}{$\beta=0.007$, \hspace{1pt} $\mu=1.0$} \\

    FedRep &  & \multicolumn{1}{c}{$\beta=0.007$} & \multicolumn{1}{c}{$\beta=0.007$} & \multicolumn{1}{c}{$\beta=0.007$} & \multicolumn{1}{c}{$\beta=0.005$} & \multicolumn{1}{c}{$\beta=0.007$} & \multicolumn{1}{c}{$\beta=0.007$} \\

        FedBabu &  & \multicolumn{1}{c}{$\beta=0.007$} & \multicolumn{1}{c}{$\beta=0.007$} & \multicolumn{1}{c}{$\beta=0.007$} & \multicolumn{1}{c}{$\beta=0.007$}  & \multicolumn{1}{c}{$\beta=0.007$} & \multicolumn{1}{c}{$\beta=0.007$} \\
        
        PFLEGO &  & \multicolumn{1}{c}{$\beta=0.189$, \hspace{1pt} $\rho=0.003$} & \multicolumn{1}{c}{$\beta=0.189$, \hspace{1pt} $\rho=0.003$} & $\beta=0.189$, \hspace{1pt} $\rho=0.003$ & \multicolumn{1}{c}{$\beta=0.002$, \hspace{1pt} $\rho=0.001$} & \multicolumn{1}{c}{$\beta=0.0189$, \hspace{1pt} $\rho=0.003$} & \multicolumn{1}{c}{$\beta=0.03$, \hspace{1pt} $\rho=0.003$} \\ 
        \midrule
         &  & \multicolumn{ 3}{c}{EMNIST} & \multicolumn{ 3}{c}{Fashion-MNIST} \\ \midrule
        Method / Deg of Pers.  &  & \multicolumn{1}{c}{High-Pers} & \multicolumn{1}{c}{Medium-Pers} & \multicolumn{1}{c}{No-Pers} & \multicolumn{1}{c}{High-Pers} & \multicolumn{1}{c}{Medium-Pers} & \multicolumn{1}{c}{No-Pers} \\ 
        FedPer &  & \multicolumn{1}{c}{$\beta=0.007$} & \multicolumn{1}{c}{$\beta=0.007$} & \multicolumn{1}{c}{$\beta=0.007$} & \multicolumn{1}{c}{$\beta=0.007$} & \multicolumn{1}{c}{$\beta=0.007$} & \multicolumn{1}{c}{$\beta=0.007$} \\ 
        FedAvg &  & \multicolumn{1}{c}{$\beta=0.007$}  & \multicolumn{1}{c}{$\beta=0.007$} & \multicolumn{1}{c}{$\beta=0.007$} & \multicolumn{1}{c}{$\beta=0.007$} & \multicolumn{1}{c}{$\beta=0.007$} & \multicolumn{1}{c}{$\beta=0.007$} \\ 

        FedProx &  & \multicolumn{1}{c}{$\beta=0.007$, \hspace{1pt} $\mu=0.25$} & \multicolumn{1}{c}{$\beta=0.007$, \hspace{1pt} $\mu=0.25$} & \multicolumn{1}{c}{$\beta=0.007$, \hspace{1pt} $\mu=0.25$} & \multicolumn{1}{c}{$\beta=0.007$, \hspace{1pt} $\mu=0.25$} & \multicolumn{1}{c}{$\beta=0.007$, \hspace{1pt} $\mu=0.25$} & \multicolumn{1}{c}{$\beta=0.007$, \hspace{1pt} $\mu=0.25$} \\

        Ditto &  & \multicolumn{1}{c}{$\beta=0.007$, \hspace{1pt} $\mu=0.25$} & \multicolumn{1}{c}{$\beta=0.007$, \hspace{1pt} $\mu=0.25$} & \multicolumn{1}{c}{$\beta=0.007$, \hspace{1pt} $\mu=1.0$} & \multicolumn{1}{c}{$\beta=0.005$, \hspace{1pt} $\mu=0.25$} & \multicolumn{1}{c}{$\beta=0.007$, \hspace{1pt} $\mu=0.25$} & \multicolumn{1}{c}{$\beta=0.007$, \hspace{1pt} $\mu=1.0$} \\

        FedRep &  & \multicolumn{1}{c}{$\beta=0.007$} & \multicolumn{1}{c}{$\beta=0.007$} & \multicolumn{1}{c}{$\beta=0.007$} & \multicolumn{1}{c}{$\beta=0.003$} & \multicolumn{1}{c}{$\beta=0.007$} & \multicolumn{1}{c}{$\beta=0.007$} \\

        FedBabu &  & \multicolumn{1}{c}{$\beta=0.007$} & \multicolumn{1}{c}{$\beta=0.007$} & \multicolumn{1}{c}{$\beta=0.007$} & \multicolumn{1}{c}{$\beta=0.007$}  & \multicolumn{1}{c}{$\beta=0.007$} & \multicolumn{1}{c}{$\beta=0.007$} \\
        
        PFLEGO &  & \multicolumn{1}{c}{$\beta=0.03$, \hspace{1pt} $\rho=0.003$} & \multicolumn{1}{c}{$\beta=0.03$, \hspace{1pt} $\rho=0.003$} & \multicolumn{1}{c}{$\beta=0.03$, \hspace{1pt} $\rho=0.003$} & \multicolumn{1}{c}{$\beta=0.006$, \hspace{1pt} $\rho=0.002$} & \multicolumn{1}{c}{$\beta=0.006$, \hspace{1pt} $\rho=0.002$} & \multicolumn{1}{c}{$\beta=0.007$, \hspace{1pt} $\rho=0.003$} \\ 
        \bottomrule
        \end{tabular}
         }
    \end{small}
    \end{center}
\vskip -0.1in
\end{table*}

\begin{table*}[!htbp]
\caption{Best hyperparameters for Omniglot dataset on high degree of personalization.}
\label{omniglot_single_table_hyperparameters}
\begin{center}
\begin{small}
\resizebox{\textwidth}{!}{%
\begin{tabular}{cccccccccc}
\toprule
\multicolumn{8}{c}{Omniglot}
\\
\midrule
FedRep &
FedBabu &
FedProx & Ditto & FedPer & FedAvg & FedRecon & PFLEGO\\ 
\multicolumn{1}{c}{$\beta=0.009$} & \multicolumn{1}{c}{$\beta=0.009$} & \multicolumn{1}{c}{$\beta=0.009$, \hspace{1pt} $\mu=0.25$} & \multicolumn{1}{c}{$\beta=0.007$, \hspace{1pt} $\mu=0.5$} & \multicolumn{1}{c}{$\beta=0.009$} & \multicolumn{1}{c}{$\beta=0.009$} & \multicolumn{1}{c}{$\beta=0.009$, \hspace{1pt} $\rho=0.003$} & \multicolumn{1}{c}{$\beta=0.1341$, \hspace{1pt} $\rho=0.003$} \\
\bottomrule
\end{tabular}
}
\end{small}
\end{center}
\end{table*}




\section{Bounds for hyperparameters for algorithm convergence}
\label{appendix:hyperparameterssegments}
In this section we discuss the optimization process of the clients, and in particular we establish convergence of the common weights $\theta_{t}$ to a stationary point. We derive intervals in which the learning rate $\beta$ of the client and the learning rate of the server $\rho$ must fall in order to guarantee convergence and ensure that the loss function will decrease at each optimization update. In order to establish the convergence of the common weights $\theta_{t}$ to a stationary point, we use Proposition \ref{prop:1}, and the assumptions of $L$-smoothness in Proposition \ref{prop:2}.


 First, at each round $t$, let $\theta_{t-1}$ be the common weights at the server, the server broadcast $\theta_{t-1}$ to a subset of clients $\mathcal{I}_t \subset \{1,\ldots,I\}$, which is selected uniformly at random, and client $i \in \mathcal{I}_t$ sets its  global parameters to $\theta_{t-1}$.  Client $i$ performs a number of $j=1\dots\tau$ gradient updates on its personalized weights $W_{i}$.  Next for simplicity we omit the index of the round on the client-specific parameters, and we use the notation $W_{i}^{j} \triangleq W_{i,t-1} $ for its updates. Client $i$ uses a gradient-based optimizer to optimize its loss function, e.g.,

$$W_{i}^{j} \leftarrow W_{i}^{j-1} - \beta \nabla_{W_{i}^{j-1}} \ell_{i}( W_{i}^{j-1}, \theta_{t-1}),$$ but keeps $\theta_{t-1}$ fixed. However, at its last iteration client $i$ sets $\beta=\rho$, computes the joint gradient $\left ({\nabla_{\theta_{t-1}} \ell_i(\mathcal{D}_i; W_i,\theta_{t-1}) }, {\nabla_{W_i} \ell_i(\mathcal{D}_i; W_i,\theta_{t-1}) } \right)$, and returns the first component ${\nabla_{\theta_{t-1}} \ell_i(\mathcal{D}_i; W_i,\theta_{t-1})}$ to the server.

At the client-side, for the first $\tau-1$ gradient steps the optimization is deterministic (the client uses its full dataset $\mathcal{D}_{i}$), so the whole procedure is standard GD. We require that the learning rate $\beta$ is small enough so that each step decreases the loss, i.e., $\ell_{i}( W_{i}^{j},\cdot) \leq  \ell_{i}( W_{i}^{j-1},\cdot )$. To formalize this, we assume that the loss is $L$-smooth, so that at the $j$-th gradient step the following holds \cite{strang2019linear}:

\begin{align}
    \ell_{i}( W_{i}^{j},\cdot) \leq \ell_{i}( W_{i}^{j-1},\cdot ) + \langle \nabla \ell_{i}( W_{i}^{j-1},\cdot), W_{i}^{j} - W_{i}^{j-1} \rangle + \frac{L}{2} \|W_{i,j} - W_{i}^{j-1}\|^{2}.
    \label{ineq:lipschitzw}
\end{align}

We examine 2 cases. The first 
case is when $j=1\dots\tau-1$  and the client applies GD steps with its learning rate $\beta$. Then, (\ref{ineq:lipschitzw}) becomes

\begin{align}
\ell_{i}( W_{i}^{j},\cdot)
\leq
\ell_{i}( W_{i}^{j-1},\cdot )
- \beta  \|\nabla \ell_{i}( W_{i}^{j-1},\cdot)\|^{2}  + \frac{L}{2} \beta^{2} \|\nabla \ell_{i}( W_{i}^{j-1},\cdot)\|^{2}
\nonumber \\
=
\ell_{i}( W_{i}^{j-1},\cdot )
- \beta \|\nabla \ell_{i}( W_{i}^{j-1},\cdot)\|^{2}  \left ( 1 - \frac{L}{2} \beta \right)
\nonumber \\
=\ell_{i}( W_{i}^{j-1},\cdot )+ \beta \|\nabla \ell_{i}( W_{i}^{j-1},\cdot)\|^{2} \left ( \frac{L}{2} \beta - 1 \right ).
\nonumber
\end{align}
Therefore for $\beta > 0$, the term
$\left ( \frac{L}{2} \beta  - 1 \right )$ must be negative, and thus the learning rate must satisfy 
$ 0 < \beta < \frac{2}{L}$. 
The second case is for the final iteration 
$j = \tau$. In this case the client $i$ also uses its full dataset $\mathcal{D}_{i}$, however this step corresponds to the SGD step coordinated by the server. This step sets the learning rate to $\rho \frac{I}{r}$, and thus equivalently to the first case, 
 $\left ( \frac{L}{2} \rho \frac{I}{r} - 1 \right ) < 0$, which means that $\rho$ must satisfy $0 < \rho < \frac{2}{L}  \frac{r}{I}$. 

Thus we can choose $0 < \beta < \frac{2}{L}$, and $0 < \rho < \frac{2}{L}\frac{r}{I}$. Observe that the selection of $\rho$ falls within a narrower interval compared to $\beta$, and also the interval of $\rho$ depends on the number of participants $r$ per round, and the total number of clients $I$.

\label{appendix:proof}

Likewise, the new common weights $\theta_{t}$ are computed at the server by aggregating the returns from the clients

\begin{equation}
    \theta_{t} \leftarrow \theta_{t-1} - \rho_{t-1} \frac{I}{r} \sum_{i \in \mathcal{I}_t} \alpha_i \nabla_{\theta_{t-1}} 
 \ell_i(W_i,\theta_{t-1}).
 \label{eq:updatethetawithrho}
\end{equation}
{Note that in (\ref{eq:serverupdate}) we used a constant learning rate for the server, however in (\ref{eq:updatethetawithrho}) we denote the learning rate of the server at round $t$ with $\rho_{t-1}$.}
For each optimization round $t \in [1,T]$ by the Lipschitz smoothness \cite{strang2019linear} we have

\begin{equation}
\label{ineq:1}\mathbb{E} [ \mathcal{L}( \theta_{t})] \leq \mathbb{E} [ \mathcal{L}( \theta_{t-1} ) ] + {\mathbb{E} [ \langle \nabla \mathcal{L}( \theta_{t-1}), \underbrace{\theta_{t} - \theta_{t-1}}_{stochastic} \rangle ]} + \frac{L}{2} {E[ \|\underbrace{\theta_{t} - \theta_{t-1}}_{stochastic}\|^{2} ]}.
\end{equation}
Note that if we use the server update given in (\ref{eq:serverupdate}), then the last term of (\ref{ineq:1}) is written as:

\begin{align}
{\mathbb{E} [ \|\theta_{t} - \theta_{t-1}\|^{2} ]} 
& = \mathbb{E} [ \|\theta_{t-1} - \rho_{t-1} \frac{I}{r} \sum_{i=1}^{I} {\bf 1}_{i \in \mathcal{I}_t} \alpha_i \bg_i - \theta_{t-1}\|^{2} ] \nonumber \\ 
& = \mathbb{E} [ \|- \rho_{t-1} \frac{I}{r}\sum_{i=1}^{I} {\bf 1}_{i \in \mathcal{I}_t} \alpha_i \bg_i\|^{2} ] \\
& = \rho_{t-1}^{2} \frac{I^{2}}{r^{2}} \mathbb{E} [ \|\sum_{i=1}^{I} {\bf 1}_{i \in \mathcal{I}_t} \alpha_i \bg_i\|^{2} ].
\nonumber
\end{align}
The $l_{2}$ term $\|\sum_{i=1}^{I} {\bf 1}_{i \in \mathcal{I}_t} \alpha_i \bg_i\|^{2}$ is a multiplication between two gradient vectors.

\begin{align}
\|\sum_{i=1}^{I} {\bf 1}_{i \in \mathcal{I}_t} \alpha_i \bg_i\|^{2}
& = \left (\sum_{i=1}^{I} {\bf 1}_{i \in \mathcal{I}_t} \alpha_i \bg_i \right )^{T} \times \left (\sum_{j=1}^{I} {\bf 1}_{j \in \mathcal{I}_t} \alpha_j \bg_j \right )
\\
& = \left (\sum_{i=1}^{I} {\bf 1}_{i \in \mathcal{I}_t} \alpha_i \bg_i^{T} \right )\times \left (\sum_{j=1}^{I} {\bf 1}_{j \in \mathcal{I}_t} \alpha_j \bg_j \right )
\\ 
& = \sum_{i=1}^{I}\sum_{j=1}^{I} {\bf 1}_{i \in \mathcal{I}_t} {\bf 1}_{j \in \mathcal{I}_t} \alpha_i \alpha_j \bg_i^{T} \bg_j.
\end{align}
We split the sum in terms of $i=j$, and $i\neq j$:

\begin{equation}
\label{split_i_and_j}
\|\sum_{i=1}^{I} {\bf 1}_{i \in \mathcal{I}_t} \alpha_i \bg_i\|^{2}
=
\sum_{i=1}^{I} {\bf 1}_{i \in \mathcal{I}_t} \alpha_i^2 g_i^2 
+ \sum_{i \neq j}^{I}
{\bf 1}_{i \in \mathcal{I}_t} {\bf 1}_{j \in \mathcal{I}_t} \alpha_i \alpha_j \bg_i \bg_j. 
\end{equation}
Substituting (\ref{split_i_and_j}) back to the last term of (\ref{ineq:1}) we obtain:

\begin{align}
{\mathbb{E} [ \|\theta_{t} - \theta_{t-1}\|^{2} ]} 
& = \rho_{t-1}^{2} \frac{I^{2}}{r^{2}} \mathbb{E} \left [ \sum_{i=1}^{I} {\bf 1}_{i \in \mathcal{I}_t} \alpha_i^2 g_i^2 
+ \sum_{i \neq j}^{I}
{\bf 1}_{i \in \mathcal{I}_t} {\bf 1}_{j \in \mathcal{I}_t} \alpha_i \alpha_j \bg_i \bg_j \right ]
\nonumber \\
& =   \rho_{t-1}^{2} \frac{I^{2}}{r^{2}} \left (
\sum_{i=1}^{I} \mathbb{E} [{\bf 1}_{i \in \mathcal{I}_t} \alpha_i^2 g_i^2]  
+ \sum_{i \neq j}^{I}
\mathbb{E} [{\bf 1}_{i \in \mathcal{I}_t} {\bf 1}_{j \in \mathcal{I}_t} \alpha_i \alpha_j \bg_i \bg_j ] \right ) \nonumber \\
& \leq \rho_{t-1}^{2} \frac{I^{2}}{r^{2}} \left ( 
G^2 \frac{r}{I} \sum_{i=1}^{I} \alpha_i^2  
+ G^2 \frac{r(r-1)}{I^2} \sum_{i \neq j}^{I} \alpha_i \alpha_j \right ). \label{ineq:2}
\end{align}
Note that the term $\frac{r(r-1)}{I^2}$ corresponds to the case (\textit{a}) as described in Section \ref{sec:client-selection}. However it is important to observe that $\frac{r(r-1)}{I^2} \leq \frac{r^2}{I^2}$, where the quantity $\frac{r^2}{I^2}$ corresponds to the case (\textit{b}) as described in Section \ref{sec:client-selection}. We continue the analysis with the later term, since it does not impact the final result of our analysis.  Substituting $\frac{r^2}{I^2}$ back to (\ref{ineq:2}) we obtain:

\begin{align}
{\mathbb{E} [ \|\theta_{t} - \theta_{t-1}\|^{2} ]} 
& \leq \rho_{t-1}^{2} \frac{I^{2}}{r^{2}} \left ( 
G^2 \frac{r}{I} \sum_{i=1}^{I} \alpha_i^2  
+ G^2 \frac{r^2}{I^2} \sum_{i \neq j}^{I} \alpha_i \alpha_j \right )
\\
& = \rho_{t-1}^{2} G^2 \left ( 
\frac{I}{r} \sum_{i=1}^{I} \alpha_i^2  
+ \sum_{i \neq j}^{I} \alpha_i \alpha_j \right )
\\
& = \rho_{t-1}^{2} G^2
\frac{I\sum_{i=1}^{I} \alpha_i^2
+ r \sum_{i \neq j}^{I} \alpha_i \alpha_j
}{r}.
\nonumber 
\end{align}
Further for the second term of the (\ref{ineq:1}), we use the server update (\ref{eq:serverupdate}) and we note that

\begin{align}
    {\mathbb{E} [ \langle \nabla \mathcal{L}( \theta_{t-1}), \theta_{t} - \theta_{t-1} \rangle ]}
   & = \mathbb{E} [ \langle \nabla \mathcal{L}( \theta_{t-1}), - \rho_{t-1} \frac{I}{r}\sum_{i=1}^{I} {\bf 1}(i \in \mathcal{I}_t) \alpha_i \bg_i \rangle ]
    \nonumber
    \\
   & = \nabla \mathcal{L}(\theta_{t-1})^{T}\mathbb{E} [ - \rho_{t-1} \frac{I}{r}\sum_{i=1}^{I} {\bf 1}(i \in \mathcal{I}_t) \alpha_i \bg_i ]
   \\
   & = - \rho_{t-1} \nabla \mathcal{L}(\theta_{t-1})^{T} \nabla \mathcal{L}(\theta_{t-1}) \sum_{i=1}^{I} \alpha_i
    \nonumber
    \\
   & \triangleq - \rho_{t-1} \mathbb{E} [ \|\nabla \mathcal{L}(\theta_{t-1}\|^{2} ]. \nonumber
\end{align}
Similarly to \cite{haoyu}, we sum (\ref{ineq:1}) over $t \in \{1, \dots , T \}$.

\begin{align}
    \sum_{t=1}^{T}\mathbb{E} [ \mathcal{L}( \theta_{t})] & \leq \sum_{t=1}^{T}\mathbb{E} [ \mathcal{L}( \theta_{t-1} ) ] - \sum_{t=1}^{T}{\mathbb{E} [ \langle \nabla \mathcal{L}( \theta_{t-1}), \theta_{t} - \theta_{t-1} \rangle ]} + \frac{L}{2} \sum_{t=1}^{T} {E[ \|\theta_{t} - \theta_{t-1}\|^{2} ]}
    \nonumber
    \\
    & = \sum_{t=1}^{T}\mathbb{E} [ \mathcal{L}( \theta_{t-1} ) ] - \sum_{t=1}^{T}\rho_{t-1} \mathbb{E} [ \| \nabla \mathcal{L}( \theta_{t-1})\|^{2} ]  + \frac{L}{2} G^2
    \frac{I
    + r
    }{r} \sum_{t=1}^{T} \rho_{t-1}^{2}.
    \label{ineq:sumover}
    \end{align}
    We follow the same convention as in \cite{haoyu,ghadimi,lian2017,alistarh} where the average expected squared gradient norm is used to characterize the convergence rate. We rearrange the terms in (\ref{ineq:sumover}) and we obtain
    
    \begin{align}
        \sum_{t=1}^{T}\rho_{t-1} \mathbb{E} [ \| \nabla \mathcal{L}( \theta_{t-1})\|^{2} ] & \leq \sum_{t=1}^{T}(\mathbb{E} [ \mathcal{L}( \theta_{t-1} ) ] - \mathbb{E} [ \mathcal{L}( \theta_{t})])
    + \frac{L}{2} G^2
    \frac{I
    + r
    }{r} \sum_{t=1}^{T} \rho_{t-1}^{2}.
    \label{ineq:rearrange}
\end{align}
We assume that $\rho_{t}$, where $t \in [0,T]$ is a decreasing sequence. Then we use the following lower bound for $\sum_{t=1}^{T}\rho_{t-1} \mathbb{E} [ \| \nabla \mathcal{L}( \theta_{t-1})\|^{2} ]$, 

$$   \rho_{T-1} 
    \sum_{t=1}^{T}\mathbb{E} [ \| \nabla \mathcal{L}( \theta_{t-1})\|^{2} ]
    \leq
    \sum_{t=1}^{T}\rho_{t-1} \mathbb{E} [ \| \nabla \mathcal{L}( \theta_{t-1})\|^{2} ]. $$
In the same manner we derive an upper bound for $\sum_{t=1}^{T} \rho_{t-1}^{2}$, e.g.,

$$\sum_{t=1}^{T} \rho_{t-1}^{2}  \leq \sum_{t=1}^{T} \rho_{0}^{2}.$$
The telescopic sum is bounded by
$$\sum_{t=1}^{T}(\mathbb{E} [ \mathcal{L}( \theta_{t-1} ) ] - \mathbb{E} [ \mathcal{L}( \theta_{t})]) = \mathcal{L}( \theta_{0} ) - \mathcal{L}( \theta_{T}) \leq \mathcal{L}( \theta_{0} ) - \mathcal{L}( \theta^{*}),$$
where $\theta^{*}$ are the optimal parameters of $\mathcal{L}$. Thus, putting it altogether in (\ref{ineq:rearrange}),
\begin{align}   
    \rho_{T-1} 
    \sum_{t=1}^{T}\mathbb{E} [ \| \nabla \mathcal{L}( \theta_{t-1})\|^{2} ]
   & \leq
    \sum_{t=1}^{T}\rho_{t-1} \mathbb{E} [ \| \nabla \mathcal{L}( \theta_{t-1})\|^{2} ] \leq \left ( \mathcal{L}( \theta_{0} ) - \mathcal{L}( \theta^{*}) \right )
    + \frac{L}{2} G^2
    \frac{I
    + r
    }{r} \sum_{t=1}^{T} \rho_{t-1}^{2}
    \nonumber
    \\
   & \leq
     \left ( \mathcal{L}( \theta_{0} ) - \mathcal{L}( \theta^{*}) \right )
    + \frac{L}{2} G^2
    \frac{I
    + r
    }{r} \sum_{t=1}^{T} \rho_{0}^{2}
    \nonumber
    \\
    & =
    \left ( \mathcal{L}( \theta_{0} ) - \mathcal{L}( \theta^{*}) \right )
    + \frac{L}{2} G^2
    \frac{I
    + r
    }{r} T \rho_{0}^{2}.
    \nonumber
    \end{align}   
From this we obtain,
\begin{equation}
    \rho_{T-1} 
    \sum_{t=1}^{T}\mathbb{E} [ \| \nabla \mathcal{L}( \theta_{t-1})\|^{2} ]
    \leq
    \left ( \mathcal{L}( \theta_{0} ) - \mathcal{L}( \theta^{*}) \right )
    + \frac{L}{2} G^2
    \frac{I
    + r }{r} T \rho_{0}^{2}.
    \nonumber
\end{equation}
We divide both sides by $T$,
\begin{equation}
    \frac{1}{T}
    \sum_{t=1}^{T}\mathbb{E} [ \| \nabla \mathcal{L}( \theta_{t-1})\|^{2} ]
    \leq
    \frac{1}{T\rho_{T-1}}
    \left ( \mathcal{L}( \theta_{0} ) - \mathcal{L}( \theta^{*}) \right )
    + \frac{L}{2} G^2
    \frac{I
    + r
    }{r}\frac{\rho_{0}^{2}}{\rho_{T-1}}.
    \label{ineq:dividebyt}
\end{equation}
At this point, we prove the convergence rates obtained in the Corollary \ref{corol:1}. In the first case, if we set $\rho_{0} = \frac{1}{L}\frac{\sqrt{r}}{\sqrt{T}\sqrt{I}}$, and set the learning rate $\rho_{t}$ to a constant value, e.g., $\rho_{t}=\rho_{0}$, then, (\ref{ineq:dividebyt}) becomes
\begin{align}
    \frac{1}{T}
    \sum_{t=1}^{T}\mathbb{E} [ \| \nabla \mathcal{L}( \theta_{t-1})\|^{2} ]
    & \leq
    \frac{1}{T\rho_{0}}
    \left ( \mathcal{L}( \theta_{0} ) - \mathcal{L}( \theta^{*}) \right )
    + \frac{L}{2} G^2
    \frac{I+ r
    }{r} \rho_{0}
    \nonumber
    \\
   &  =
    \frac{L}{\sqrt{T}}\frac{I}{\sqrt{Ir}}
    \left ( \mathcal{L}( \theta_{0} ) - \mathcal{L}( \theta^{*}) \right )
    + \frac{G^2}{\sqrt{T}}\frac{I+r}{2\sqrt{Ir}} =\mathcal{O} \left (\sqrt{\frac{I}{T}} \right ).
    \nonumber
\end{align}
Further, if we set $r = I$, then,
\begin{align}
    \frac{1}{T}
    \sum_{t=1}^{T}\mathbb{E} [ \| \nabla \mathcal{L}( \theta_{t-1})\|^{2} ]
    \leq
    \frac{L}{\sqrt{T}}
    \left ( \mathcal{L}( \theta_{0} ) - \mathcal{L}( \theta^{*}) \right )
    + \frac{G^2}{\sqrt{T}}   =\mathcal{O} \left (\frac{1}{\sqrt{T}} \right ),
    \nonumber
\end{align}
which concludes the Corollary \ref{corol:1}.
The convergence rate $\mathcal{O} \left (\frac{1}{\sqrt{T}} \right )$ indicates that as the number of rounds $T$ increases, then PFLEGO progressively improves its performance and gets closer to the optimal solution. With this we obtain the desired result of Proposition \ref{prop:2}.
However, for $\rho_{0}=\frac{\sqrt{r}}{L\sqrt{T}\sqrt{I}}$, we must verify that the third assumptions of Propositon \ref{prop:2} holds, i.e., the  inequality $\rho_{0} < \frac{2}{L} \frac{r}{I}$. Specifically, we are interested in finding the minimum number of participants $r$, that are required to participate per round to guarantee that the loss function will decrease.

$$\rho_{0} = \frac{\sqrt{r}}{L\sqrt{T}\sqrt{I}} < \frac{2}{L} \frac{r}{I}.$$
$L$ is greater than 0, therefore 
\begin{align}
\frac{\sqrt{r}}{L\sqrt{T}\sqrt{I}}  < \frac{2}{L} \frac{r}{I}
\Leftrightarrow
\frac{\sqrt{r}}{\sqrt{T}\sqrt{I}} & < 2\frac{r}{I}.
\nonumber
\end{align}
We square both sides

\begin{align}
    \left (\frac{\sqrt{r}}{\sqrt{T}\sqrt{I}} \right )^{2} < \left (2\frac{r}{I} \right )^{2}
    \Leftrightarrow
    \frac{r}{TI} < 4\frac{r^{2}}{I^{2}}
    \Leftrightarrow
    r < 4\frac{r^{2}T}{I}
    \Leftrightarrow
    0 & < r \left ( r\frac{4T}{I} - 1 \right ).
        \nonumber
\end{align}
Thus, $r \ge \left \lfloor \frac{I}{4T} \right \rfloor + 1$.






\end{document}